\relax
%File: formatting-instructions-latex-2021.tex
%release 2021.1
\documentclass[letterpaper]{article} % DO NOT CHANGE THIS
\usepackage{aaai21}  % DO NOT CHANGE THIS
\usepackage{times}  % DO NOT CHANGE THIS
\usepackage{helvet} % DO NOT CHANGE THIS
\usepackage{courier}  % DO NOT CHANGE THIS
\usepackage[hyphens]{url}  % DO NOT CHANGE THIS
\usepackage{graphicx} % DO NOT CHANGE THIS
\urlstyle{rm} % DO NOT CHANGE THIS
  % DO NOT CHANGE THIS
\usepackage{natbib}  % DO NOT CHANGE THIS AND DO NOT ADD ANY OPTIONS TO IT
\usepackage{caption} % DO NOT CHANGE THIS AND DO NOT ADD ANY OPTIONS TO IT
\frenchspacing  % DO NOT CHANGE THIS
\setlength{\pdfpagewidth}{8.5in}  % DO NOT CHANGE THIS
\setlength{\pdfpageheight}{11in}  % DO NOT CHANGE THIS
%\nocopyright
%PDF Info Is REQUIRED.
% For /Author, add all authors within the parentheses, separated by commas. No accents or commands.
% For /Title, add Title in Mixed Case. No accents or commands. Retain the parentheses.
\pdfinfo{
/Title (AAAI Press Formatting Instructions for Authors Using LaTeX -- A Guide)
/Author (AAAI Press Staff, Pater Patel Schneider, Sunil Issar, J. Scott Penberthy, George Ferguson, Hans Guesgen, Francisco Cruz, Marc Pujol-Gonzalez)
/TemplateVersion (2021.1)
} %Leave this /Title ()

\setcounter{secnumdepth}{2} %May be changed to 1 or 2 if section numbers are desired.

%%%%%%%added by kuroki%%%%%%
\usepackage{array}
\usepackage{tabularx}
\usepackage{microtype}
\usepackage{wrapfig}
\usepackage{graphicx}
\usepackage{subfigure}
\usepackage{bm}
\usepackage{comment}
\usepackage{amsmath,amssymb,amsthm,mathtools}
\usepackage[ruled,vlined,linesnumbered]{algorithm2e}
\usepackage{setspace}

\setcounter{tocdepth}{3}

\renewcommand{\mid}{\,:\,}

\renewcommand{\mid}{\,:\,}

\newcommand{\Prob}{\mbox{\rm Pr}}

\newcommand{\Mlin}{M(\lambda)^{-1}}

\newcommand{\Atl}{A^{\iota}_{{\bf x}_t}}
\newcommand{\Atinl}{({A^{\iota}_{{\bf x}_t}})^{-1}}
\newcommand{\Atauinl}{({A^{\lambda}_{{\bf x}_{\tau_e}}})^{-1}}
\newcommand{\Lamdatauinl}{({\Lambda^{\iota}_{{\bf x}_{\tau_e}}})^{-1}}

\newcommand{\thetahat}{\widehat{\theta}_t}

\theoremstyle{plain}
\newtheorem{theorem}     {Theorem}
\newtheorem{lemma}       {Lemma}
\newtheorem{proposition} {Proposition}
\newtheorem{corollary}   {Corollary}

\newtheorem{assumption} {Assumption}

\usepackage[usenames,dvipsnames]{xcolor}
\usepackage{ifthen}

\newcommand{\argmax}{\operatornamewithlimits{argmax}}
\newcommand{\argmin}{\operatornamewithlimits{argmin}}

\newcommand{\E}{\mathbb{E}}

\newcommand{\R}{\mathbb{R}}

\newcommand{\cE}{\mathcal{E}}
\newcommand{\cF}{\mathcal{F}}

\newcommand{\cX}{\mathcal{X}}
%\renewcommand{\baselinestretch}{1.0}

%% macro for full vs. short version
\newcommand{\compilefullversion}{false}%SHOW short version
\ifthenelse{\equal{\compilefullversion}{true}}{%
	\newcommand{\OnlyInFull}[1]{}
	\newcommand{\OnlyInShort}[1]{#1}
}{%
	\newcommand{\OnlyInFull}[1]{#1}%
	\newcommand{\OnlyInShort}[1]{}%
}%

% Macro for comments:
\newcommand{\compilehidecomments}{false}%HIDE comments
\ifthenelse{ \equal{\compilehidecomments}{true} }{%
	\newcommand{\wei}[1]{}
	\newcommand{\yuko}[1]{}
	\newcommand{\yihan}[1]{}
}{
	\newcommand{\wei}[1]{{\color{blue}  [\text{Wei:} #1]}}
	\newcommand{\yuko}[1]{{\color{purple} [\text{Yuko:} #1]}}
	\newcommand{\yihan}[1]{{\color{teal} [\text{Yihan:} #1]}}
}

%%%%%%%

% The file aaai21.sty is the style file for AAAI Press
% proceedings, working notes, and technical reports.
%

% Title

% Your title must be in mixed case, not sentence case.
% That means all verbs (including short verbs like be, is, using,and go),
% nouns, adverbs, adjectives should be capitalized, including both words in hyphenated terms, while
% articles, conjunctions, and prepositions are lower case unless they
% directly follow a colon or long dash

\title{Combinatorial Pure Exploration with Full-Bandit or \\ Partial Linear Feedback}
\author{

    %Authors
    % All authors must be in the same font size and format.
    Yihan Du,\textsuperscript{\rm 1}\thanks{The first two authors have equal contributions.} 
    Yuko Kuroki,\textsuperscript{\rm 2}\footnotemark[1] 
    Wei Chen\textsuperscript{\rm 3}\\
    % J. Scott Penberthy,
    % George Ferguson,
    % Hans Guesgen,
    % Francisco Cruz,
    % Marc Pujol-Gonzalez
    % \\
}
\affiliations{
    %Afiliations

    \textsuperscript{\rm 1}IIIS, Tsinghua University,
    %If you have multiple authors and multiple affiliations
    % use superscripts in text and roman font to identify them.
    %For example,
    \textsuperscript{\rm 2}The University of Tokyo, RIKEN,
    \textsuperscript{\rm 3}Microsoft Research
    % George Ferguson,\textsuperscript{\rm 4}
    % Hans Guesgen, \textsuperscript{\rm 5}.
    % Note that the comma should be placed BEFORE the superscript for optimum readability

    % 2275 East Bayshore Road, Suite 160\\
    % Palo Alto, California 94303\\
    % % email address must be in roman text type, not monospace or sans serif
    duyh18@mails.tsinghua.edu.cn, ykuroki@ms.k.u-tokyo.ac.jp, weic@microsoft.com

    % See more examples next
}
% \iffalse
% %Example, Single Author, ->> remove \iffalse,\fi and place them surrounding AAAI title to use it
% \title{My Publication Title --- Single Author}
% \author {
%     % Author
%     Author Name \\
% }

% \affiliations{
%     Affiliation \\
%     Affiliation Line 2 \\
%     name@example.com
% }
% \fi

% \iffalse
% %Example, Multiple Authors, ->> remove \iffalse,\fi and place them surrounding AAAI title to use it
% \title{My Publication Title --- Multiple Authors}
% \author {
%     % Authors

%         First Author Name,\textsuperscript{\rm 1}
%         Second Author Name, \textsuperscript{\rm 2}
%         Third Author Name \textsuperscript{\rm 1} \\
% }
% \affiliations {
%     % Affiliations
%     \textsuperscript{\rm 1} Affiliation 1 \\
%     \textsuperscript{\rm 2} Affiliation 2 \\
%     firstAuthor@affiliation1.com, secondAuthor@affilation2.com, thirdAuthor@affiliation1.com
% }
%\fi
\begin{document}

%%%%%%%%%%%%%%%%%%%%%%%%%%%%%%
 \abovedisplayskip=3.5pt
 \belowdisplayskip=3.5pt
%\allowdisplaybreaks
\setlength\textfloatsep{14pt}
%%%%%%%%%%%%%%%%%%%%%%%%%%%%%%%

\maketitle

% \begin{abstract}
% We propose the novel model of combinatorial pure exploration with partial linear feedback (CPE-PL), in which a learner chooses actions to sample and observes linear feedback from the environment.
% In CPE-PL, the learner aims to identify the optimal action with the maximum expected rewards from the given combinatorial action space, using as few samples as possible.
% CPE-PL generalizes the model of CPE with full-bandit feedback (CPE-BL), for which only the top-$k$ identification has mainly been studied. 
% In this paper, we propose a polynomial-time sampling algorithm for CPE-PL that works with nonlinear reward functions, which gives the first result of the sample complexity for CPE-PL.
% We further investigate CPE-BL with linear reward function and general combinatorial constraints including matroids, matchings, and $s$-$t$ paths.
% We present a general algorithmic framework, in which the agent finds a subset of actions that contains the optimal action with high probability and then focus on sampling actions among the set to identify the optimal action.
% We show that the proposed algorithm achieves the first sample complexity result that does not depend on the minimal gap of the instance.
% \end{abstract}

\begin{abstract}
In this paper, we first study the problem of combinatorial pure exploration with full-bandit feedback (CPE-BL), where a learner is given a combinatorial action space $\mathcal{X} \subseteq \{0,1\}^d$, and in each round the learner pulls an action $x \in \mathcal{X}$ and receives a random reward with expectation $x^{\top} \theta$, with $\theta \in \R^d$ a latent and unknown environment vector. 
The objective is to identify the optimal action with the highest expected reward, using as few samples as possible. For CPE-BL, we design the first {\em polynomial-time adaptive} algorithm, whose sample complexity matches the lower bound (within a logarithmic factor) for a family of instances and has a light dependence of $\Delta_{\min}$ (the smallest gap between the optimal action and sub-optimal actions).
Furthermore, we propose a novel generalization of CPE-BL with  flexible feedback structures, called combinatorial pure exploration with partial linear feedback (CPE-PL), which encompasses several families of sub-problems including full-bandit feedback, semi-bandit feedback, partial feedback and nonlinear reward functions. In CPE-PL, each pull of action $x$ reports a random feedback vector with expectation of $M_{x} \theta $, where $M_x \in \mathbb{R}^{m_x \times d}$ is a transformation matrix for $x$, and gains a random (possibly nonlinear) reward related to $x$. 
For CPE-PL, we develop the first {\em polynomial-time} algorithm, which simultaneously addresses limited feedback, general reward function and combinatorial action space (e.g., matroids, matchings and $s$-$t$ paths), and provide its sample complexity analysis.
Our empirical evaluation demonstrates that our algorithms run orders of magnitude faster than the existing ones, and our CPE-BL algorithm is robust across different $\Delta_{\min}$ settings while our CPE-PL algorithm is the first one returning correct answers for nonlinear reward functions.
\end{abstract}
\section{Introduction}\label{sec:intro}

The problem of \emph{best arm identification} (BAI) is the pure-exploration framework in stochastic multi-armed bandits.
In BAI, at each step a learner chooses an arm and observes its reward sampled from an unknown distribution, with the goal of returning the best arm with the highest expected reward using as few exploration steps as possible.
This problem abstracts a decision making model in the face of uncertainty with a wide range of applications, and has received much attentions in the literature~\cite{Even2006,Audibert2010, Chen2015,kaufmann2016}.

In many application domains, possible actions have a certain combinatorial structure.
For example, each action may be a size-$k$ subset of keywords in online advertisements~\cite{Rusmevichientong2006}, or an assignment between workers and tasks in crowdsourcing~\cite{LinTian2014}, or a spanning tree in communication networks~\cite{Huang2008}.
To deal with such a combinaotrial action space, the model of \emph{combinatorial pure exploration  of multi-armed bandits} (CPE-MB) was first proposed by Chen et al.~\shortcite{Chen2014}.
In this model, there are $d$ base arms, each of which is associated with an unknown reward distribution, and a collection of {\em super arms}, each of which is a subset of base arms.
A learner plays a base arm at each step and observes its random reward, with the goal of identifying the best super arm that maximizes the sum of expected rewards at the end of exploration.
%They investigated two algorithms called, CLUCB and CSAR for the fixed confidence setting and fixed budget setting, which can work for general combinatorial structures such as matorids, matchings and paths.
CPE-MB generalizes the classical BAI problem~\cite{ kalyanakrishnan2010,kalyanakrishnan2012,bubeck2013}.
%\emph{Top-$k$ identification}~\cite{bubeck2013, Cao2015, Roy2019, Gabillon2012,Gabillon2011,kalyanakrishnan2010,kalyanakrishnan2012, kaufmann2013, Roy2017,  Zhou2014}.
%Recently, a line of work has established near-optimal sampling methods for CPE-MB \cite{Cao2019, Chen2016matroid, Chen17a,Gabillon2016}.

However, many real-world scenarios may not fit into CPE-MB.
In particular, CPE-MB assumes that the learner can directly play each base arm and observe its outcome,
but this might not be allowed due to system constraints or privacy issues.
Only a few studies avoid such an assumption.
Kuroki et al.~\shortcite{Kuroki+19} studied the \emph{combinatorial pure exploration with full-bandit linear feedback (CPE-BL)},
in which the learner pulls a super arm (rather than base arm) and only observes the sum of rewards from the involved base arms. 
They designed an efficient algorithm for CPE-BL, but the algorithm is nonadaptive and its sample complexity heavily depends on 
	the smallest gap between the best and the other super arms (denoted by $\Delta_{\min}$).
%They designed a polynomial-time approximation algorithm for a 0-1 quadratic programming and proposed the first \emph{static allocation} algorithm for CPE-BL, wherein arm selection ratio is predetermined and independent of any past observation.
Rejwan and Mansour~\shortcite{Idan2019} also designed an efficient algorithm with an adaptive Successive-Accept-Reject algorithm, but the
	algorithm only works for the top-$k$ case of CPE-BL, which we show can be simply reduced to previous CPE-MB (see Appendix~\ref{apx:clucb}\OnlyInShort{ in the full version}).
%	 and thus this paper focuses beyond Top-$k$. 
%since an unbiased estimate can be obtained for the difference between the two base arms by comparing two $k$-base arm queries with one base arm difference.
%For more general combinatorial constraints, however, there are no simple reductions 

%with the aid of a Hadamard matrix and designed an adaptive algorithm based on Successive Accept Reject strategy.
% However, it is specialized for the Top-$k$ case and cannot be applied to solve more general constraints.
% The above two works only focused on the linear reward case and can never deal with the nonlinear reward, whereas a general nonlinear reward case in the combinatorial bandit has been extensively studied with the objective of regret minimization~\cite{chenwei13}.\yuko{will add more citations}

Note that CPE-BL can be regarded as an instance of the \emph{best arm identification in linear bandits} (BAI-LB), which has received increasing attention recently~\cite{Soare2014,Tao2018,Fiez2019}. 
However, none of the existing algorithms for BAI-LB can  efficiently solve CPE-BL, because their running times have polynomial dependence on the size of action space, which is exponential in the combinatorial setting.

\begin{table*}[t]
	\centering
	\caption{Comparison between our results and state-of-the-art results for CPE-BL(PL). ``General'' represents that the algorithm works for any combinatorial structure. $\tilde{O}(\cdot)$ only omits $\log \log$ factors. Main notations is defined in Section~\ref{sec:problem_def}. 
%	A full table and specific notations are given in Appendix~\ref{apx:related_work}. 
	} 
	\label{table:results}
	\renewcommand\arraystretch{1.8}
	\scalebox{0.75}{
	\begin{tabular}{|c|c|c|c|c|c|}
		\hline
		Algorithm&Sample complexity&Case&Problem Type&Strategy&Time\\
		\hline
		{\bf 
		\textsf{GCB-PE} (ours, Thm.~\ref{thm:GCB_ub})} &$O \big( \frac{ |\sigma| \beta_{\sigma}^2 L_p^2}{\Delta_{\textup{min}}^2} \log \frac{ \beta_{\sigma}^2 L_p^2}{\Delta_{\textup{min}}^2\delta} \big)$& General &CPE-PL&Static&$\mathrm{Poly}(d)$\\
		\hline
		{\bf  
		\textsf{PolyALBA} (ours, Thm.~\ref{thm:CPE-BL})} &$\tilde{O} \big(\sum_{i=2}^{\lfloor \frac{d}{2} \rfloor} \frac{1}{\Delta_i^2} \log \frac{|\mathcal{X}|}{\delta}    +\frac{d^2 m \xi_{\max}({\widetilde{M}({\lambda})}^{-1})}{ \Delta^2_{d+1}} \log \frac{|\mathcal{X}|}{\delta} \big)$&General &CPE-BL& Adaptive  &$\mathrm{Poly}(d)$\\
		\hline
		\textsf{ICB}~\cite{Kuroki+19}&$\tilde{O}\big(\frac{d  \xi_{\max}({M({\lambda})}^{-1})\rho(\lambda)}{\Delta^2_{\min}} \log \frac{d  \xi_{\max}({M({\lambda})}^{-1}) \rho(\lambda)}{\Delta^2_{\min}\delta}\big)$ &General &CPE-BL& Static&$\mathrm{Poly}(d)$\\
%		\hline
%		 \textsf{SAQM}~\cite{Kuroki+19} &$\tilde{O}\big(\frac{d^{1/4}k\xi_{\max}({M({\lambda})}^{-1})\rho(\lambda)}{\Delta_{\min}^2} \log \frac{d^{1/4}k\xi_{\max}({M({\lambda})}^{-1})\rho(\lambda)}{\Delta_{\min}^2  \delta}\big)$ &Top-$k$ &CPE-BL& Static&$\mathrm{Poly}(d)$ \\
		\hline
		\textsf{CSAR}~\cite{Idan2019} &$\tilde{O}\big(\sum^{d}_{i=2}\frac{1}{\tilde{\Delta}_i^2} \log\frac{d}{\delta}\big)$&Top-$k$ &CPE-BL&  Adaptive &$\mathrm{Poly}(d)$ \\
		\hline
%		\textsf{$\mathcal{X}\mathcal{Y}$-static}~\cite{Soare2014}&$O\big(\frac{d}{\Delta_{\textup{min}}^2}\log \frac{|\cX|}{\delta \Delta^2_{\min}} +d^2\big)$&$\cX \subseteq \mathbb{R}^d$&BAI-LB& Static& $\Omega(|\cX|)$ \\
%		\hline
%		\textsf{\textsf{Explore-Verify}}~\cite{Karin16}&$O\big(\frac{d}{\Delta_{\textup{min}}^2} \log \frac{|\cX|}{\delta \Delta_{\min}}+ d\log \delta^{-1}\big)$&$\cX \subseteq \mathbb{R}^d$&BAI-LB& Static& $\Omega(|\cX|)$\\
%		\hline
%		\textsf{LinGapE}~\cite{Xu2018}&  $\tilde{O}\big(d  \sum_{x \in \cX} H_x\log \frac{d|\cX|}{\delta} \cdot \sum_{x \in \cX} H_x\big)$ &$\cX \subseteq \mathbb{R}^d$&BAI-LB& Adaptive & $\Omega(|\cX|)$\\
%		\hline
%		$\mathcal{Y}$-\textsf{ElimTil}~\cite{Tao2018}&$\tilde{O}\big(\frac{d}{\Delta_{\textup{min}}^2} (\log \delta^{-1} + \log |\mathcal{X}|)\big)$&$\cX \subseteq \mathbb{R}^d$&BAI-LB& Adaptive& $\Omega(|\cX|)$\\
%		\hline
	    \textsf{ALBA}~\cite{Tao2018}&$\tilde{O}\big(\sum^{d}_{i=2}\frac{1}{\Delta_i^2}(\log \delta^{-1} + \log |\mathcal{X}|)\big)$&$\cX \subseteq \mathbb{R}^d$&BAI-LB& Adaptive & $\Omega(|\cX|)$\\
		\hline
		\textsf{RAGE}~\cite{Fiez2019}&$O \big(\sum^{\left \lfloor \log_2(4/\Delta_{\min})  \right \rfloor}_{t=1} 2(2^t)^2\tilde{\rho}(\mathcal{Y}(S_t)) \log(t^2|\mathcal{X}|^2/\delta) \big)$&$\cX \subseteq \mathbb{R}^d$&BAI-LB& Adaptive& $\Omega(|\cX|)$\\
		\hline
		\textsf{LinGame(-C)} \cite{Degenne+2020}&$\mathop{\lim \sup}_{\delta \rightarrow 0} \frac{\mathbb{E}_{\theta}[\tau_{\delta}]}{\log(1/ \delta)} \leq \min_{\lambda \in \triangle(\cX)}\max_{x\in \mathcal{\cX} \setminus\{x^*\}} \frac{2||x^*-x||_{M(\lambda)^{-1}}^2}{((x^*-x)^{\top}\theta)^2} $&$\cX \subseteq \mathbb{R}^d$&BAI-LB& Adaptive& $\Omega(|\cX|)$ \\
		\hline
		\textsf{Peace} \cite{Katz+2020}&$O\big( (\min_{\lambda \in \triangle(\cX)}\max_{x\in \mathcal{\cX} \setminus\{x^*\}} \frac{||x^*-x||_{M(\lambda)^{-1}}^2}{((x^*-x)^{\top}\theta)^2} + \gamma^* )  \log(1/ \delta)\big)$&$\cX \subseteq \mathbb{R}^d$&BAI-LB& Adaptive& $\Omega(|\cX|)$ \\
		\hline
		\textsf{LT\&S} \cite{jedra2020optimal}&$\mathop{\lim \sup}_{\delta \rightarrow 0} \frac{\mathbb{E}_{\theta}[\tau_{\delta}]}{\log(1/ \delta)} \leq \min_{\lambda \in \triangle(\cX)}\max_{x\in \mathcal{\cX} \setminus\{x^*\}} \frac{||x^*-x||_{M(\lambda)^{-1}}^2}{((x^*-x)^{\top}\theta)^2} $&$\cX \subseteq \mathbb{R}^d$&BAI-LB& Adaptive& $\Omega(|\cX|)$ \\
		\hline
		Lower Bound \cite{Fiez2019}&$\mathbb{E}_{\theta}[\tau_{\delta}] \geq \min_{\lambda \in \triangle(\cX)}\max_{x\in \mathcal{\cX} \setminus\{x^*\}} \frac{||x^*-x||_{M(\lambda)^{-1}}^2}{((x^*-x)^{\top}\theta)^2} \log(1/2.4 \delta)$&$\cX \subseteq \mathbb{R}^d$&BAI-LB& - & - \\
		\hline
	\end{tabular}
	}
\end{table*}

In this paper, we provide the first algorithm solving CPE-BL that simultaneous achieves the following properties:
	(a) polynomial-time complexity, (b) adaptive sampling, such that the sample complexity is not heavily dependent on $\Delta_{\min}$; 
	(c) general combinatorial constraints, and
	(d) nearly optimal sample complexity for some family of instances.

Next, we propose a more general setting, \emph{combinatorial pure exploration with partial linear feedback (CPE-PL)}, which simultaneously models limited feedback, general (possibly nonlinear) reward and combinatorial action space.  In CPE-PL, given a combinatorial action space $\cX \subseteq \{0,1\}^d$, where each dimension corresponds to a base arm and each action $x\in \cX$ can also be viewed as a super arm that contains those dimensions with coordinate $1$.
At each step the learner chooses an action (super arm) $x_t \in \cX$ to play and observes a random partial linear feedback with expectation of $M_{x_t} \theta$, 
	where $M_{x_t}$ is a transformation matrix for $x_t$ and $\theta \in \mathbb{R}^d$ is an unknown environment vector. 
The learner also gains a random (possibly nonlinear) reward related to $x_t$ and $\theta$, which may not be a part of the  feedback and thus may not be directly observed. 
Given a confidence level $\delta$, the objective is to identify the optimal action with the maximum expected reward with probability at least $1-\delta$, using as few samples as possible.
CPE-PL framework includes CPE-BL as its important sub-problem.
In CPE-BL, the learner observes full-bandit feedback (i.e. $M_x = x^{\top}$) and gains linear reward (with expectation of 
$x^{\top} \theta$) after each play. 

The model of CPE-PL appears in many practical scenarios. For example, in online ranking~\cite{Sougata-Ambuj2017}, a company recommends their products to users by presenting rankings of entire items, and wants to find the best ranking with limited feedback on the top-ranked item due to user burden constraints and privacy concerns. 
In crowdsourcing~\cite{LinTian2014}, an employer assigns crowdworkers to tasks according to the worker-task performance, and wants to find the best matching with limited feedback on a small subset of the completed tasks, owing to the burden of entire feedback and privacy issues (see Section~\ref{sec:PL_application} for detailed  applications).
%\yihan{Shortened the discussion on applications here and put more details in a separate subsection.}

We remark that, CPE-PL is a novel and general model that encompasses several families of sub-problems across full-bandit feedback, semi-bandit feedback and nonlinear reward function, and it cannot be translated to CPE-BL or BAI-LB. 
For example, when the reward function is $(x^{\top} \theta)/ \|x\|_1$ and $M_x=\textup{diag}(x)$, CPE-PL reduces to a semi-bandit problem with nonlinear reward function, and no existing CPE-BL or BAI-LB algorithm could solve this problem.

Finally, we empirically compare our algorithms with several state-of-the-art CPE-BL and BAI-LB algorithms.
Our result demonstrates that (a) our algorithms run much faster than all others, some of which cannot even finish after days of running;
	(b) For CPE-BL, our adaptive algorithm is much more robust on different $\Delta_{\min}$ settings than the existing nonadaptive algorithm; and
	(c) For CPE-PL, our algorithm is the only one that correctly outputs the optimal action for a nonlinear reward function among all the compared algorithms.

To summarize, our contributions include:
	(a) proposing the first {\em polynomial-time adaptive} algorithm for CPE-BL with general constraints that achieves near optimal sample complexity for some family of instances; and
%	\yihan{Similarly, I revised this sentence.}
%	\wei{Can we simply say ``achieves asymptotic near-optimal sample complexity? If $\alpha$ does not change over $T$, I think asymptotic complexity is fine, and then we do not need to say the dominant term.}
	(b) proposing the general CPE-PL framework and the first {\em polynomial time} algorithm for CPE-PL with its sample complexity analysis.

Due to the space constraint, full proofs with additional results and discussions are moved to the appendices\OnlyInShort{ in the full version}.

\subsection{State-of-the-art Related Work}
Here we compare with the most related and state-of-the-art works (see Table~\ref{table:results}), and 
	the full discussion and comparison table with notation definitions are included in Appendix~\ref{apx:related_work}\OnlyInShort{ in the full version}.
For CPE-BL, Kuroki et al.~\shortcite{Kuroki+19} propose a polynomial-time but static algorithm \textsf{ICB}, which has a heavy dependence on $\Delta_{\textup{min}}$ in the sample complexity and requires a large number of samples for small-$\Delta_{\textup{min}}$ instances empirically (see Appendix~\ref{apx:top_k_experiments}\OnlyInShort{ in the full version}). Rejwan and Mansour~\shortcite{Idan2019} develop a polynomial-time adaptive algorithm \textsf{CSAR} but it only works for the top-$k$ case, which has a naive reduction to previous CPE-MB (see Appendix~\ref{apx:clucb}\OnlyInShort{ in the full version}). 

For BAI-LB where the action space is often considered small, Tao, Blanco, and Zhou~\shortcite{Tao2018} propose an adaptive algorithm \textsf{ALBA} with a light $\Delta_{\textup{min}}$ dependence. Fiez et al.~\shortcite{Fiez2019} present the first lower bound 
	and a nearly optimal algorithm \textsf{RAGE}. Recently, Katz-Samuels et al.~\shortcite{Katz+2020} also design an improved nearly optimal algorithm \textsf{Peace}, which is built upon the previous \textsf{RAGE}. \citet{Degenne+2020} and \citet{jedra2020optimal} develop asymptotically optimal algorithms, %\textsf{LinGame} and \textsf{LinGame-C}
	but a fair way to compare their results with other non-asymptotical results is unknown. 
While the existing BAI-LB algorithms achieve satisfactory sample complexity, none of them can efficiently solve CPE-BL with 
	an exponentially large combinatorial action space.
	This paper proposes the first polynomial-time adaptive algorithm for CPE-BL, which is nearly optimal for some family of instances, and the first polynomial-time algorithm for CPE-PL.

\section{Problem Statements}\label{sec:problem_def}
\noindent
{\bf Combinatorial pure exploration with full-bandit linear feedback (CPE-BL). \ }
%\paragraph{Combinatorial pure exploration with full-bandit linear feedback (CPE-BL).}
In CPE-BL, a learner is given $d$ base arms numbered $1,2,\dots,d$. We define $\mathcal{X} \subseteq \{0,1\}^d$ as a collection of subsets of base arms, which satisfies a certain combinatorial structure such as size-$k$ subsets, matroids, paths and matchings. A subset of base arms $x \in \cX$ is called a super arm (or an action). Let $m$ denote the maximum number of base arms that a super arm in $\cX$ contains, i.e. $m=\max_{x \in \cX} \|x \|_1$ $(m \leq d)$.  
There is an unknown environment vector $\theta \in \mathbb{R}^d$ with $\| \theta\|_2 \leq L$.
At each time step $t$, a learner pulls a super arm $x_t$ and receives a random reward (full-bandit feedback)  $y_t=x^{\top} (\theta + \eta_t )$, where $\eta_t$ is a zero-mean noise vector bounded in $[-1,1]^d$ and it is independent among different time step $t$. 
Let $x^* =\argmax_{x \in \mathcal{X}} x^{\top} \theta$ denote the optimal super arm, and we assume that the optimal $x^*$ is unique as previous pure exploration works~\cite{Chen2014,LinTian2014,Fiez2019} do.
%\yihan{Added ``as previous XXX do''}
Let $\Delta_i$ denote the gap of the expected rewards between $x^*$ and the super arm with the $i$-th largest expected reward.

Given a confidence level $\delta \in (0,1)$, the objective  is to use as few samples as possible to identify the optimal super arm with probability at least $1-\delta$.
This is often called the fixed confidence setting in the bandit literature, and
the number of samples required by the learner is called \emph{sample complexity}.

\OnlyInFull{CPE-BL is a combinatorial adaption of the BAI-LB problem, where the action space $\cX \subseteq \mathbb{R}^d$ but $|\cX|$ is often assumed to be small. In contrast, the combinatorial action space setting of  CPE-BL has a  size exponential to the problem instance parameter, and thus requires additional techniques to address the computational challenges.}

\vspace{1mm}
\noindent
{\bf Combinatorial pure exploration with partial-monitoring linear feedback (CPE-PL). \ }
%\paragraph{Combinatorial pure exploration with partial-monitoring linear feedback (CPE-PL).}
CPE-PL is a generalization of CPE-BL to partial linear feedback and nonlinear reward functions. In CPE-PL, each super arm $x \in \mathcal{X}$ is associated with a transformation matrix $M_x \in \mathbb{R}^{m_x \times d}$, whose row dimension $m_x$ depends on $x$. 
At each timestep $t$, a learner pulls a super arm $x_t$ and observes a random linear feedback  vector $y_t=M_{x_t} (\theta + \eta_t ) \in \mathbb{R}^{m_{x_t}}$, where $\eta_t$ is the noise vector. 
Meanwhile, the learner gains a random reward with expectation of $\bar{r}(x_t, \theta)$. 
Note that for each pull of super arm $x_t$, the actual expected reward $\bar{r}(x_t, \theta)$ may not be part of the linear feedback vector $y_t$ and thus may not be directly observed by the learner.
%and in the case of pure exploration, the reward obtained in each round is irrelevant since the objective is not on cumulative reward or cumulative regret. 
Similarly, given a confidence  $\delta \in (0,1)$, the learner aims to use as few samples as possible to identify the optimal super arm with probability at least $1-\delta$.

CPE-PL allows more  flexible feedback structures than CPE-BL or BAI-LB, and encompasses several families of sub-problems including full-bandit feedback, semi-bandit feedback and nonlinear reward functions.
For example, when $M_x=x^{\top} \in  \mathbb{R}^{1 \times d}$ for all $x \in \mathcal{X}$, this model reduces to CPE-BL. When $M_x=\textup{diag}(x) \in \mathbb{R}^{d \times d}$ for all $x \in \mathcal{X}$, this model reduces to combinatorial pure exploration with semi-bandit feedback (see Appendix~\ref{apx:model_example}\OnlyInShort{ in the full version} for illustration examples). 
%\wei{I used OnlyInFull and OnlyInShort macro, so for the main text submission, we do not refer to the appendix number.}

The regret minimization version of CPE-PL has been studied in \citet{LinTian2014,Sougata-Ambuj2016}. 
In this paper, we study the pure exploration version and inherit the two technical assumptions from \citet{LinTian2014,Sougata-Ambuj2016} in order to design an efficient algorithm.

\begin{assumption}[Lipschitz continuity of the expected reward function] \label{assumption_Lipschitz}
    There exists a constant $L_p$ such that for any $x \in \cX$ and any $\theta_1, \theta_2 \in \mathbb{R}^d$,
    $|\bar{r}(x, \theta_1)-\bar{r}(x, \theta_2)| \leq L_p ||\theta_1-\theta_2||_2$.
    %\wei{I added the absolute sign around the difference on reward, feeling that this is the standard way.}
\end{assumption}

\begin{assumption}[Global observer set] \label{assumption_global}
    There exists a global observer set $\sigma=\{x_1, x_2, \dots, x_{|\sigma|}\} \subseteq \cX$,  such that the stacked $\sum_{i=1}^{|\sigma|} m_{x_i} \times d$ transformation matrix 
    $M_{\sigma}=(M_{x_1}; M_{x_2}; \dots; M_{x_{|\sigma|}})$ is of full column rank ($rank(M_{\sigma})=d$).
\end{assumption}

%\yihan{Justified the assumption with short words and put more details to the later subsection.}
 Then, the Moore-Penrose pseudoinverse $M_{\sigma}^+$ satisfies $M_{\sigma}^+ M_{\sigma}=I_d$, where $I_d$ is the $d \times d$ identity matrix.
We justify Assumption~\ref{assumption_global} by the fact that without the existence of global observer set, the learner cannot recover $\theta$ and may not distinguish two different actions.
With Assumption~\ref{assumption_global}, we can systematically construct a global observer set with $|\sigma| \leq d$ by sequentially adding an action that strictly increases the rank of $M_{\sigma}$, until $M_{\sigma}$ reaches the full rank. 
Section~\ref{sec:PL_application} provides more detailed discussion on the global observer set with applications of CPE-PL.

\vspace{1mm}
\noindent
{\bf Notations. \ }
%\paragraph{Notations.}
For clarity, we also introduce the following notations.
Let $[d]=\{1,2,\dots, d\}$.
For a vector $x \in \mathbb{R}^d $ and a matrix $B\in \mathbb{R}^{d\times d}$, let $\|x\|_B=\sqrt{x^{\top} Bx}$. 
%\wei{Need to be consistent on the notation for transpose, either $\top$ or $T$. Currently using $T$.}
%and $\| x\|_{1,B}= \sum_{i=1}^d |B^{1/2}x|_i$ respectively. Here 
For a positive definite matrix $B\in \mathbb{R}^{d\times d}$, we use $B^{1/2}$ to denote the unique positive definite matrix whose square is $B$.
For a given family $\mathcal{X}$, we use $\triangle(\mathcal{X})$ to denote the set of probability distributions over $\mathcal{X}$.
For distribution $\lambda \in \triangle(\mathcal{X})$, we define  $\mathrm{supp}(\lambda)=\{ x \mid \lambda(x) > 0\}$, $M(\lambda)=\E_{z \sim \lambda}[z z^{\top}]$
and ${\widetilde{M}({\lambda}})=\sum_{x \in \mathrm{supp}(\lambda)}x x^{\top}$.
We denote the maximum (minimal) eigenvalue of matrix $B$ by $ \xi_{\max}(B)$ ($\xi_{\min}(B)$).
\section{Combinatorial Pure Exploration with Full-bandit Feedback (CPE-BL)}\label{sec:algorithm}

% %\subsection{Preliminaries}
% \begin{algorithm}[t]%\label{alg:elimtil}
% 	\caption{\textsf{ALBA}$(S, \delta)$}\label{alg:ALBA}
% 	\SetKwInOut{Input}{Input}
% 	\SetKwInOut{Output}{Output}
% 	\Input{Arms set $S$ and confidence level $\delta$.}
% 	Initialize $S_1 \leftarrow S$;

% 	\For{$q \leftarrow  1, \dots, \left \lfloor  \log_2d \right \rfloor$}
% 	{
% 		$\delta_q \leftarrow \frac{6}{\pi^2} \frac{\delta}{(q+1)^2}$\;
% 		$S_{q+1}\leftarrow \textup{ElimTil}_{\left \lfloor  \frac{d}{2^q} \right \rfloor}(S_q, \delta_q)$;

%          $q \leftarrow q+1$;
    
% 	}	

%     \Output{ $x \in S_{q+1}$}
% \end{algorithm}

% \begin{algorithm}[t]\label{alg:vectorest}
% 	\caption{\textsf{VectorEst}($\lambda,n$) }\label{alg:G-opt_eigenvalue}
% 	\SetKwInOut{Input}{Input}
% 	\SetKwInOut{Output}{Output}
% 	\Input{distribution $\lambda$ and the number of samples $n$}

%     Let $y_1 \ldots, y_n$ be the $n$ samples acquired from $\mathrm{supp}(\lambda)$ according to the distribution $\lambda$;
    
%     Pull arms $y_1, \ldots, y_n$, and suppose their corresponding rewards are $r_1,\ldots,r_n$;
    
%     Compute $A \leftarrow n \cdot \sum_{x \in \cX}\lambda(x)xx^{\top}$ and $b \leftarrow \sum_{i=1}^n r_i y_i$;

%     \Output{The estimate $\hat{\theta} \leftarrow A^{-1}b$}
% \end{algorithm}

In this section, we propose the first polynomial-time adaptive algorithm  \textsf{PolyALBA} for CPE-BL, and show that its sample complexity matches the lower bound (within a logarithmic factor) for a family of instances.
% We show how to obtain this result while keeping computational efficiency and compare our result to the lower bound~\cite{Fiez2019} for CPE-BL.  

\subsection{Algorithm Procedure}

%\vspace{1mm}
\noindent
{\bf ALBA algorithm. \ }
%\paragraph{ALBA algorithm~\cite{Tao2018}.}
Before stating the main algorithm, we introduce the Adaptive Linear Best Arm (\textsf{ALBA}) algorithm  for BAI-LB~\cite{Tao2018} (see Algorithm~\ref{alg:ALBA} for its description), which is the key subroutine of our proposed method \textsf{PolyALBA}.
%we introduce technical tools to obtain the estimator $\theta$ used in linear bandits.
First, we describe the randomized least-square estimator defined by Tao et al.~\shortcite{Tao2018}.
Let $y_1, \ldots, y_n$ be $n$ $i.i.d.$ samples following a given distribution $\lambda \in \triangle(\cX)$, and let the corresponding rewards be $r_1, \ldots,r_n$ respectively. Let $b=\sum_{i=1}^n r_i y_i$.
Then, the randomized estimator $\hat{\theta}$ is given by  $\hat{\theta}=A^{-1}b$,
where $A=n M(\lambda) \in \mathbb{R}^{d \times d}$ (recall that $M(\lambda)=\E_{z \sim \lambda}[z z^{\top}]$).
The procedure for computing the estimate for $\theta$ is described in \textsf{VectorEst} (Algorithm~\ref{alg:vectorest}).
% Tao et al.~\cite{Tao2018} provided the high probability bound for the estimator $\hat{\theta}$ as follows.
%  \begin{proposition}[Lemma 10 in Tao et al.~\cite{Tao2018}]\label{proposi:Tao_Lemma10}
% % \begin{proposition}[Lemma 7 in Tao et al.~\cite{Tao2018}]
% Let $c_0=\max\{4L^2, 3\}$.
% Let $n \geq \ell \ln(5/8)$ where $\ell \geq d$. For any fixed $x \in \cX$, with probability at least $1-\delta$, we have
% \begin{align}
%     |x^{\top}(\theta-\hat{\theta})| \leq \sqrt{\left(2\|x\|_2^2+2\sqrt{d}\|x\|_2\|x\|_{M(\lambda)^{-1}}+(4+2\sqrt{d/\ell})\|x\|_2\|x\|^2_{M(\lambda)^{-1}} \right) /\ell  }.
% \end{align}
% \end{proposition}
%In our proposed method, we also utilize this estimator $\hat{\theta}$ and \textsf{VectorEst} (Algorithm~\ref{alg:vectorest}) as subroutine.
% where $\mathrm{Err}_{\lambda}(x,\ell,\theta)$ and $ \mathrm{err}_{\lambda}(x,\ell)$ are defined as
% \begin{align}
%     \mathrm{Err}_{\lambda}(x,\ell,\theta) =&\sqrt{\frac{2 \E_{z \sim \lambda}[(x^{\top}M(\lambda)^{-1}\bm{z}\cdot\bm{z}^{\top}\theta)^2]}{\ell}}\notag\\
%     +
%   &  \|x\|_{M(\lambda)^{-1}} \left(\frac{L\sqrt{d}}{3\ell}
%     +\sqrt{2}\frac{\kappa}{\sqrt{\ell}}\sqrt{ \left( \sqrt{\frac{3d}{\ell}}+1 \right)} \right)+\frac{2|x^{\top}\theta|}{3\ell},
% \end{align}
% \begin{align}
%     \mathrm{err}_{\lambda}(x,\ell) =
%   \|x\|_{M(\lambda)^{-1}} \left(\frac{\sqrt{2}L}{\sqrt{\ell}}+\frac{L\sqrt{d}}{3\ell}
%     +\sqrt{2}\frac{\kappa}{\sqrt{\ell}}\sqrt{ \left( \sqrt{\frac{3d}{\ell}}+1 \right)} \right)+\frac{2\|x\|_2}{3\ell}.
% \end{align}
%\end{proposition}
\textsf{ALBA} is an elimination-based algorithm, where in round $q$ it 
identifies the top $d/2^q$ arms and discards the remaining arms by means of $\textsf{ElimTil}_p$ (Algorithm~\ref{alg:elimtil}).
Note that \textsf{ALBA}$(S, \delta)$ runs in time polynomial to $|S|$.
%, and in~\cite{Tao2018} \textsf{ALBA} is called with $S=\cX$.
However,
since in CPE-BL, $|\cX|$ is exponential to the instance size, it is infeasible to run \textsf{ALBA} with $S=\cX$.
Our main contribution is the nontrivial construction of a polynomial sized $S_1$ to run \textsf{ALBA} with.

%\begin{wrapfigure}[11]{r}[0mm]{63mm}
% \centering
\begin{algorithm}[t]%\label{alg:elimtil}
	\caption{\textsf{ALBA}$(S, \delta)$~\cite{Tao2018}}\label{alg:ALBA}
	\SetKwInOut{Input}{Input}
	\SetKwInOut{Output}{Output}
	\Input{Action set $S$ and confidence $\delta$.}
	Initialize $S_1 \leftarrow S$;

	\For{$q \leftarrow  1, \dots, \left \lfloor  \log_2d \right \rfloor$}
	{
		$\delta_q \leftarrow \frac{6}{\pi^2} \frac{\delta}{(q+1)^2}$\;
		$S_{q+1}\leftarrow \textup{ElimTil}_{\left \lfloor  \frac{d}{2^q} \right \rfloor}(S_q, \delta_q)$;

         $q \leftarrow q+1$;
    
	}

    \Output{ $x \in S_{q+1}$}
\end{algorithm}

\begin{algorithm}[t]
	\caption{$\textsf{ElimTil}_p(S,\delta)$ }\label{alg:elimtil}
	\SetKwInOut{Input}{Input}
	\SetKwInOut{Output}{Output}
	\Input{A parameter $p$, arms set $S$ and confidence level $\delta$.}
	Compute $\lambda^*_{S} \leftarrow \min_{\lambda \in \triangle(S)}\max_{x \in S} x^{\top} M(\lambda)^{-1}x$;

	Initialize $S_1 \leftarrow S$, $r \leftarrow 1$;

	\While{$|S_r|>p$}{
		Set $\varepsilon_r \leftarrow 1/2^r$, $\delta_r \leftarrow 6/\pi^2 \cdot \delta/r^2$;

		$\hat{\theta}_r \leftarrow$ \textsf{VectorEst}$(\lambda^*_{S}, c_0 \frac{2+(6+\varepsilon_r/2)d}{(\varepsilon/2)^2}\ln \frac{5|S|}{\delta_r})$;
		
		$x_r \leftarrow \argmax_{x \in S_r} x^{\top}\hat{\theta}_r $;
		
		$S_{r+1} \leftarrow S_r \setminus \{ x \in S_r \ | \ x^{\top} \hat{\theta}_r < x_r^{\top}\hat{\theta}_r-\varepsilon_r \}$;
		
		$r \leftarrow r+1$;
		
	}
	\Output{$S_r$}
\end{algorithm}

\begin{algorithm}[t]
	\caption{\textsf{VectorEst}($\lambda,n$) }\label{alg:vectorest}
	\SetKwInOut{Input}{Input}
	\SetKwInOut{Output}{Output}
	\Input{distribution $\lambda$ and the number of samples $n$}

    Let $y_1 \ldots, y_n$ be the $n$ samples acquired from $\mathrm{supp}(\lambda)$ according to the distribution $\lambda$;
    
    Pull arms $y_1, \ldots, y_n$;
    
    Observe the rewards $r_1,\ldots,r_n$;
    
    $A \leftarrow n \cdot \sum_{x \in \mathrm{supp}(\lambda)}\lambda(x)xx^{\top}$;
    
    $b \leftarrow \sum_{i=1}^n r_i y_i$;

    \Output{The estimate $\hat{\theta} \leftarrow A^{-1}b$}
\end{algorithm}
%\end{wrapfigure}

\begin{algorithm}[t]
	\caption{\textsf{PolyALBA}}
	\label{alg:PolyALBA}
	\SetKwInOut{Input}{Input}
	\SetKwInOut{Output}{Output}
	\Input{confidence level $\delta$, $c_0=\max \{4L^2, 3\}$.}
	%	Choose $d$ super arms $\mathcal{X}_{\sigma}=\{b_1, \dots, b_d\}$ from $\mathcal{X}$, such that  $Rank(X)=d$ where $X=(b_1, \dots, b_d)$\;
	%	Compute $\lambda^*_{\mathcal{X}_{\sigma}}$, which is the solution of Proposition 6 in \cite{tao2018best}\;
	%	$\alpha \leftarrow  \sqrt{\frac{md}{\sigma} }$\;
	Set $q \leftarrow  0$ and $\delta_q \leftarrow \frac{6}{\pi^2} \frac{\delta}{(q+1)^2}$\;
	\label{line:computelambda}
	Compute a distribution $\lambda \leftarrow \lambda^*_{\mathcal{X}_{\sigma}}$ and parameter $\alpha \leftarrow  \sqrt{md/\xi_{\min}(\widetilde{M}(\lambda^*_{\mathcal{X}_{\sigma}}))}$ by Algorithm~\ref{alg:compute_dist}\;
	$r \leftarrow  1$\;
	\While{\textup{true} \label{line:whilebfirstepoch}} 
	{
		Set $\varepsilon_r \leftarrow \frac{1}{2^r}$ and
		$\delta_r \leftarrow \frac{6}{\pi^2} \frac{\delta_q}{r^2}$\;
		
		$ \ell(\varepsilon) \leftarrow \frac{2m + 2 \alpha \sqrt{m} d + 4 \alpha^2 d + \alpha \varepsilon d}{\varepsilon^2}$;
		
		$\hat{\theta}_r \leftarrow \textsf{VectorEst}(\lambda, c_0 \ell(\frac{\varepsilon_r}{2})  \ln(\frac{5|\mathcal{X}|}{\delta_r} ) )$\;
		Select $d+1$ actions $\hat{x}_1, \dots, \hat{x}_d, \hat{x}_{d+1}$ with the highest $d+1$ empirical means $x^{\top} \hat{\theta}_r$ in all $x \in \mathcal{X}$\;
		\If {$\hat{x}_1^{\top} \hat{\theta}_r-\hat{x}_{d+1}^{\top} \hat{\theta}_r > \varepsilon_r$}
		{
			$B_{1} \leftarrow \{\hat{x}_1, \dots, \hat{x}_d\}$\; \label{line:select_S1_start}
		
			$S_{1} \leftarrow B_{1} \setminus \{ x \in B_{1} \ | \ \hat{x}_1^{\top} \hat{\theta}_r-x^{\top} \hat{\theta}_r >\varepsilon_r \}$\; \label{line:select_S1_end}
			break\;
		}
		$r \leftarrow  r+1$\; \label{line:whileefirstepoch}
	}
	
	%	 $q \leftarrow 1$; \wei{want $\delta_1$?}
	
	$\hat{x}^* \leftarrow$ output by \textsf{ALBA}$(S_1, \delta_1)$
	
	% 	\For{$q \leftarrow  1, \dots, \left \lfloor  \log_2d \right \rfloor$}
	% 	{
	% 		$\delta_q \leftarrow \frac{6}{\pi^2} \frac{\delta}{(q+1)^2}$\;
	% 		$S_{q+1}\leftarrow \textup{ElimTil}_{\left \lfloor  \frac{d}{2^q} \right \rfloor}(S_q, \delta_q)$
	% 	}
	\Output{$\hat{x}^*$}
\end{algorithm}

\begin{algorithm}[t!]
	\caption{Computing a distribution $\lambda$ }\label{alg:compute_dist}
	\SetKwInOut{Input}{Input}
	\SetKwInOut{Output}{Output}
	\Input{$d$-base arms}
	
	Choose any $d$ super arms $\mathcal{X}_{\sigma}=\{b_1, \dots, b_d\}$ from $\mathcal{X}$, such that  $\mathrm{rank}(X)=d$ where $X=(b_1, \dots, b_d)$\;
	
	$\lambda^*_{\mathcal{X}_{\sigma}} \leftarrow \argmin_{\lambda \in \triangle( {\mathcal{X}_{\sigma})}} \max_{x \in {\mathcal{X}_{\sigma}}} x^{\top}M(\lambda)^{-1}x$ by 
	the entropy mirror descent algorithm of \cite{Tao2018} (see Algorithm~\ref{alg:descent} in Appendix~\ref{apx:G-opt}\OnlyInShort{ in the full version})
	\;
	
	% 	$\xi_{\min}(sum_{x \in \cX_{\sigma}}x x^{\top}) \leftarrow$ the smallest eigenvalue of $\sum_{x \in \cX_{\sigma}}x x^{\top}$;
	
	$\alpha \leftarrow  \sqrt{md/\xi_{\min}(\widetilde{M}(\lambda^*_{\mathcal{X}_{\sigma}}))}$\;
	\Output{ $\lambda^*_{\mathcal{X}_{\sigma}}$ and $\alpha$}
\end{algorithm}

\vspace{1mm}
\noindent
{\bf Main algorithm. \ }
%\paragraph{Main algorithm.}
Now we present our proposed algorithm \textsf{PolyALBA} (see Algorithm~\ref{alg:PolyALBA} for its description), in which \textsf{ALBA} is invoked with $S=S_1$ with $|S_1|=d$.
Set $S_1$ is constructed by a novel preparation procedure in the first epoch ($q=0$).
%we design an efficient sampling scheme in the first epoch $q=0$ for identifying a set of $d$ super arms with the highest empirical means.
%The overall procedure is detailed in Algorithm~\ref{alg:PolyALBA}.
% The \textsf{PolyALBA} algorithm has two parts: one is for finding a set of $d$ super arms that contains the optimal super arm with high probability and the other is for finding the optimal super arm among the set of super arms output by the first part.
In this preparation epoch, we first compute a fixed distribution $\lambda \in \triangle(\cX)$ that has a polynomial-size support and a key parameter $\alpha$ (line~\ref{line:computelambda}).
%Then based on $\lambda$ we apply static estimation to estimate $\theta$ until we can find top-$d$ actions that has a significant gap with the $(d+1)$-th action (lines~\ref{line:whilebfirstepoch}--\ref{line:whileefirstepoch}).
Then, based on $\lambda$ we apply static estimation to estimate $\theta$, until we see a big enough gap between the empirically best and $(d+1)$-th best actions (lines~\ref{line:whilebfirstepoch}--\ref{line:whileefirstepoch}).
The empirical top-$d$ actions, excluding those that also have big gaps to the  best one, form the set $S_1$ (lines~\ref{line:select_S1_start}--\ref{line:select_S1_end}), which is used to call \textsf{ALBA} to obtain the final result $\hat{x}^*$.

Note that, computing the empirical best $d+1$ super arms can be done in polynomial time by using \emph{Lawler's k-best procedure}~\cite{Lawler72}. 
This procedure only requires the existence of the efficient maximization oracle, which is satisfied in many combinatorial problems such as maximum matching, shortest paths and minimum spanning tree.
% In the second part,
% \textsf{PolyALBA} performs the elimination scheme for $S_1$ to find the optimal super arm.
% We develop the elimination scheme based on the state-of-the-art algorithm for linear bandits proposed by Tao et al.~\cite{Tao2018}.
% As its subroutine, $\textsf{ElimTil}_p(S,\delta)$ in \cite{Tao2018} is employed to conduct the elimination procedure, which is given in Algorithm~\ref{alg:elimtil}.
%\wei{Perhaps need to discuss that \textsf{PolyALBA} is derived from \textsf{ALBA}, and then point out what are the new additions to \textsf{ALBA}.
%	Also, what are the novelties in the analysis.}
Moreover, the computational efficiency of \textsf{PolyALBA} is not merely owing to the Lawler's $k$-best procedure. 
In fact, even if previous BAI-LB algorithms 
%~\cite{Xu2018,Tao2018,Fiez2019}
%\yuko{I removed citations here, since previous algorithms in BAI-LB" seems enough}\yihan{I changed ``previous algorithms in BAI-LB'' to ``previous BAI-LB algorithms''}
apply the same procedure, they cannot run in polynomial time since they explicitly maintain  exponential-sized action set and sample on distributions with exponential supports. 
These render heavy computation and memory in every round of previous algorithms.
In contrast, we avoid the naive enumeration and sampling on the combinatorial space directly, and instead find empirical top-$d$ actions as representatives through a novel polynomial-time computation procedure.
%\yihan{Revised to ``empirical top-$d$ actions'' because we do not guarantee $S_1$ contains the exactly top-$d$ actions but $x^* \in S_1$.}

%
%
%. We design a preparation procedure to restrict the  support of distribution $\lambda$, implicitly represent the super arm set in the first epoch  and update this set using the efficient Lawler's $k$-best procedure. 
%\wei{What does it mean ``represent the super arm set?}

\subsection{Theoretical Analysis}
%In the following, we first present our theoretical result of \textsf{PolyALBA} and compare it to the lower bound~\cite{Fiez2019} for CPE-BL, then we briefly explain
%how to achieve both polynomial-time complexity and optimaility.

Now we provide the sample complexity bound of  \textsf{PolyALBA}.
% \wei{Do we need any assumption for the following theorem? I remember we had some assumptions before. Are they gone?}\yuko{We need no assumption. Algorithm 8 is more sophisticated version of Algorithm~\ref{alg:compute_dist} but it is randomized algorithm and complicated; so it is given in appendix}
\begin{theorem}\label{thm:CPE-BL}
	With probability at least $1-\delta$, the \textsf{PolyALBA} algorithm (Algorithm \ref{alg:PolyALBA}) returns the best super arm $x^*$ with sample complexity 
	\begin{align*}
		O\Bigg( & \sum_{i=2}^{\left \lfloor  \frac{d}{2} \right \rfloor} \frac{c_0}{\Delta_i^2} (\ln \delta^{-1} + \ln |\mathcal{X}|+\ln\ln \Delta_i^{-1}) 
		\\
		& + \frac{c_0 d (\alpha \sqrt{m} + \alpha^2 )}{\Delta_{d+1}^2} \left(\ln \delta^{-1} + \ln |\mathcal{X}|+\ln\ln \Delta_{d+1}^{-1} \right) \Bigg),
	\end{align*}
	where $\alpha = \sqrt{md/\xi_{\min}(\widetilde{M}(\lambda^*_{\mathcal{X}_{\sigma}}))}$.
\end{theorem}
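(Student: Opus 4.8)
The plan is to split the analysis along the two phases of \textsf{PolyALBA}---the preparation epoch that builds $S_1$, and the subsequent call to \textsf{ALBA}$(S_1,\delta_1)$---and to combine their guarantees by a union bound. The linchpin is a single geometric lemma controlling the estimation width uniformly over the \emph{entire} (exponential) action space from samples drawn only on the $d$-point set $\mathcal{X}_\sigma$. Concretely, I would first show that for every $x\in\cX$,
\[
\|x\|_{M(\lambda^*_{\mathcal{X}_\sigma})^{-1}}^2 \;\le\; \frac{\|x\|_2^2}{\xi_{\min}(M(\lambda^*_{\mathcal{X}_\sigma}))} \;\le\; \alpha^2 ,
\]
using $\|x\|_2^2=\|x\|_1\le m$ for binary $x$ together with the comparison $\xi_{\min}(M(\lambda^*_{\mathcal{X}_\sigma}))\ge \xi_{\min}(\widetilde{M}(\lambda^*_{\mathcal{X}_\sigma}))/d$, which I would derive from the equalizer (Kiefer--Wolfowitz) property of the $G$-optimal design on the $d$-point spanning set $\mathcal{X}_\sigma$. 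This is exactly the definition $\alpha=\sqrt{md/\xi_{\min}(\widetilde{M}(\lambda^*_{\mathcal{X}_\sigma}))}$, so $\alpha$ is a uniform upper bound on the design norm over all of $\cX$.

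Next I would analyze the preparation epoch. Feeding the sample count $\ell(\varepsilon_r/2)$ into the concentration bound for the randomized least-squares estimator of Tao et al.\ (whose variance terms are precisely the numerator of $\ell$), the uniform norm bound yields $|x^{\top}(\hat\theta_r-\theta)|\le \varepsilon_r/2$ for all $x\in\cX$ simultaneously with probability at least $1-\delta_r$; a union bound over $r$ using $\delta_r=\tfrac{6}{\pi^2}\delta_0/r^2$ makes this hold in every round at total failure at most $\delta_0$. On this event: (i) \textbf{correctness}---when the loop breaks, the $\varepsilon_r$-gap between the empirical best and the empirical $(d{+}1)$-th arm, combined with the $\varepsilon_r/2$ accuracy, forces $x^*$ to lie among the empirical top-$d$ and to survive the $\varepsilon_r$-elimination, so $x^*\in S_1$ and $|S_1|\le d$; (ii) \textbf{termination}---once $\varepsilon_r\lesssim\Delta_{d+1}$ the true separation between $x^*$ and the $(d{+}1)$-th best arm guarantees the break condition, so the loop stops at round $r^*$ with $2^{-r^*}=\Theta(\Delta_{d+1})$. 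Summing $c_0\,\ell(\varepsilon_r/2)\ln(5|\cX|/\delta_r)$ over $r\le r^*$ is a geometric series dominated by its last term; since $\ell(\varepsilon)=O\!\big(d(\alpha\sqrt{m}+\alpha^2)/\varepsilon^2\big)$ and $r^*=O(\log\Delta_{d+1}^{-1})$ contributes the $\ln\ln\Delta_{d+1}^{-1}$ factor through $\delta_{r^*}$, this produces exactly the second term of the bound.

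For the second phase I would invoke the correctness and sample-complexity guarantee of \textsf{ALBA} from Tao et al.: run on the $d$-sized set $S_1$ containing $x^*$, it returns $x^*$ with probability at least $1-\delta_1$ using $O\!\big(\sum_{i=2}^{\lfloor d/2\rfloor}\tfrac{c_0}{\Delta_i^2}(\ln\delta^{-1}+\ln|S_1|+\ln\ln\Delta_i^{-1})\big)$ samples. Because $S_1$ consists of the top arms of $\cX$, the within-$S_1$ gaps coincide with the original $\Delta_i$, and bounding $\ln|S_1|\le\ln|\cX|$ matches the stated first term. A final union bound over the two phase-failures (each arranged to be at most a constant fraction of $\delta$ via the choices of $\delta_0,\delta_1$) gives overall success probability at least $1-\delta$, and adding the two sample counts gives the claimed total.

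The main obstacle is the preparation-epoch analysis rather than the \textsf{ALBA} invocation: I expect the delicate points to be (a) proving the eigenvalue comparison $\xi_{\min}(M(\lambda^*_{\mathcal{X}_\sigma}))\ge\xi_{\min}(\widetilde{M}(\lambda^*_{\mathcal{X}_\sigma}))/d$ for the optimal design---so that a single scalar $\alpha$ controls \emph{all} exponentially many actions---and (b) pinning the break round to precision $\Theta(\Delta_{d+1})$, which requires showing both that the loop cannot stop too early (else $x^*$ might be excluded from $S_1$) and that it must stop by the time $\varepsilon_r$ falls below the true $(d{+}1)$-th gap, so that the second term carries $\Delta_{d+1}$ rather than the much smaller $\Delta_{\min}$.
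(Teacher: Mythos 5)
Your proposal is correct, and its overall architecture---a preparation epoch feeding into \textsf{ALBA}$(S_1,\delta_1)$, concentration uniform over $\cX$ with a union bound over rounds, the two-sided argument that $x^*\in S_1$ and that the loop must break once $\varepsilon_r\lesssim\Delta_{d+1}$, a geometric sum dominated by the last round, and finally Tao et al.'s guarantee applied to the $d$-sized set $S_1$---is exactly the paper's (Lemma~\ref{lemma:rad_ell_varepsilon}, Lemma~\ref{lemma:samples_epoch_0}, and the proof in Appendix~\ref{apx:theorem2}). Where you genuinely depart is the proof of the key geometric lemma. The paper's Lemma~\ref{lemma: alpha} writes any $z\in\cX$ as $z=Xw$ in the basis $\mathcal{X}_\sigma$, bounds $\|w\|_1\le\alpha$, and combines the equalizer identity $\|x_i\|_{M(\lambda^*_{\mathcal{X}_\sigma})^{-1}}=\sqrt{d}$ with the triangle inequality, obtaining $\max_{x\in\cX}\|x\|_{M(\lambda^*_{\mathcal{X}_\sigma})^{-1}}\le\alpha\sqrt{d}$. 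You instead use $\|x\|^2_{M^{-1}}\le\|x\|_2^2/\xi_{\min}(M)$ together with the observation that the $G$-optimal design on exactly $d$ linearly independent points is uniform; this does follow from the equalizer property as you suggest, since $x_i^{\top}M(\lambda)^{-1}x_i=1/\lambda(x_i)$ whenever $\mathrm{supp}(\lambda)$ is a basis, forcing $\lambda(x_i)=1/d$ and hence $M(\lambda^*_{\mathcal{X}_\sigma})=\widetilde{M}(\lambda^*_{\mathcal{X}_\sigma})/d$, so your claimed eigenvalue comparison holds with equality. Your route therefore yields the uniform bound $\alpha$ rather than $\alpha\sqrt{d}$---a factor $\sqrt{d}$ tighter than the paper's lemma---and since the algorithm's $\ell(\varepsilon)$ is calibrated to the weaker bound, your version makes the concentration step hold with room to spare and produces the same stated sample complexity. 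Two small points to tighten in a full write-up: first, at the break round you should argue explicitly that $x^*$ lies in the empirical top-$d$ (if $x^*\notin B_1$, its empirical value would fall more than $\varepsilon_r$ below $\hat{x}_1$'s, contradicting the $\varepsilon_r/2$-accuracy), which is the same argument the paper makes tersely; second, the within-$S_1$ gaps \emph{dominate} rather than ``coincide with'' the original $\Delta_i$ (the arms in $S_1$ need not be the true top-$d$), but since \textsf{ALBA}'s complexity decreases in the gaps this only helps the first term.
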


\vspace{1mm}
\noindent
{\bf Analysis of the statistical and computational efficiency. \ }
%\paragraph{Analysis of the statistical and computational efficiency.}
The first term in Theorem~\ref{thm:CPE-BL} is for the remaining epochs required by subroutine \textsf{ALBA} and
	the second term is for the preparation procedure.
As shown in Theorem~\ref{thm:CPE-BL}, our sample complexity bound has lighter dependence on $1/\Delta^2_{\min}$, compared with the existing result (see Table~\ref{table:results}).
% To state our result, we introduce another notations on the gap.
% For each base arm $i \in [d]$,
% we define the gap $\Delta_i$ as 
% \begin{align}\label{def:gap}
%   \Delta_i = \begin{cases}
%     {x^*}^{\top} \theta -\max_{x \in {\cX} \mid x_i=1}  x^{\top} \theta & ({\rm if} \ x^*_i =0  ) \\
%   {x^*}^{\top}\theta -\max_{x \in {\cX} \mid x_i =0 } x^{\top} \theta & ({\rm if} \ x^*_i=1).
%   \end{cases}
% \end{align}
%Now we provide the proof sketch of Theorem~\ref{thm:CPE-BL} in the higher level.
Now we explain the key role for the polynomial-time complexity of \textsf{PolyALBA} in the first epoch played by the distribution $\lambda^*_{\mathcal{X}_{\sigma}}$ and parameter $\alpha$.
%, which are computed by Algorithm~\ref{alg:compute_dist} and fed to \textsf{PolyALBA}.
Notice that even if we employ a uniform distribution on a polynomial-size support $\cX_{\sigma} \subseteq \cX$, i.e., $\lambda_{\mathcal{X}_{\sigma}}=(1/|\cX_{\sigma}|)_{x \in \cX_{\sigma}}$,
computing the maximal confidence bound $\max_{x \in \cX}\|x\|_{M(\lambda_{\mathcal{X}_{\sigma}})^{-1}}$ is NP-hard,
%even for the Top-$k$ case~\cite{Kuroki+19},
% Unfortunately, there exists no polynomial-time algorithm for such a quadratic maximization under matroid, matorid intersection and path constraints,
while many (UCB-based) algorithms in LB ignore this issue and simply use a brute force method.
In contrast, \textsf{PolyALBA} utilizes G-optimal design~\cite{pukelsheim2006} and 
	runs in polynomial time while guaranteeing the optimality. 
In the following lemma, we show that $\alpha\sqrt{d}$ gives the upper bound on the maximal ellipsoidal norm associated to ${M(\lambda^*_{\mathcal{X}_{\sigma}})^{-1}}$.
\begin{lemma} \label{lemma: alpha}
	For $\lambda^*_{\mathcal{X}_{\sigma}}$ and $\alpha$ obtained by Algorithm~\ref{alg:compute_dist}, it holds that
	$\max_{x \in \mathcal{X}} \|x\|_{M(\lambda^*_{\mathcal{X}_{\sigma}})^{-1}} \leq  \alpha \sqrt{d},$
	where $\alpha = \sqrt{md/\xi_{\min}(\widetilde{M}(\lambda^*_{\mathcal{X}_{\sigma}}))}$.   
\end{lemma}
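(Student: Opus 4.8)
The plan is to bound the ellipsoidal norm $\|x\|_{M(\lambda^*_{\mathcal{X}_{\sigma}})^{-1}}$ uniformly over $x \in \mathcal{X}$ by reducing it to the smallest eigenvalue of $M(\lambda^*_{\mathcal{X}_{\sigma}})$, and then exploiting two facts: the combinatorial structure $x \in \{0,1\}^d$ controls $\|x\|_2^2$, and the G-optimal design on the basis $\mathcal{X}_{\sigma}$ has an explicit closed form that links $M(\lambda^*_{\mathcal{X}_{\sigma}})$ to $\widetilde{M}(\lambda^*_{\mathcal{X}_{\sigma}})$.

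First, for any $x \in \mathcal{X}$ I would apply the Rayleigh-quotient bound $x^\top M(\lambda^*_{\mathcal{X}_{\sigma}})^{-1} x \le \xi_{\max}(M(\lambda^*_{\mathcal{X}_{\sigma}})^{-1})\,\|x\|_2^2 = \|x\|_2^2/\xi_{\min}(M(\lambda^*_{\mathcal{X}_{\sigma}}))$. Since $x \in \{0,1\}^d$, we have $\|x\|_2^2 = \|x\|_1 \le m$ by the definition of $m$. It then remains to lower-bound $\xi_{\min}(M(\lambda^*_{\mathcal{X}_{\sigma}}))$ in terms of $\xi_{\min}(\widetilde{M}(\lambda^*_{\mathcal{X}_{\sigma}}))$, which is the one genuinely new ingredient.

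The key step is to pin down the G-optimal design on $\mathcal{X}_{\sigma} = \{b_1,\dots,b_d\}$. Writing $X = (b_1,\dots,b_d)$, which is invertible because $\mathrm{rank}(X) = d$, I observe that $M(\lambda) = X\,\mathrm{diag}(\lambda)\,X^\top$ and $X^{-1} b_i = e_i$, so $b_i^\top M(\lambda)^{-1} b_i = 1/\lambda(b_i)$ for any full-support $\lambda$. Hence $\max_{x \in \mathcal{X}_{\sigma}} x^\top M(\lambda)^{-1} x = 1/\min_i \lambda(b_i)$, which is minimized uniquely by the uniform distribution $\lambda(b_i) = 1/d$. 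Thus $\lambda^*_{\mathcal{X}_{\sigma}}$ is uniform, giving $M(\lambda^*_{\mathcal{X}_{\sigma}}) = \frac{1}{d}\sum_i b_i b_i^\top = \frac{1}{d}\widetilde{M}(\lambda^*_{\mathcal{X}_{\sigma}})$ and therefore $\xi_{\min}(M(\lambda^*_{\mathcal{X}_{\sigma}})) = \frac{1}{d}\,\xi_{\min}(\widetilde{M}(\lambda^*_{\mathcal{X}_{\sigma}}))$. Combining with the two previous steps yields $x^\top M(\lambda^*_{\mathcal{X}_{\sigma}})^{-1} x \le dm/\xi_{\min}(\widetilde{M}(\lambda^*_{\mathcal{X}_{\sigma}})) = \alpha^2 \le \alpha^2 d$, and taking square roots gives the claimed $\max_{x \in \mathcal{X}} \|x\|_{M(\lambda^*_{\mathcal{X}_{\sigma}})^{-1}} \le \alpha\sqrt{d}$ (in fact with room to spare, since the sharper bound $\alpha$ already holds).

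I expect the identification of $\lambda^*_{\mathcal{X}_{\sigma}}$ as the uniform design to be the crux; everything else is routine linear algebra. The one subtlety is that $\lambda^*_{\mathcal{X}_{\sigma}}$ is produced only approximately by entropy mirror descent, so exact uniformity may not hold. To be safe I would argue instead through the monotone PSD relation $M(\lambda^*_{\mathcal{X}_{\sigma}}) \succeq \lambda^*_{\min}\,\widetilde{M}(\lambda^*_{\mathcal{X}_{\sigma}})$, where $\lambda^*_{\min} = \min_{x \in \mathrm{supp}(\lambda^*_{\mathcal{X}_{\sigma}})} \lambda^*_{\mathcal{X}_{\sigma}}(x)$, and invoke a lower bound on $\lambda^*_{\min}$ coming from the near-optimality of the computed design. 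The extra factor $\sqrt{d}$ in the statement provides precisely the slack needed to absorb this approximation loss.
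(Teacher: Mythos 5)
Your proof is correct, and it takes a genuinely different route from the paper's. The paper expands an arbitrary $z \in \mathcal{X}$ in the basis $\mathcal{X}_{\sigma}$ as $z = Xw$, bounds the coefficient $\ell_1$-norm by $\|w\|_1 \le \sqrt{d}\,\|w\|_2 \le \sqrt{d/\xi_{\min}(X^{\top}X)}\,\|z\|_2 \le \alpha$, and then finishes with the triangle inequality plus the Kiefer--Wolfowitz equivalence theorem, which gives $\|b_i\|_{M(\lambda^*_{\mathcal{X}_{\sigma}})^{-1}} = \sqrt{d}$ for each basis element, so $\|z\|_{M(\lambda^*_{\mathcal{X}_{\sigma}})^{-1}} \le \|w\|_1 \sqrt{d} \le \alpha\sqrt{d}$. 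You instead apply the Rayleigh quotient directly to $z$ and reduce everything to $\xi_{\min}(M(\lambda^*_{\mathcal{X}_{\sigma}}))$, which you control by explicitly characterizing the G-optimal design on a linearly independent set: your identity $b_i^{\top}M(\lambda)^{-1}b_i = 1/\lambda(b_i)$ forces the optimum to be uniform, whence $M(\lambda^*_{\mathcal{X}_{\sigma}}) = \widetilde{M}(\lambda^*_{\mathcal{X}_{\sigma}})/d$. Note that both proofs rest on the same idealization, namely that the mirror-descent output is exactly G-optimal; the paper assumes the optimal value is exactly $d$, which by your identity is equivalent to uniformity, so you are on equal footing there. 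Moreover, your fallback via $M(\lambda) \succeq \bigl(\min_i \lambda(b_i)\bigr)\widetilde{M}(\lambda)$ completes easily: the stopping rule of Algorithm~\ref{alg:descent} guarantees $1/\lambda(b_i) = b_i^{\top}M(\lambda)^{-1}b_i \le d + \epsilon$ for every $i$ (the weights stay positive under multiplicative updates), i.e. $\min_i \lambda(b_i) \ge 1/(d+\epsilon)$, costing only a factor $\sqrt{1+\epsilon/d}$, which the slack $\sqrt{d}$ absorbs for any reasonable tolerance. What your route buys: it is more self-contained (no appeal to the equivalence theorem as a black box), it is robust to the approximation error where the paper's proof as written is not, and it yields the strictly sharper bound $\max_{x \in \mathcal{X}}\|x\|_{M(\lambda^*_{\mathcal{X}_{\sigma}})^{-1}} \le \alpha$; propagating that through Proposition~\ref{proposi:Tao_Lemma10} would replace the $\alpha^2 d$ term in $\ell(\varepsilon)$ by $\alpha^2$ and correspondingly shrink the second term of Theorem~\ref{thm:CPE-BL}. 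What the paper's route buys is mainly modularity: it uses only the optimal value $d$ from the equivalence theorem and never needs the shape of the optimal design.
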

From the equivalence theorem for optimal experimental designs (Proposition~\ref{proposition:equivalence} in Appendix~\ref{apx:equivalence_theorem}\OnlyInShort{ in the full version}), it holds that 
%\begin{align}\label{ineq:equivalense_theorem}
$\min_{\lambda \in \triangle(\cX)} \max_{x \in \cX} \| x \|_{M(\lambda)^{-1}}=\sqrt{d}$.
%\end{align}
%Note that LHS in Eq.~\eqref{ineq:equivalense_theorem} cannot be exactly computed in polynomial time since inner maximization is already NP-hard even for the top-$k$ case.
% Therefore, \textsf{PolyALBA} uses a distribution $\lambda \in \triangle(\cX)$ with polynomial-size support for the epoch $q=0$.
From this fact and Lemma~\ref{lemma: alpha}, we see that $\lambda^*_{\mathcal{X}_{\sigma}}$ is $\alpha \ (\geq 1)$-approximate solution to $\min_{\lambda \in \triangle(\cX)} \max_{x \in \cX} \| x \|_{M(\lambda)^{-1}}$ where $\cX$ can be defined by general combinatorial constraints.
Note that $\alpha$ can be easily obtained by computing $\xi_{\min}(\widetilde{M}(\lambda^*_{\mathcal{X}_{\sigma}}))$ (recall that ${\widetilde{M}({\lambda})}=\sum_{x \in \mathrm{supp}(\lambda)}x x^{\top}$).
Therefore, by employing $\lambda^*_{\mathcal{X}_{\sigma}}$ and a prior knowledge of its approximation ratio $\alpha$,
we can guarantee that the preparation sampling scheme identifies a set $S_1$ containing the optimal super arm $x^*$ with high probability.
% We can acquire a high probability bound on the estimate error for $\hat{\theta}$ based on Lemma~\ref{lemma: alpha} and Proposition~\ref{proposi:Tao_Lemma10}, which will be used to prove that $S_1$ contains $x^*$ with high probability (see Lemma~\ref{lemma:samples_epoch_0} in Appendix~\ref{apx:theorem2}).
In the remaining epochs,
\textsf{PolyALBA} can successfully focus on sampling near-optimal super arms by \textsf{ALBA} owing to the optimality of $S_1$.
Note that  $\alpha=1$ if we compute $\min_{\lambda \in \triangle(\cX)} \max_{x \in \cX} \| x \|_{M(\lambda)^{-1}}$ exactly.
If we approximately solve it,
$\alpha$ is independent on the arm-selection ratio but it can depend on the support of $\lambda$. 
For further discussion on improving $\alpha$, please see Appendix~\ref{apx:G-opt}\OnlyInShort{ in the full version}.

\vspace{1mm}
\noindent
{\bf Discussion on the optimality. \ }
%\paragraph{Discussion on the optimality.}
Fiez et al.~\shortcite{Fiez2019} give a sample complexity lower bound for BAI-LB (see Table~\ref{table:results}) and propose a nearly (within a logarithmic factor) optimal algorithm \textsf{RAGE} with sample complexity of $O \left(\sum^{\left \lfloor \log_2(4/\Delta_{\min})  \right \rfloor}_{t=1} 2(2^t)^2\tilde{\rho}(\mathcal{Y}(S_t)) \log(t^2|\mathcal{X}|^2/\delta) \right)$.
%Recently, Degenne et al.~\shortcite{Degenne+2020} also provide a lower bound  for BAI-LB of the same order and propose an asymptotically optimal algorithm \textsf{LinGame(-C)}, however, the fair way to compare this result with other non-asymptotical results is unclear. Katz-Samuels et al.~\shortcite{Katz+2020} propose a nearly optimal algorithm \textsf{Peace}, which is based on \textsf{RAGE} and avoids an explicit union bound over $|\cX|$ in logarithmic factors.
Note that the existing lower bound~\cite{Fiez2019} and nearly (or asymptotically) optimal algorithms~\cite{Fiez2019,Degenne+2020,Katz+2020} do not consider computational efficiency for combinatorially-large $|\cX|$, and the lower bound for polynomial-time CPE-BL algorithms is still an open problem. 
%\wei{The above paragraph is too long and distracting. I suggest to move the discussion of latest results to the appendix A, and only leave the Fiez et al.~\shortcite{Fiez2019} here.}
%\yihan{Shortened this paragraph.}

When compared to the lower bound~\cite{Fiez2019}, there exists a family of instances such that $\Delta_{\left \lfloor d/ 2^{(t-2)} \right \rfloor+1}=4\cdot2^{-t}, \  t=2, 3, \dots, \log_2(\frac{4}{\Delta_{\min}})$, in which our \textsf{PolyALBA} achieves $O (\sum^{\left \lfloor \log_2(4/\Delta_{\min})  \right \rfloor}_{t=2} 2(2^t)^2\tilde{\rho}(\mathcal{Y}(S_t)) \log(t^2|\mathcal{X}|^2/\delta)+ m d  \xi_{\max}({\tilde{M}^{-1}({\lambda})}) \tilde{\rho}(\mathcal{Y}(S_1)) \log(|\mathcal{X}|^2/\delta) )$ sample complexity (see Appendix~\ref{apx:polyalba_optimality}\OnlyInShort{ in the full version} for more details). 
When ignoring a logarithmic factor and with sufficiently small $\Delta_{\min}$, the additional term related to $\xi_{\max}({\tilde{M}^{-1}({\lambda})})$ is absorbed and the result matches the lower bound, 
%in the dominant term with only a loss of $m d  \xi_{\max}({\tilde{M}^{-1}({\lambda})})$ in the additional term, 
 which shows superiority over other heavily $\Delta_{\min}$-dependent algorithms~\cite{Soare2014,Karin16,Kuroki+19}. 
% \yihan{Revised this sentence to ``heavily $\Delta_{\min}$-dependent algorithms''.}
Note that the term related to $\xi_{\max}({\tilde{M}^{-1}({\lambda})})$ can be viewed as the cost for achieving computational efficiency. 
% While this result still has a gap to recent nearly optimal BAI-LB algorithms in the additional term, \textsf{PolyALBA} is the first polynomial-time algorithm to duel with combinatorially-large $\cX$ with a light $\Delta_{\min}$-dependence, and we believe that this gap is a cost of computational efficiency. 
%\yihan{Revised this paragraph.}

% While \citet{Idan2019} propose a polynomial-time algorithm for the top-$k$ case of CPE-BL, we remark that there exists a \emph{naive reduction} of top-$k$ to the classical CPE-MB (see Appendix~\ref{apx:clucb}). This paper aims to go beyond top-$k$ and investigate general combinatorial structures.

To our best knowledge, our \textsf{PolyALBA} is the first polynomial-time adaptive algorithm that works for CPE-BL with general combinatorial structures and achieves nearly optimal sample complexity for a family of problem instances.
%\yihan{I added the word ``nearly'' because we ignore the logarithmic factor.}

\section{Combinatorial Pure Exploration with Partial Linear Feedback (CPE-PL)}\label{sec:CPE-PL}

In this section, we present the first polynomial-time algorithm {\sf GCB-PE} for CPE-PL with sample complexity analysis, and
%which is based on the {\sf GCB} (global confidence bound) algorithm in \cite{LinTian2014}, and provide its sample complexity analysis.
discuss its further improvements via a non-uniform allocation strategy. We also give  practical applications for CPE-PL and explain the corresponding global observer set and sample complexity result in these scenarios.

\subsection{Algorithm Procedure} \label{sec:GCBPEprocedure}
We illustrate {\sf GCB-PE} in Algorithm \ref{alg:GCB-PE}.
{\sf GCB-PE} estimates the environment vector $\theta$ by repeatedly pulling the global observer set $\sigma=\{x_1, x_2, \dots, x_{|\sigma|}\}$, which in turn helps estimate the expected rewards $\bar{r}(x, \theta)$ of all super arms $x \in \cX$ using the Lipschitz continuity (Assumption \ref{assumption_Lipschitz}). We call one pull of global observer set $\sigma$ \emph{one exploration round}, the specific procedure of which is described as follows: for the $n$-th exploration round, the learner plays all actions in $\sigma=\{x_1, x_2, \dots, x_{|\sigma|}\}$ once and respectively observes feedback $y_1, y_2, \dots, y_{|\sigma|}$, the stacked vector of which is denoted by $\vec{y}_n=(y_1; y_2; \dots; y_{|\sigma|})$. The estimate of environment vector $\theta$ in this exploration round is $\hat{\theta}_n= M_{\sigma}^+ \vec{y}_n$, where $M_{\sigma}^+$ is the Moore-Penrose pseudoinverse of $M_{\sigma}$. From Assumption \ref{assumption_global}, we have $\mathbb{E}[\hat{\theta}_n]=\theta$. Then, we can use the independent estimates in multiple rounds, i.e.,  $\hat{\theta}(n)= \frac{1}{n} \sum_{j=1}^{n} \hat{\theta}_j$, to obtain an accurate estimate of $\theta$.

Similar to Lin et al.~\shortcite{LinTian2014}, we define a constant $\beta_\sigma:=\max_{\eta_1, \cdots, \eta_{|\sigma|}  \in [-1,1]^{d}}  \|  (M_{\sigma}^{\top} M_{\sigma})^{-1} \sum_{i=1}^{|\sigma|} M_{x_i}^{\top} M_{x_i} \eta_i  \|_2$, which only depends on global observer set $\sigma$, and bounds the estimate error of one exploration round, i.e., for any $n$,
$\|\hat{\theta}_n -\theta \|_2 \leq \beta_\sigma$, the proof of which is given in Appendix~\ref{apx:eq_beta_sigma}\OnlyInShort{ in the full version}.
%\begin{align}
%	\|\hat{\theta}_n -\theta \|_2 \leq \beta_\sigma. \label{eq_beta_sigma}
%\end{align}
%The proof of Eq. (\ref{eq_beta_sigma}) is given in Appendix~\ref{apx:eq_beta_sigma}.
Based on $\beta_\sigma$, we further design a global confidence radius   $\textup{rad}_n = \sqrt{ 2 \beta_\sigma^2 \log( 4n^2e^2/ \delta ) /n }$ for the estimate $\hat{\theta}(n)$, and show that with high probability, $\textup{rad}_n$ bounds the estimate error of $\hat{\theta}(n)$.
%(see Lemma \ref{lemma:rad} in Appendix~\ref{apx:gcb_ub}). 

Compared with {\sf GCB} in Lin et al.~\shortcite{LinTian2014}, which works for the regret minimization metric of the combinatorial partial monitoring game with linear feedback problem,  {\sf GCB-PE} targets the best action identification and mainly controls the stopping time of the exploration phase rather than balancing the frequency of  exploration and exploitation phases. 
For the pure exploration metric, our global confidence radius $\textup{rad}_n$ is novelly designed to bound the estimate error. In addition, the stopping condition, which uses the designed confidence radius and Lipschitz continuity of the expected reward function, is also novelly adopted to fit the CPE-PL setting.
%\yihan{Revised the comparison with GCB.}

The computational efficiency of {\sf GCB-PE} relies on the polynomial-time offline maximization oracle for the specific combinatorial instance, which is used in the two $\mathtt{\argmax}$ operations in {\sf GCB-PE}. It is reasonable to assume the existence of polynomial-time offline maximization oracle, otherwise we cannot efficiently address the exponentially large action space even if the real environment vector $\theta$ is known.

% {\sf GCB-PE} provides a general algorithmic framework to simultaneously address the partial linear feedback, general expected reward and  combinatorial action space.

% \begin{algorithm}[t]
% 	\caption{GCB-PE}\label{alg:GCB-PE}
% %	\begin{algorithmic}[1]
%     \SetAlgoLined
% 	\SetKwInOut{Input}{Input}
% 	\SetKwInOut{Output}{Output}
% 	\Input{$\delta, \sigma , \beta_\sigma, L_p$}
% 	\For{$s =1,\dots, |\sigma|$}
% 	{
% 		Pull $x_s$ in observer set $\sigma$, and observe $y_s$\;
% 	}
% 	$n \leftarrow 1$\;
% 	$y_1=(y_1; y_2; \dots; y_{|\sigma|})$\;
% 	$\hat{\theta}_1= M_{\sigma}^+ y_1$ and $\hat{\theta}(1)= \hat{\theta}_1 $\;
	
% 	%$\hat{\theta}(1)= \hat{\theta}_1 $\;
	
% 	\While{True}
% 	{
% 		$\hat{x}=\argmax_{x \in \cX} \bar{r}(x, \hat{\theta}(n))$\;
% 		$\hat{x}^-=\argmax_{x \in \cX \setminus\{ \hat{x} \}} \bar{r}(x, \hat{\theta}(n))$\;
% 		$\textup{rad}_n \leftarrow \sqrt{ \frac{2 \beta_\sigma^2 \log( \frac{4n^2e^2}{\delta} ) }{n} }$\;
% 		\If{ $\bar{r}(\hat{x}, \hat{\theta}(n)) - \bar{r}(\hat{x}^-, \hat{\theta}(n)) > 2L_p \cdot \textup{rad}_n$ }  
% 		{
% 			\Return {$\hat{x}$}\;
% 		}
% 		\Else
% 		{
% 			\For{$s =1,\dots, |\sigma|$}
% 			{
% 				Pull $x_s$ in observer set $\sigma$, and observe $y_s$\;
% 			}
% 			$n \leftarrow n+1$\;
% 			$y_n=(y_1; y_2; \dots; y_{|\sigma|})$\;
% 			$\hat{\theta}_n= M_{\sigma}^+ y_n$\;
% 			$\hat{\theta}(n)= \frac{1}{n} \sum_{j=1}^{n} \hat{\theta}_j$\;
% 		}
% 	}
% 	\Output{ $\hat{x}$}
% %	\end{algorithmic}
% \end{algorithm}

\begin{algorithm}[t]
	\caption{\textsf{GCB-PE}}\label{alg:GCB-PE}
%	\begin{algorithmic}[1]
	\SetKwInOut{Input}{Input}
	\SetKwInOut{Output}{Output}
	\Input{Confidence level $\delta$, global observer set $\sigma$, constant $\beta_\sigma$, Lipschitz constant $L_p$}
	\For{$s =1,\dots, |\sigma|$}
	{
		Pull $x_s$ in observer set $\sigma$, and observe $y_s$\;
	}
	$n \leftarrow 1$\;
	$\vec{y}_1 \leftarrow (y_1; y_2; \dots; y_{|\sigma|})$\;
	$\hat{\theta}_1 \leftarrow  M_{\sigma}^+ \vec{y}_1$ and $\hat{\theta}(1) \leftarrow \hat{\theta}_1 $\;
	
	%$\hat{\theta}(1) \leftarrow  \hat{\theta}_1 $\;
	
	\While{true}
	{
		$\hat{x} \leftarrow \argmax_{x \in \cX} \bar{r}(x, \hat{\theta}(n))$\;
		$\hat{x}^- \leftarrow  \argmax_{x \in \cX \setminus\{ \hat{x} \}} \bar{r}(x, \hat{\theta}(n))$\;
		$\textup{rad}_n \leftarrow \sqrt{ \frac{2 \beta_\sigma^2 \log( \frac{4n^2e^2}{\delta} ) }{n} }$\;
		\If{ $\bar{r}(\hat{x}, \hat{\theta}(n)) - \bar{r}(\hat{x}^-, \hat{\theta}(n)) > 2L_p \cdot \textup{rad}_n$ } 
		{
			\Return {$\hat{x}$}\;
		}
		\Else
		{
			\For{$s =1,\dots, |\sigma|$}
			{
				Pull $x_s$ in observer set $\sigma$, and observe $y_s$\;
			}
			$n \leftarrow n+1$\;
			$\vec{y}_n \leftarrow (y_1; y_2; \dots; y_{|\sigma|})$\;
			$\hat{\theta}_n \leftarrow  M_{\sigma}^+ \vec{y}_n$\;
			$\hat{\theta}(n) \leftarrow  \frac{1}{n} \sum_{j=1}^{n} \hat{\theta}_j$\;
		}
	}
	\Output{ $\hat{x}$}
%	\end{algorithmic}
\end{algorithm}

\subsection{Theoretical Analysis}
We give the sample complexity of {\sf GCB-PE} below. 
%To our best knowledge, it is the first theoretical result for the CPE-PL problem.

\begin{theorem} \label{thm:GCB_ub}
	With probability at least $1-\delta$, the \textsf{GCB-PE} algorithm (Algorithm \ref{alg:GCB-PE}) will return the optimal super arm $x^*$ with sample complexity 
	$$
	O \left( \frac{|\sigma| \beta_\sigma^2 L_p^2}{\Delta_{\textup{min}}^2} \log \left( \frac{ \beta_\sigma^2 L_p^2}{\Delta_{\textup{min}}^2 \delta } \right )   \right ) ,
	$$
	where $|\sigma| \leq d$.
\end{theorem}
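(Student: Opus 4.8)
The plan is to split the argument into a high-probability \emph{good event}, a deterministic correctness check valid on that event, and a deterministic bound on the stopping round; the final sample complexity is then the stopping round times $|\sigma|$.

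\textbf{Step 1 (concentration backbone).} Because the exploration rounds are independent, the per-round estimators $\hat{\theta}_1,\hat{\theta}_2,\dots$ are i.i.d., unbiased ($\E[\hat{\theta}_j]=\theta$ by Assumption~\ref{assumption_global}), and almost surely bounded with $\|\hat{\theta}_j-\theta\|_2\le\beta_\sigma$ (the appendix bound, which follows since $\hat{\theta}_j-\theta=(M_\sigma^{\top}M_\sigma)^{-1}\sum_i M_{x_i}^{\top}M_{x_i}\eta_i$). Writing $\hat{\theta}(n)-\theta=\tfrac1n\sum_{j=1}^n(\hat{\theta}_j-\theta)$ as an average of i.i.d.\ mean-zero vectors of $\ell_2$-norm at most $\beta_\sigma$, I would invoke a vector-valued Hoeffding/Azuma inequality of Hayes type, $\Pr[\|\hat{\theta}(n)-\theta\|_2>\epsilon]\le 2e^2\exp(-n\epsilon^2/(2\beta_\sigma^2))$. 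Substituting $\epsilon=\textup{rad}_n=\sqrt{2\beta_\sigma^2\log(4n^2e^2/\delta)/n}$ makes the per-round failure probability at most $\delta/(2n^2)$, and a union bound over all $n\ge1$ (using $\sum_n n^{-2}=\pi^2/6$) yields the good event $\mathcal{E}:=\{\forall n,\ \|\hat{\theta}(n)-\theta\|_2\le\textup{rad}_n\}$ with $\Pr[\mathcal{E}]\ge1-\delta$. The constant $e^2$ baked into $\textup{rad}_n$ is chosen precisely to absorb the leading factor of the vector inequality.

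\textbf{Step 2 (correctness on $\mathcal{E}$).} Lipschitz continuity (Assumption~\ref{assumption_Lipschitz}) transfers the estimate error into a reward error: $|\bar{r}(x,\hat{\theta}(n))-\bar{r}(x,\theta)|\le L_p\,\textup{rad}_n$ for every $x\in\cX$. Suppose the stopping test fires at round $n$ but $\hat{x}\neq x^*$; then $x^*\in\cX\setminus\{\hat{x}\}$, so $\bar{r}(\hat{x}^-,\hat{\theta}(n))\ge\bar{r}(x^*,\hat{\theta}(n))$. Chaining the two reward-error inequalities through the stopping gap $\bar{r}(\hat{x},\hat{\theta}(n))-\bar{r}(\hat{x}^-,\hat{\theta}(n))>2L_p\,\textup{rad}_n$ gives $\bar{r}(\hat{x},\theta)>\bar{r}(x^*,\theta)$, contradicting the optimality of $x^*$. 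Hence the returned action equals $x^*$ on $\mathcal{E}$.

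\textbf{Step 3 (stopping round and sample count).} Using the same Lipschitz transfer in the opposite direction, whenever $2L_p\,\textup{rad}_n<\Delta_{\min}$ the empirical best coincides with $x^*$ and the empirical top gap is at least $\Delta_{\min}-2L_p\,\textup{rad}_n$; the stopping test therefore succeeds as soon as $\Delta_{\min}-2L_p\,\textup{rad}_n>2L_p\,\textup{rad}_n$, i.e.\ $\textup{rad}_n<\Delta_{\min}/(4L_p)$. Unfolding the definition of $\textup{rad}_n$, this reduces to $n/\log(4n^2e^2/\delta)>32\beta_\sigma^2L_p^2/\Delta_{\min}^2$, a standard inequality of the form $n/\log n\gtrsim C$ whose solution is $n=O\big(\tfrac{\beta_\sigma^2L_p^2}{\Delta_{\min}^2}\log\tfrac{\beta_\sigma^2L_p^2}{\Delta_{\min}^2\delta}\big)$. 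Multiplying by the $|\sigma|\le d$ pulls per exploration round gives the claimed bound. The main obstacle I expect is Step~1: obtaining a clean, dimension-free uniform-over-$n$ concentration for the $\ell_2$ deviation of the averaged estimator. A naive coordinatewise Hoeffding plus union bound over $d$ coordinates would inject an extra $\log d$ and a $\sqrt{d}$ into the radius, spoiling the stated bound; the argument hinges on a genuinely vector-valued inequality whose constant matches the $e^2$ inside $\textup{rad}_n$ and on verifying that the per-round tail sums to at most $\delta$. Steps~2 and~3 are then essentially bookkeeping with $L_p$.
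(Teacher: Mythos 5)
Your proposal is correct and follows essentially the same route as the paper's proof: the same good event built from a vector-valued (Hayes-type) concentration bound of the form $2e^2\exp\!\left(-n\epsilon^2/(2\beta_\sigma^2)\right)$ (which the paper imports as Lemma A.3 of Lin et al.\ 2014) combined with a union bound over rounds, the same Lipschitz transfer for correctness, and the same termination threshold $\textup{rad}_n < \Delta_{\textup{min}}/(4L_p)$ followed by inverting an $n/\log n$ inequality and multiplying by $|\sigma|$. The only cosmetic differences are that you prove correctness by contradiction where the paper argues directly, and the paper carries out the $n/\log n$ inversion explicitly with a concrete constant ($N \le 655\tilde{H}\log(\tilde{H}/\delta)$, $\tilde{H}=\beta_\sigma^2L_p^2/\Delta_{\textup{min}}^2$), a step you correctly identify as routine.
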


%\begin{corollary} \label{corollary:gcb_linear}
%	Consider the linear expected reward case in which $\bar{r}(x,\theta)=x^{\top} \theta$. Let $m$ ($m \leq d$) denote the maximum number of base arms a super arm contains. With probability at least $1-\delta$, the \textsf{GCB-PE} algorithm (Algorithm \ref{alg:GCB-PE})  will return the optimal super arm $x^*$ with sample complexity 
%	$$
%	O \left( \frac{|\sigma| \beta_\sigma^2 m^2}{\Delta_{\textup{min}}^2} \log \left( \frac{ \beta_\sigma^2 m^2}{\Delta_{\textup{min}}^2 \delta } \right )   \right ) ,
%	$$
%	where $|\sigma| \leq d$.
%\end{corollary}

%\yihan{Shortened this paragraph and put more discussion in the later subsection.}
When the expected reward function is linear, i.e. $\bar{r}(x,\theta)=x^{\top} \theta$, we have $L_p=\sqrt{m}$, where $m$ ($\leq d$) is the maximum number of base arms a super arm contains.
In addition, $\beta_\sigma=\textup{Poly}(d)$ in several practical applications of CPE-PL (see Section~\ref{sec:PL_application} for our detailed discussion).

%In the scenario of learning to rank with feedback at top~\cite{Sougata-Ambuj2016}, 
%$\cX$ is the set of $d!$ permutations and each action $x$ is associated with a $M_x \in \{0,1\}^d$ with ``1'' in the place of the item which is ranked at the top by $x$ and ``0'' everywhere else.
%In this case, we can select a global observer set $\sigma$ as a set of any $d$ actions which puts a distinct item at top, such that $M_\sigma$ is the $d$-by-$d$ identity matrix, and thus we have $\beta_{\sigma}=\sqrt{d}$.
%In another scenario of task assignments in crowdsourcing~\cite{LinTian2014}, $\cX$ is the set of matchings in a $N$-by-$M$ complete bigraph ($N$ workers and $M$ tasks), and a pull of action (matching) $x$ may only reveal the reward of a single matched pair rather than all matched pairs, i.e., $M_x$ contains a single row with all entries ``0'' except one ``1''. In this case, we can choose a global observer set $\sigma$ which contains $d$ actions to recover distinct pairs, such that $M_\sigma$ is the $NM$-by-$NM$ identity matrix, and thus we obtain $\beta_\sigma = \sqrt{d}$.

\vspace{1mm}
\noindent
{\bf Discussion on the optimality. \ }
%\paragraph{Discussion on the optimality.}
While the sample complexity  of $\textsf{GCB-PE}$ is sometimes worse than the CPE-BL or BAI-LB algorithms ($\textsf{PolyALBA}$, $\textsf{ALBA}$ and $\textsf{RAGE}$), it solves a more general class of problems than CPE-BL and BAI-LB. 
We emphasize that our contribution mainly focuses on proposing the first polynomial-time algorithm \textsf{GCB-PE} that simultaneously addresses combinatorial action apace, partial linear feedback and nonlinear reward function.
% The lower bound for CPE-PL is still an open problem left for future work.
%
%
%we justify this gap by the fact that different from CPE-BL or BAI-LB, CPE-PL is a more general model, which allows flexible feedback structure and encompasses a family of sub-problems on full-bandit feedback (CPE-BL), semi-bandit feedback, other partial feedback and nonlinear reward function. 
%Such partial monitoring game~\cite{LinTian2014,Sougata-Ambuj2016} is harder than any sub-problem that has the patterned feedback and linear reward function, and thus the algorithm design for this model relies on the global observability of problem instances. 
%%\yihan{Revised this paragraph and removed the extreme example.}
%%Consider an extreme instance: there are only $d$ actions in $\cX$ that recovers the environment vector $\theta$ and other actions do not provide any feedback (all entries of $M_x$ are zeros). In this case, the global observer set must be the set of that $d$ actions. To identify the best action, we can only repeatedly pull that $d$ actions to recover $\theta$, which matches the design of $\textsf{GCB-PE}$.  
%We recall that for CPE-PL, our contribution mainly focuses on proposing the first polynomial-time algorithm \textsf{GCB-PE} that simultaneously addresses combinatorial action apace, partial linear feedback and nonlinear reward function, and providing the first learnability result. The sample complexity lower bound for CPE-PL is still an open problem, which we leave for future works.
\vspace{1mm}
\noindent
{\bf On non-uniform or adaptive allocation strategy. \ }
%\paragraph{On non-uniform or adaptive allocation strategy.}
\textsf{GCB-PE} can be further improved by employing a \emph{non-uniform} allocation strategy when considering the global observer set $\sigma$ with multiplicity: we can obtain such an allocation by solving an optimization $\argmin_{\lambda \in \triangle(\sigma)} \beta_{\sigma}(\lambda)$ and rounding the result,
%where $\beta_{\sigma}(\lambda)$ is the estimate error upper bound for one exploration round.
where $ \beta_{\sigma}(\lambda):=\max_{\eta_1, \cdots, \eta_{|\sigma|}  \in [-1,1]^{d}}  \|  (M_{\sigma}^{\top} M_{\sigma})^{-1} \sum_{i=1}^{|\sigma|}\lambda_i M_{x_i}^{\top} M_{x_i} \eta_i  \|_2$.
Since uniform sampling is not essential in our analysis, the proposed improvement for \textsf{GCB-PE} via non-uniform allocation does not violate Assumption~\ref{assumption_global} and keeps our theoretical analysis.
%since uniform sampling is not essential in our analysis but static allocation is critical.
%\yihan{I revised this sentence.}
\textsf{GCB-PE} is a static algorithm, and we leave the study of adaptive strategies for CPE-PL as future work. 
In Appendix~\ref{apx:clucb}\OnlyInShort{ in the full version}, we discuss a fully-adaptive algorithm for CPE-BL (special case of CPE-PL), and show that the result depends on a non-controllable term $M(\lambda)^{-1}$, which indicates that the static control may be required to deal with linear feedback efficiently.
%\yihan{``Revised this sentence because only BAI-LB requires static allocation.''}

\subsection{Applications for GCB-PL} \label{sec:PL_application}

%CPE-PL is a novel and general model which encompasses a family of sub-problems on full-bandit feedback, semi-bandit feedback, more flexible partial feedback and nonlinear reward functions. 
CPE-PL characterizes more flexible feedback structures than CPE-BL (or BAI-LB) and finds many real-world applications. Below we present two practical applications and discuss the global observer set (Assumption~\ref{assumption_global}) and parameter $\beta_\sigma$.  
%\yihan{Added this subsection.}

\vspace{1mm}
\noindent
{\bf Online ranking. \ }
%\paragraph{Online ranking.} 
Consider that a company wishes to recommend their products to users by presenting the ranked list of items. Due to user burden constraints and privacy concerns, collecting a large amount of data on the relevance of all items might be infeasible, and thus the company usually collects the relevance of only the top-ranked item~\cite{Sougara-Ambuj2015,Sougata-Ambuj2016, Sougata-Ambuj2017}.
In this scenario, a learner selects a permutation of $d$ items (each action $x$ is a permutation) at each step, and observes the relevance of the top-ranked item, i.e., $M_x$ contains a single row with $1$ in the place of the top-ranked item and $0$ everywhere else. The objective is to identify the best permutation as soon as possible. 
Then, we can construct a global observable set $\sigma$ to be the set of any $d$ actions which places a distinct item at top. Here $M_{\sigma}$ is the $d \times d$ identity matrix and $\beta_{\sigma}=\sqrt{d}$.
	
\vspace{1mm}
\noindent
{\bf Task assignments in crowdsourcing. \ }
%\paragraph{Task assignments in crowdsourcing.}
Consider that an employer wishes to assign crowdworkers to tasks with high quality performance, and it wants to avoid
	the high cost and the privacy concern of collecting each individual worker-task pair performance~\cite{LinTian2014}.
Thus, the employer sequentially chooses an assignment from $N$ workers to $M$ tasks (each action $x$ is a worker-task matching) and only collects the sum of performance feedback for $1 \leq s<N$ matched worker-task pairs, i.e., $M_x$ contains a single row with $1$s in the places of $s$ matched pairs and $0$ everywhere else. The objective is to find the best worker-task matching as soon as possible.
%Then, for $s=1$, we can construct a global observable set $\sigma$ to be the set of $NM$ actions which samples $NM$ distinct pairs. Here $M_{\sigma}$ is the $NM \times NM$ identity matrix and $\beta_\sigma = \sqrt{NM}$. 
For $1\le s<N$, Lin et al.~\shortcite{LinTian2014} provide a systematic method to construct a global observer set.

\section{Experiments}
We conduct experiments for CPE-BL and CPE-PL on the matching and top-$k$ instances, and compare our algorithms with the state-of-the-arts in both running time and sample complexity.
Due to the space limit, here we only present the results on matchings and defer the top-$k$ results with discussion on $\Delta_{\textup{min}}$-dependence to Appendix~\ref{apx:top_k_experiments}\OnlyInShort{ in the full version}.

We evaluate all the compared algorithms on Intel Xeon E5-2640 v3 CPU at 2.60GHz with 132GB RAM. 
For both CPE-BL and CPE-PL, we set action space $\cX$ as matchings in $3$-by-$3$,  $4$-by-$4$ and $5$-by-$5$ complete bipartite graphs. 
The dimension $d$, i.e. the number of edges, is set from $9$ to $25$. The number of matchings $|\cX|$ are set from $12$ to $480$. $\theta_1, \dots, \theta_d$ is set as a geometric sequence in $[0,1]$. 
We simulate the random feedback for action $x$ by a Gaussian distribution with mean of $x^\top \theta$ and unit variance. 
For CPE-PL, % in order to validate the capability of our $\textsf{GCB-PE}$ to address a more challenging problem which cannot be solved by BAI-LB algorithms, 
we use the full-bandit feedback as CPE-BL ($M_x=x^\top$) but a nonlinear reward function $\bar{r}(x, \theta)=x^\top \theta / \|x\|_1$.
For each algorithm, we perform $20$ independent runs and present the average running time and sample complexity with $95\%$ confidence intervals across runs. 
In the experiments, $\textsf{RAGE}$~\cite{Fiez2019} reports memory errors when $|\cX|>48$ due to its heavy memory burden, and thus we only obtain its results on small-$|\cX|$ instances. 
For $\textsf{PolyALBA}$, $\textsf{ALBA}$ and $\textsf{RAGE}$, we obtain the same sample complexity in different runs, since these algorithms compute the required samples at the beginning of each phase and then perform the fixed samples.

\begin{figure}[t]
	\centering
	\subfigure[CPE-BL]{\label{fig:matching_BL}
		\includegraphics[width=0.40\columnwidth]{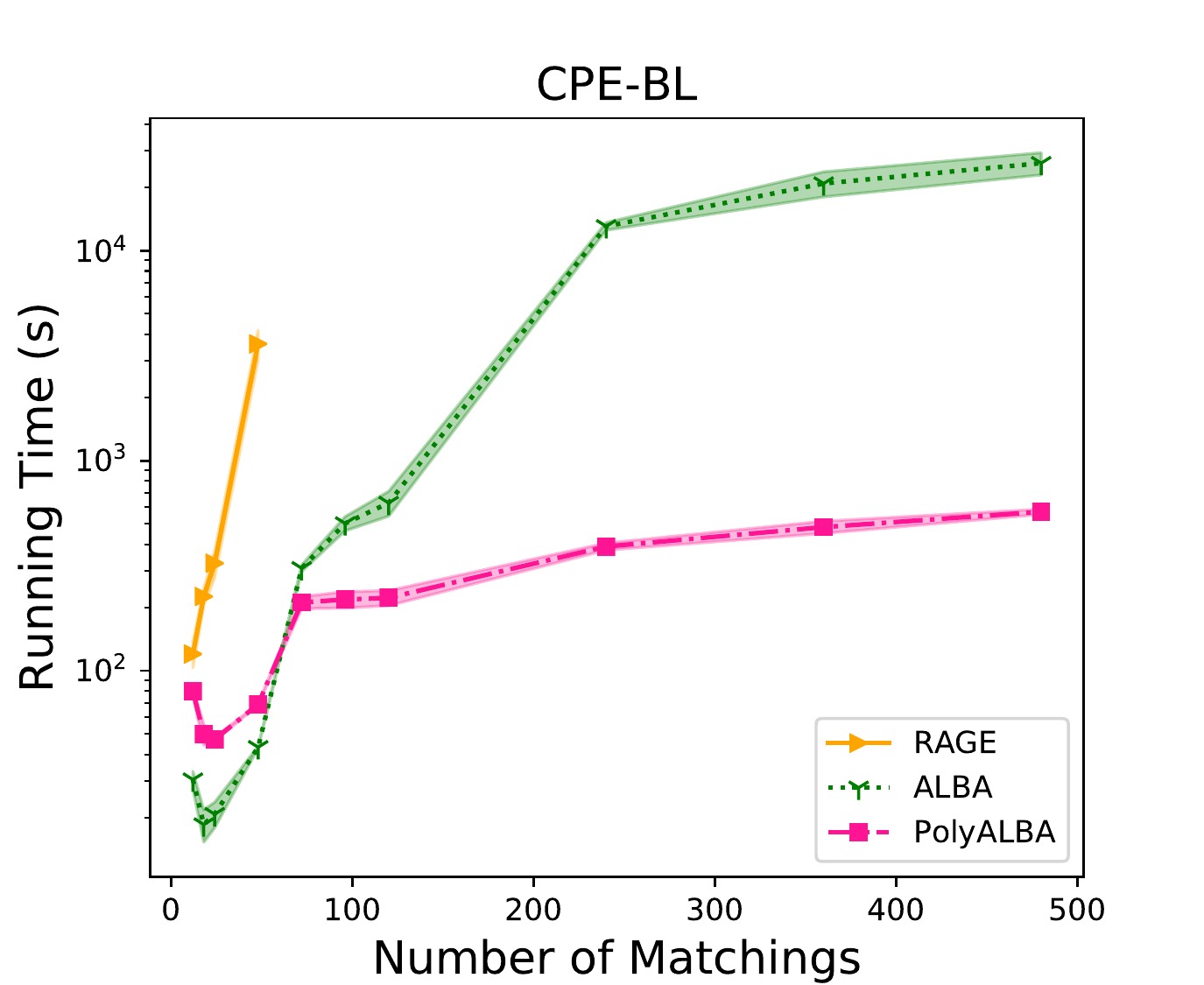}
		\includegraphics[width=0.40\columnwidth]{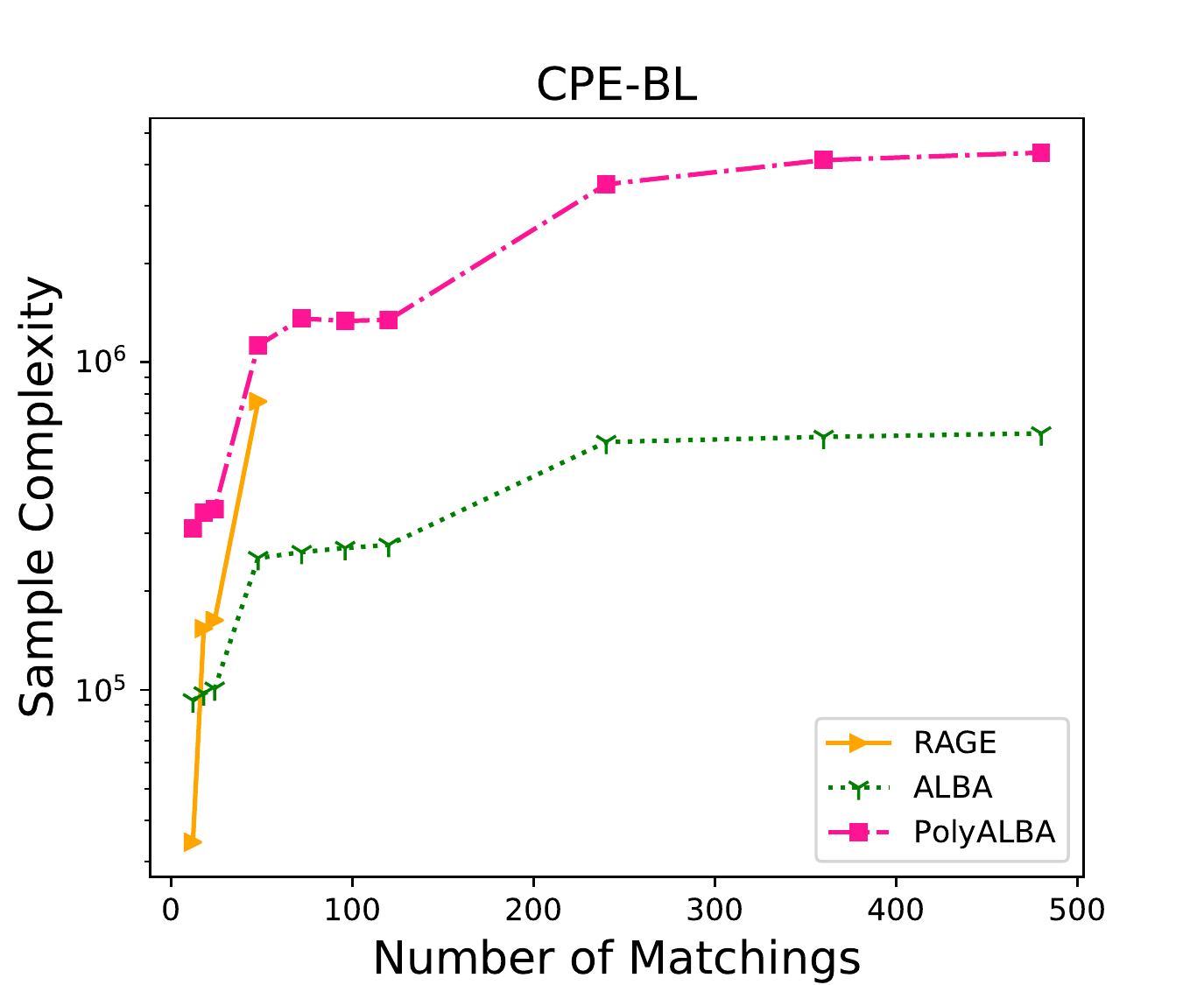}
	}
	\quad
	\subfigure[CPE-PL]{\label{fig:matching_PL}
		\includegraphics[width=0.40\columnwidth]{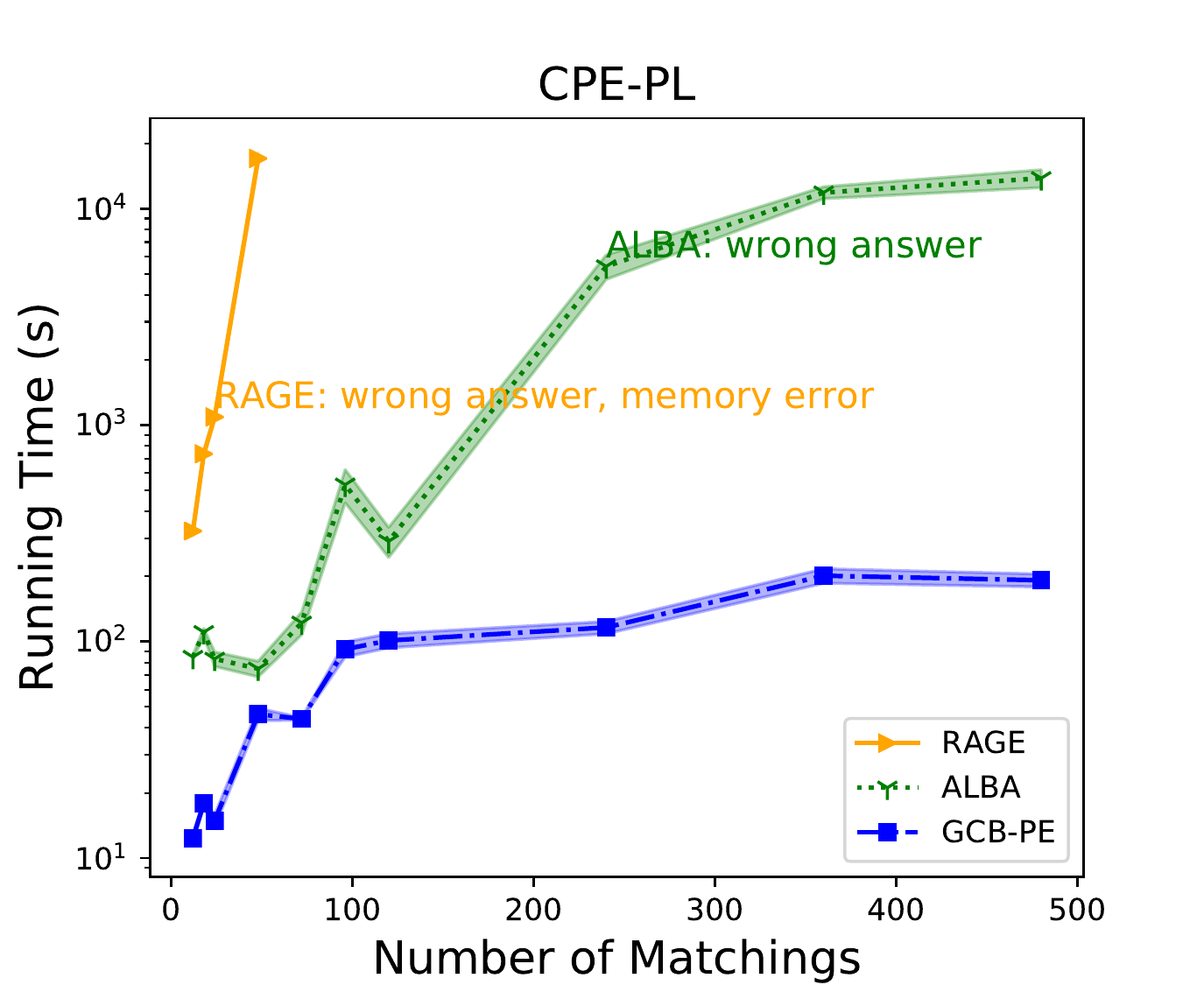}
		\includegraphics[width=0.40\columnwidth]{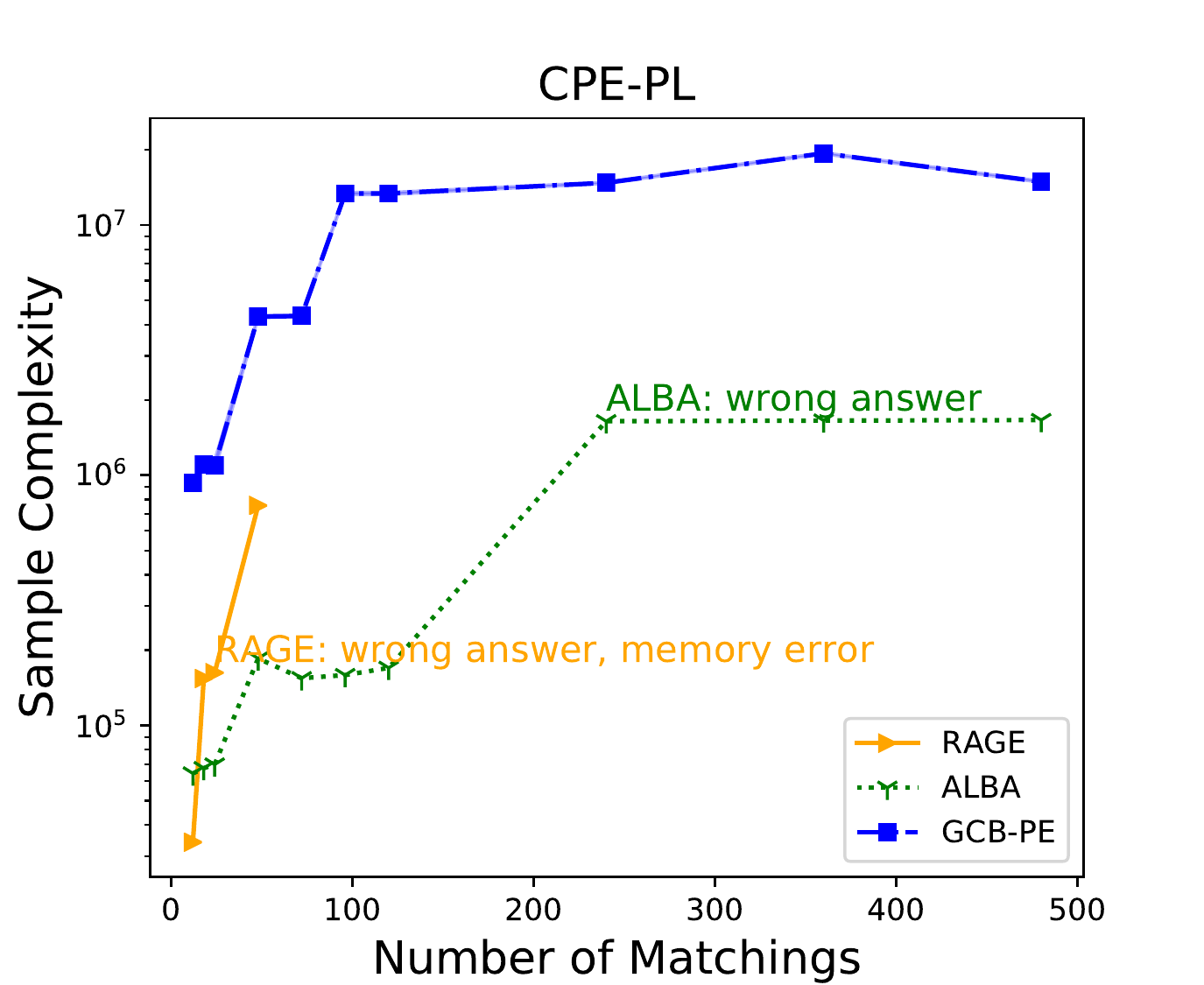}
	}
	\caption{Experimental results of running time and sample complexity for CPE-BL and CPE-PL.}
	\label{fig:matching_experiments}
\end{figure}

\vspace{1mm}
\noindent
{\bf Experiments for CPE-BL. \ }
%\paragraph{Experiments for CPE-BL.}
For CPE-BL, we compare our $\textsf{PolyALBA}$ with the state-of-the-art BAI-LB algorithms $\textsf{ALBA}$ and $\textsf{RAGE}$ in running time and sample complexity. 
%We skip $\textsf{LinGame}$~\cite{Degenne+2020} and $\textsf{Peace}$~\cite{Katz+2020} due to their hardness and implicitness of implementation. \yihan{Added this sentence.} 
%\wei{I removed the sentence on the new algorithms to save space. It is minor at the moment, and if the reviewers raise this issue, we can give explanation.} 
As shown in Figure~\ref{fig:matching_BL} with a logarithmic y-axis, our $\textsf{PolyALBA}$ runs about two orders of magnitude faster than $\textsf{ALBA}$ and $\textsf{RAGE}$, and the running time of $\textsf{PolyALBA}$ increases more slowly than the others as $|\cX|$ increases.
Due to the extra preparation epoch, $\textsf{PolyALBA}$ has a higher sample complexity, but we argue that in practice one has to keep the computation time low first to make an algorithm useful, and for that matter 
	$\textsf{ALBA}$ and $\textsf{RAGE}$ are too slow to run and $\textsf{PolyALBA}$ is the only feasible option. 

%In addition, as $|\cX|$ increases, the running times of $\textsf{ALBA}$ and $\textsf{RAGE}$ boost rapidly, while that of our $\textsf{PolyALBA}$ rises more mildly, which matches its $\textup{Poly}(d)$ time complexity.
%As a cost of computational efficiency, our $\textsf{PolyALBA}$ has a higher sample complexity than $\textsf{ALBA}$ and $\textsf{RAGE}$.

\vspace{1mm}
\noindent
{\bf Experiments for CPE-PL. \ }
%\paragraph{Experiments for CPE-PL.}
For CPE-PL, we compare $\textsf{GCB-PE}$ with BAI-LB algorithms $\textsf{ALBA}$ and $\textsf{RAGE}$ in running time and sample complexity on a more challenging nonlinear reward task. 
In the experiments for CPE-PL, $\textsf{ALBA}$ and $\textsf{RAGE}$ return wrong answers because they are not designed to handle nonlinear reward functions.
Nevertheless, we can still analyze the running times presented in Figure~\ref{fig:matching_PL}.
It shows that our $\textsf{GCB-PE}$ runs two orders of magnitude faster than $\textsf{ALBA}$ and $\textsf{RAGE}$ while reporting the correct answer. 
In addition, as $|\cX|$ increases, the running time of $\textsf{GCB-PE}$ increases in a much slower pace than the others.
The experimental results demonstrate the capability of $\textsf{GCB-PE}$ to simultaneously deal with combinatorial action space, nonlinear reward function and partial feedback in a computationally efficient way.

\section{ Future Work}\label{sec:conclusion}
%In this paper, we study the combinatorial pure exploration with full-bandit feedback (CPE-BL) and partial-monitoring linear feedback (CPE-PL).
%We design the first polynomial-time adaptive algorithm \textsf{PolyALBA} for CPE-BL, propose the general CPE-PL framework and provide the first polynomial-time algorithm \textsf{GCB-PE} for CPE-PL.
%	
%
%
%
%
%problem, in which a learner pulls an action from a combinatorial space and receives a full-bandit random reward at each timestep, with the objective of identifying the optimal action. 
%For CPE-BL, we design the first polynomial-time algorithm \textsf{PolyALBA}  whose sample complexity achieves the worst-case optimality in the dominant term. 
%We further introduce a novel generalization of CPE-BL, named combinatorial pure exploration with partial linear feedback (CPE-PL), which simultaneously models more flexible partial feedback, general (possibly nonlinear) reward and combinatorial action space, and finds various applications such as recommendation systems and crowdsourcing. 
%For CPE-PL, we propose a general polynomial-time algorithmic framework \textsf{GCB-PE} with sample complexity analysis.
%Our algorithms and  analysis provide an efficient solution to identifying the optimal action under combinatorial action space and partial feedback.

There are several interesting directions worth further investigation.
First, it is open to prove a lower bound of polynomial-time algorithms for both CPE-PL and CPE-BL.
Another challenging direction is to design efficient algorithms for specific combinatorial cases to choose the global observer set $\sigma$ and the distribution $\lambda^*_{\mathcal{X}_{\sigma}}$, and derive specific sample complexity bounds.
Furthermore, the extension of CPE-PL to nonlinear feedback is also a practical and valuable problem.
%\yuko{I added to mention for lower bound}

%\clearpage
% \section*{Ethical Impact}
% This paper focuses on theoretical development, but it may have potential social impact in applications such as crowdsourcing application, recommendation systems, etc. The authors believe that it is not applicable to any ethical issues.

\section*{Acknowledgement}
YK would like to thank Masashi Sugiyama and Junya Honda for helpful comments on the manuscript, and Kento Nozawa for his support to use the server.
YK is supported by Microsoft Research Asia, KAKENHI 18J23034, and JST ACT-X 1124477.

\bibliography{mybib}

\clearpage
\appendix
% \title{Combinatorial Pure Exploration with Partial or Full-Bandit Linear Feedback}
% \maketitle
\section*{Appendix}
 \setcounter{equation}{0}
 
\section{Additional Related Work} \label{apx:related_work}

\begin{table*}[t]
	\centering
	\caption{Comparison between our results and existing results for CPE-PL (BL). ``General'' represents that the algorithm works for any combinatorial structure. $\tilde{O}(\cdot)$ only omits $\log \log$ factors. Main notations are defined in Section~\ref{sec:problem_def} and other specific notations are given in the footnote. 
	} \label{table:resultsfull}
	\renewcommand\arraystretch{1.8}
	\scalebox{0.75}{
	\begin{tabular}{|c|c|c|c|c|c|}
		\hline
		Algorithm&Sample complexity \footnotemark&Case&Problem Type&Strategy&Time\\
		\hline
		{\bf 
		\textsf{GCB-PE} (ours, Thm.~\ref{thm:GCB_ub})} &$O \big( \frac{ |\sigma| \beta_{\sigma}^2 L_p^2}{\Delta_{\textup{min}}^2} \log \frac{ \beta_{\sigma}^2 L_p^2}{\Delta_{\textup{min}}^2\delta} \big)$& General &CPE-PL&Static&$\mathrm{Poly}(d)$\\
		\hline
		{\bf  
		\textsf{PolyALBA} (ours, Thm.~\ref{thm:CPE-BL})} &$\tilde{O} \big(\sum_{i=2}^{\lfloor \frac{d}{2} \rfloor} \frac{1}{\Delta_i^2} \log \frac{|\mathcal{X}|}{\delta}    +\frac{d^2 m \xi_{\max}({\widetilde{M}({\lambda})}^{-1})}{ \Delta^2_{d+1}} \log \frac{|\mathcal{X}|}{\delta} \big)$&General &CPE-BL& Adaptive  &$\mathrm{Poly}(d)$\\
		\hline
		\textsf{ICB}~\cite{Kuroki+19}&$\tilde{O}\big(\frac{d  \xi_{\max}({M({\lambda})}^{-1})\rho(\lambda)}{\Delta^2_{\min}} \log \frac{d  \xi_{\max}({M({\lambda})}^{-1}) \rho(\lambda)}{\Delta^2_{\min}\delta}\big)$ &General &CPE-BL& Static&$\mathrm{Poly}(d)$\\
		\hline
		 \textsf{SAQM}~\cite{Kuroki+19} &$\tilde{O}\big(\frac{d^{1/4}k\xi_{\max}({M({\lambda})}^{-1})\rho(\lambda)}{\Delta_{\min}^2} \log \frac{d^{1/4}k\xi_{\max}({M({\lambda})}^{-1})\rho(\lambda)}{\Delta_{\min}^2  \delta}\big)$ &Top-$k$ &CPE-BL& Static&$\mathrm{Poly}(d)$ \\
		\hline
		\textsf{CSAR}~\cite{Idan2019} &$\tilde{O}\big(\sum^{d}_{i=2}\frac{1}{\tilde{\Delta}_i^2} \log\frac{d}{\delta}\big)$&Top-$k$ &CPE-BL&  Adaptive &$\mathrm{Poly}(d)$ \\
		\hline
		\textsf{$\mathcal{X}\mathcal{Y}$-static}~\cite{Soare2014}&$O\big(\frac{d}{\Delta_{\textup{min}}^2}\log \frac{|\cX|}{\delta \Delta^2_{\min}} +d^2\big)$&$\cX \subseteq \mathbb{R}^d$&BAI-LB& Static& $\Omega(|\cX|)$ \\
		\hline
		\textsf{\textsf{Explore-Verify}}~\cite{Karin16}&$O\big(\frac{d}{\Delta_{\textup{min}}^2} \log \frac{|\cX|}{\delta \Delta_{\min}}+ d\log \delta^{-1}\big)$&$\cX \subseteq \mathbb{R}^d$&BAI-LB& Static& $\Omega(|\cX|)$\\
		\hline
		\textsf{LinGapE}~\cite{Xu2018}&  $\tilde{O}\big(d  \sum_{x \in \cX} H_x\log \frac{d|\cX|}{\delta} \cdot \sum_{x \in \cX} H_x\big)$ &$\cX \subseteq \mathbb{R}^d$&BAI-LB& Adaptive & $\Omega(|\cX|)$\\
		\hline
		$\mathcal{Y}$-\textsf{ElimTil}~\cite{Tao2018}&$\tilde{O}\big(\frac{d}{\Delta_{\textup{min}}^2} (\log \delta^{-1} + \log |\mathcal{X}|)\big)$&$\cX \subseteq \mathbb{R}^d$&BAI-LB& Adaptive& $\Omega(|\cX|)$\\
		\hline
	    \textsf{ALBA}~\cite{Tao2018}&$\tilde{O}\big(\sum^{d}_{i=2}\frac{1}{\Delta_i^2}(\log \delta^{-1} + \log |\mathcal{X}|)\big)$&$\cX \subseteq \mathbb{R}^d$&BAI-LB& Adaptive & $\Omega(|\cX|)$\\
		\hline
		\textsf{RAGE}~\cite{Fiez2019}&$O \big(\sum^{\left \lfloor \log_2(4/\Delta_{\min})  \right \rfloor}_{t=1} 2(2^t)^2\tilde{\rho}(\mathcal{Y}(S_t)) \log(t^2|\mathcal{X}|^2/\delta) \big)$&$\cX \subseteq \mathbb{R}^d$&BAI-LB& Adaptive& $\Omega(|\cX|)$\\
		\hline
		\textsf{LinGame(-C)} \cite{Degenne+2020}&$\mathop{\lim \sup}_{\delta \rightarrow 0} \frac{\mathbb{E}_{\theta}[\tau_{\delta}]}{\log(1/ \delta)} \leq \min_{\lambda \in \triangle(\cX)}\max_{x\in \mathcal{\cX} \setminus\{x^*\}} \frac{2||x^*-x||_{M(\lambda)^{-1}}^2}{((x^*-x)^{\top}\theta)^2} $&$\cX \subseteq \mathbb{R}^d$&BAI-LB& Adaptive& $\Omega(|\cX|)$ \\
		\hline
		\textsf{Peace} \cite{Katz+2020}&$O\big( \big(\min_{\lambda \in \triangle(\cX)}\max_{x\in \mathcal{\cX} \setminus\{x^*\}} \frac{||x^*-x||_{M(\lambda)^{-1}}^2}{((x^*-x)^{\top}\theta)^2} + \gamma^*\big)  \log(1/ \delta)\big)$&$\cX \subseteq \mathbb{R}^d$&BAI-LB& Adaptive& $\Omega(|\cX|)$ \\
		\hline
		Lower Bound \cite{Fiez2019}&$\mathbb{E}_{\theta}[\tau_{\delta}] \geq \min_{\lambda \in \triangle(\cX)}\max_{x\in \mathcal{\cX} \setminus\{x^*\}} \frac{||x^*-x||_{M(\lambda)^{-1}}^2}{((x^*-x)^{\top}\theta)^2} \log(1/2.4 \delta)$&$\cX \subseteq \mathbb{R}^d$&BAI-LB& - & - \\
		\hline
	\end{tabular}
	}
\end{table*}

\footnotetext{
	%	The sample complexity of \textsf{LinGame(-C)} is an asymptotic result.
	Notations appearing in the table but not relevant in our problem setting are given below:
	$\rho(\lambda)=\max_{x \in \mathcal{X}}\|x\|^2_{M(\lambda)^{-1}}$.
	$\tilde{\Delta}_i=\theta_i - \theta_{k+1}$ if $i \leq k$ and $\theta_k-\theta_i$ otherwise.
	$H_x=\underset{x_i,x_j \in \cX}{\max}\frac{\bar{\rho}_x(x_i,x_j)}{\max\{\bar{\Delta}^2_i \bar{\Delta}^2_j\}}$  where $\bar{\Delta}=(x^*-x_i)^{\top}\theta$ if $x_i \neq x^*$, $\argmin_{x \in \cX} x^*-x$ otherwise, and $\bar{\rho}_x(x_i,x_j)$ is a term defined by the optimal solution to a convex optimization (see (11) in~\cite{Xu2018}).
	$S_t=\{x \in \cX \mid (x^*-x)^{\top}\theta\leq 4 \cdot 2^{-t} \}$. $\mathcal{Y}(S_t)=\{ x-x' \mid \forall x,x' \in S_t, x \neq x'\}$.
	$\tilde{\rho}(\mathcal{Y}(S_t))=\min_{\lambda \in \triangle(\cX)}\max_{v \in \mathcal{Y}(S_t) }\|v\|_{\Mlin}$. 
	$\gamma^*=\min_{\lambda \in \triangle(\cX)} \mathbb{E}_{\eta \sim N(0,1)} \left[ \max_{x\in \mathcal{\cX} \setminus\{x^*\}} \frac{(x^*-x)^\top M(\lambda)^{-1/2} \eta}{(x^*-x)^{\top}\theta} \right]^2. $
}
% \footnotetext[3]{$\rho^*$ is an information-theoretic lower bound.}

In this section, we further review the full-literature of BAI-LB and variants of CPE recently proposed. Comparison between our algorithms and existing algorithms is summarized in Table~\ref{table:results}.

\paragraph{Best Arm Identification in Linear Bandits (BAI-LB).}
Soare et al.~\shortcite{Soare2014} addressed the BAI-LB in the fixed confidence setting and first provided the static allocation algorithm for BAI-LB by introducing the interesting connection between BAI-LB and \emph{G-optimal experimental design}~\cite{pukelsheim2006}.
Tao et al.~\shortcite{Tao2018} analyzed the novel randomized estimator based on the convex relaxation of G-optimal design, and devised the adaptive algorithm whose sample complexity depends linearly on the dimension $d$.
Xu et al.~\shortcite{Xu2018} proposed a fully adaptive algorithm inspired by UGapE~\cite{Gabillon2012}.
Karnin~~\shortcite{Karin16} analyzed the explore-verify algorithms for several settings of BAI including linear, dueling, unimodal, and graphical bandits.
Fiez et al.~\shortcite{Fiez2019} introduced the transductive BAI-LB, and proposed the first non-asymptotic algorithm that nearly achieves the information-theoretic lower bound.
Zaki et al.~\shortcite{zaki2019towards} proposed a generalized LUCB algorithm for BAI-LB with sample complexity analysis for the special cases of two and three arms.
%\yihan{Revised this sentence.}
Degenne et al.~\shortcite{Degenne+2020} designed the first asymptotically optimal sampling rules for BAI-LB.
Katz-Samuels et al.~\shortcite{Katz+2020} proposed near-optimal algorithms for both fixed confidence and fixed budget settings by leveraging the theory of suprema of empirical processes.
Although they further discussed a computationally efficient algorithm for CPE-MB with linear rewards, it cannot be applied to CPE-BL since the set of measurement vectors considered in \cite{Katz+2020} is also exponentially large for CPE-BL.
Zaki, Moha, and Gopalan~\shortcite{Zaki+2020} proposed the explicitly described algorithm by using tools from game theory and no-regret learning to solve minimax games.

Despite the recent advances in BAI-LB described above, no existing algorithms can  solve CPE-BL efficiently, since their computation for distribution $\lambda_*$ and  ways to maintain alive action set
%	\wei{arm set or action set?}
 cost exponential time in the settings of CPE-BL and CPE-PL. Moreover, no existing algorithms can deal with the nonlinear reward functions or partial linear feedback in CPE-PL.
%\yihan{Revised this sentence.}
%_\yuko{I will add "Explicit Best Arm Identification in Linear Bandits Using No-Regret Learners" after reading it}

\paragraph{Combinatorial Pure Exploration (CPE).}
For variants in CPE, Huang et al.~\shortcite{huang2018} designed the algorithm for CPE with continuous and separable reward functions. Although their model considers nonlinear rewards, it cannot deal with either full-bandit or partial linear feedback and thus cannot be applied to our settings. Recently, the best arm identification in cascading bandits is investigated by Zhong et al.~\shortcite{Zhong+2020}. Their problem setting can be seen as another model of top-$k$ identification with partial feedback.
However, their algorithm is designed for identifying the top-$k$ actions and thus it cannot be applied to the case under other combinatorial structures such as paths, mathchings, and matroids.
Kuroki et al.~\shortcite{Kuroki2020} studied a special sub-problem of CPE-PL, where the offline optimization is the densest subgraph problem and the learner observes full-bandit feedback for a set of edges. Their algorithms and analysis use the property of the average degree, and thus they cannot be directly applied to solve either CPE-BL or CPE-PL. 
Chen et al.~\shortcite{chen2020cpe_db} studied an adaptation of CPE to the dueling bandit setting, where at each timestep the leaner plays a pair of edges (base arms) in a bipartite graph and observes a random outcome of the comparison  with the objective of identifying the optimal matching. They only consider relative feedback between two compared base arms in the matching case, and thus their algorithms cannot be applied to either CPE-BL or CPE-PL.    
%\wei{Can we claim that combinatorial dueling bandit is a special case of CPE-PL? It does have some partial feedback right?}
%\yihan{I claim that ~\cite{Kuroki2020} is a special sub-problem of CPE-PL and \cite{chen2020cpe_db} is an adaption of CPE to the dueling bandit setting. Please check it.}

\section{Illustration Examples for CPE-PL}\label{apx:model_example}

%\begin{wrapfigure}[19]{r}[5mm]{50mm}
\begin{figure}[t]
	\centering
	\includegraphics[keepaspectratio,width=40mm]
	{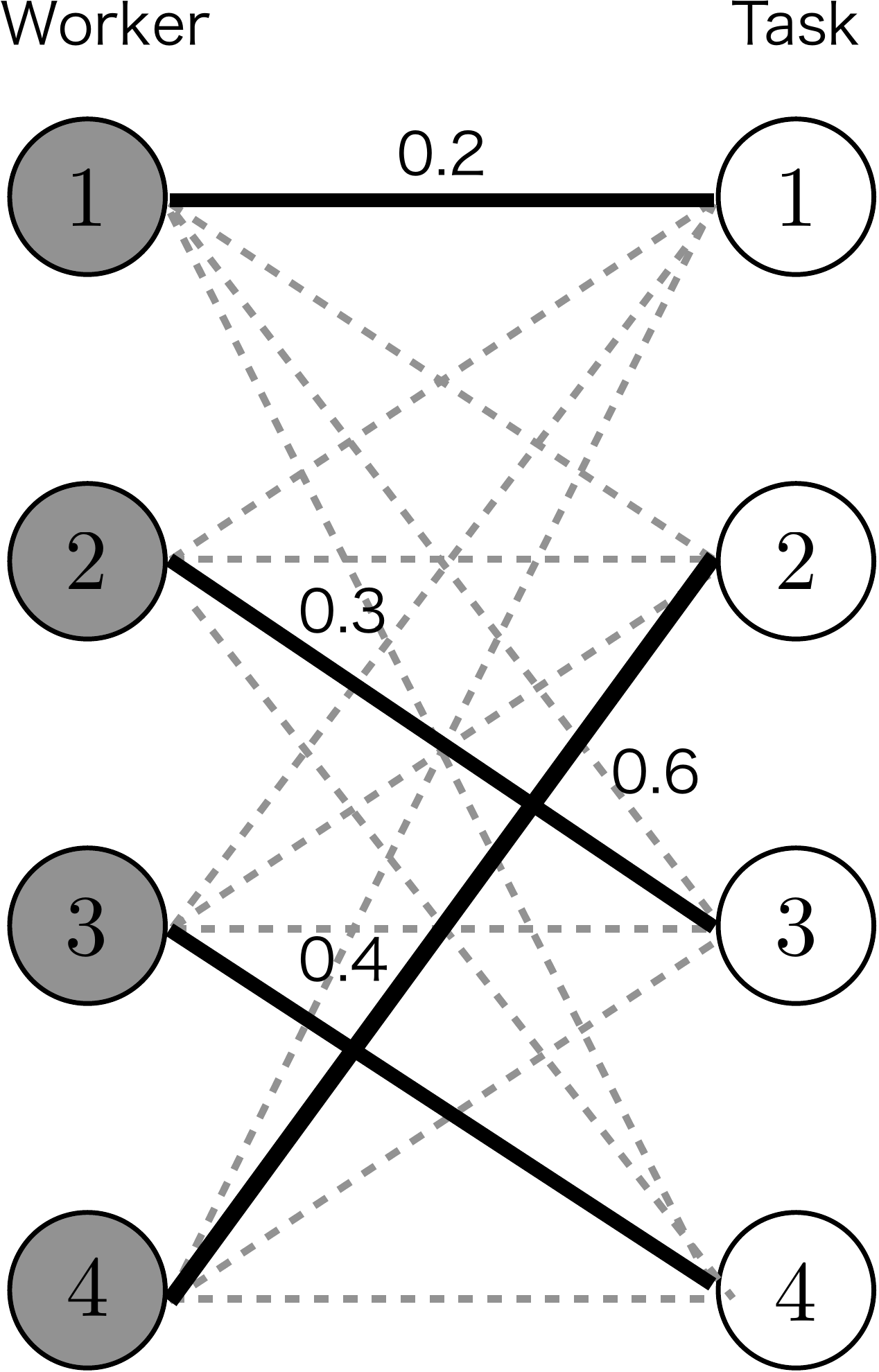}\caption{Example of the crowdsourcing application.}\label{Fig:crowd}
\end{figure}
%\end{wrapfigure}

In this section, we will provide specific examples to illustrate the feedback model of CPE-PL by considering the crowdsourcing scenario (see Figure~\ref{Fig:crowd}).
There are $N=4$ workers and $M=4$ tasks, which can be represented as a complete bipartite graph $G=(L, R, E)$.
In this bipartite graph, each edge corresponds to a pair between a worker and a task, and its edge-weight corresponds to the utility that is unknown to the learner.
In the model of CPE-PL, each base arm $i \in \{1,2,...,d \}$ prescribes each edge $e \in E$ where $d=|E|$ and its mean is the unknown edge-weight $\theta_e$.
Let edges $(1, 1), (2, 3), (3, 4), (4, 2)$ be the first, second, third, fourth base arms respectively.
Suppose that random outcome at round $t$ is $\theta+\eta_t=(0.2, 0.3, 0.4, 0.6, \ldots)^\top \in \mathbb{R}^d$, and the learner will pull the action $x_t=(1, 1, 1, 1 ,0, 0,\ldots,0)^\top$.
In this case, the examples of transformation matrix $M_{x} \in \mathbb{R}^{m_x \times d}$ and corresponding random feedback $y_t=M_{x_t}(\theta+\eta_t) \in \mathbb{R}^{m_{x_t}}$ can be written as follows.

If $M_{x_t}=(1, 0, 0, \ldots,0)$, the learner can observe only first base arm, that is,
\[
M_{x_t}(\theta+\eta_t)=(1,0,0,\ldots,0) \left(
     \begin{array}{c}
       0.2 \\
       0.3\\
       0.4 \\
       0.6 \\
       \vdots
     \end{array}
   \right)=0.2=y_t.
\]
If $M_{x_t}=\mathrm{diag}(x)$, the learner can obtain semi-bandit feedback, that is,
\begin{align*}
	M_{x_t}(\theta+\eta_t)=&\left(
	\begin{array}{ccccc}
	1 & 0 & 0 &0 &\cdots \\
	0 & 1 & 0 & 0& \cdots\\
	0 & 0 & 1 & 0 & \cdots \\
	0 & 0 & 0 & 1 & \cdots
	\end{array}
	\right) \left(
	\begin{array}{c}
	0.2 \\
	0.3\\
	0.4 \\
	0.6 \\
	\vdots
	\end{array}
	\right)
	\\
	=& \left(
	\begin{array}{c}
	0.2 \\
	0.3\\
	0.4 \\
	0.6 
	\end{array}
	\right)
	\\
	=&y_t.
\end{align*}
If $M_{x_t}=x_t^\top$, the learner can only observe the sum of the rewards (i.e., full-bandit feedback), that is,
\[
M_{x_t}(\theta+\eta_t)=(1,1,1,1,0,\ldots,0)  \left(
     \begin{array}{c}
       0.2 \\
       0.3\\
       0.4 \\
       0.6 \\
       \vdots
     \end{array}
   \right)= 1.5=y_t.
\]
For another examples, the learner may obtain the sum of the rewards from two base arms as follows,
\begin{align*}
	M_{x_t}(\theta+\eta_t)=&\left(
	\begin{array}{ccccc}
	1 & 1 & 0 &0 &\cdots \\
	0 & 0 & 1 & 1& \cdots
	\end{array}
	\right) \left(
	\begin{array}{c}
	0.2 \\
	0.3\\
	0.4 \\
	0.6 \\
	\vdots
	\end{array}
	\right)
	\\
	=& \left(
	\begin{array}{c}
	0.5 \\
	1.0
	\end{array}
	\right)
	\\
	=&y_t.
\end{align*}

Note that reward functions in CPE-PL can be nonlinear, while feedback model is linear as illustrated above.

% \begin{figure}[htbp]
% %  \begin{minipage}{0.5\hsize}
% %   \begin{center}
% %   \includegraphics[width=40mm]{img/crowdsourcing_crop.pdf}
% %   \end{center}
% %   \caption{Example of the online ranking application.}\label{Fig:ranking}
% %  \end{minipage}
%  \begin{minipage}{1.0\hsize}
%   \begin{center}
%   \includegraphics[width=40mm]{img/crowdsourcing_crop.pdf}
%   \end{center}
% \caption{Example of the crowdsourcing application.}\label{Fig:crowd}
%  \end{minipage}
% \end{figure}

% If
% \[
% M_{x_t} = \left(
%     \begin{array}{ccc}
%       a & b & c \\
%       d & e & f \\
%       g & h & i
%     \end{array}
%   \right)
% \], the learner can observe only first base arm, that is,
% \[
% M_{x_t}(\theta+\eta_t)=[1,0,0,\ldots,0] [0.2, 0.3, 0.4, 0.6, \ldots]^{\top}=0.2=y_t.
% \]

\section{More Discussion on the Optimality of \textsf{PolyALBA}} \label{apx:polyalba_optimality}
For BAI-LB, \citet{Fiez2019} propose the first sample complexity lower bound 
\begin{align*}
	\mathbb{E}_{\theta}[\tau_{\delta}] \geq \min_{\lambda \in \triangle(\cX)}\max_{x\in \mathcal{\cX} \setminus\{x^*\}} \frac{||x^*-x||_{M(\lambda)^{-1}}^2}{((x^*-x)^{\top}\theta)^2} \log(1/2.4 \delta), 
\end{align*}
and a nearly optimal algorithm \textsf{RAGE} with sample complexity
\begin{align*}
	O \left( \sum^{\left \lfloor \log_2(4/\Delta_{\min})  \right \rfloor}_{t=1} 2(2^t)^2\tilde{\rho}(\mathcal{Y}(S_t)) \log(t^2|\mathcal{X}|^2/\delta) \right),
\end{align*}
where $S_t=\{x \in \cX \mid (x^*-x)^{\top}\theta\leq 4 \cdot 2^{-t} \}$, $\mathcal{Y}(S_t)=\{ x-x' \mid \forall x,x' \in S_t, x \neq x'\}$ and $\tilde{\rho}(\mathcal{Y}(S_t))=\min_{\lambda \in \triangle(\cX)}\max_{v \in \mathcal{Y}(S_t) }\|v\|_{\Mlin}$. \citet{Fiez2019} prove that this upper bound for \textsf{RAGE} matches the lower bound  with a logarithmic factor. 

Below we show that for a family of instances, the sample complexity of \textsf{PolyALBA} matches that of \textsf{RAGE} and also achieves near-optimality.
Consider a family of instances such that $\Delta_{\left \lfloor d/ 2^{(t-2)} \right \rfloor+1}=4\cdot2^{-t}, \  t=2, 3, \dots, \log_2(\frac{4}{\Delta_{\min}})$,
%\wei{It is not intuitive for me to understand this instance with such a $\Delta$ definition. Why not directly define the instances, such as the arm distributions? Or
%	define $\Delta_i$ directly? Are you saying that you are considering all instances satisfying the above equality? How to construct such instances? 
%	In particular, when $t=2$, we have $\Delta_{d+1} = 1$, when $t=\log_2(\frac{4}{\Delta_{\min}})$, $\Delta_{\lfloor d\Delta_{\min}\rfloor+1} = \Delta_{\min}$.
%	What about other $\Delta_i$ where $i$ is not one of the $\left \lfloor d/ 2^{(t-2)} \right \rfloor+1$? Do above have to be equality, or inequalities are ok?
%}
which can be easily found in practical sub-problems. For example, in the \emph{Multi-Bandit} case~\cite{Gabillon+2011_multibandit}, we are given base arms numbered by $1,\dots,10$, and a feasible super arm consists of two base arms respectively from $\{1,\dots,5\}$ and $\{6,\dots,7\}$, where $d=10$ and $|\cX|=25$. Let $\theta=( 2.5, 2, 1.5, 1, 0.5, 0.625, 0.5, 0.375, 0.25, 0.125 )^\top$. 
We use $x_i$ and $A_i$ to denote the indicator vector and the set of base arm indices for the $i$-th best super arm.
Then, $A_1=\{1,6\}, x_1^\top \theta=3.125, A_2=\{1,7\}, x_2^\top \theta=3, A_3=\{1,8\}, x_3^\top \theta=2.875, A_6=\{2,6\}, x_6^\top \theta=2.625, A_{11}=\{3,6\}, x_{11}^\top \theta=2.125$, which belongs to the considered family of instances.
%\yihan{Added this paragraph.}

In such family of instances, our \textsf{PolyALBA} performs similar elimination procedure as \textsf{RAGE}. Specifically, since the sample complexity upper bound of \textsf{PolyALBA} guarantees that for epoch $q$, any action $x$ with $\Delta_x \geq \Delta_{\left \lfloor d/ 2^q \right \rfloor +1}$ will be discarded, we have that $\forall x' \in S_q, \Delta_{x'} \leq \Delta_{\left \lfloor d/ 2^{q-1} \right \rfloor +1}=2 \cdot 2^{-q}, \  q=1, 2, \dots, \log_2(\frac{4}{\Delta_{\min}})$, which is the same to the gap constraint for $S_t$ in \textsf{RAGE}. 
Then, in epoch $q= 2, \dots, \log_2(\frac{4}{\Delta_{\min}})$, \textsf{PolyALBA}  essentially performs the same procedure as \textsf{RAGE}, while in epoch $q=1$, \textsf{PolyALBA} performs a higher samples (the $\xi_{\max}({\tilde{M}^{-1}({\lambda})})$-related term) than \textsf{RAGE}. Formally, in such instances \textsf{PolyALBA} has sample complexity
\begin{align*}
	O \bigg( & \sum^{\left \lfloor \log_2(4/\Delta_{\min})  \right \rfloor}_{t=2} 2(2^t)^2\tilde{\rho}(\mathcal{Y}(S_t)) \log(t^2|\mathcal{X}|^2/\delta) \\& + m d  \xi_{\max}({\tilde{M}^{-1}({\lambda})}) \tilde{\rho}(\mathcal{Y}(S_1)) \log(|\mathcal{X}|^2/\delta) \bigg).
\end{align*}
For sufficiently small $\Delta_{\min}$ (increase $d$ in the above Multi-Bandit case to obtain small $\Delta_{\min}$), %\yihan{Revised this sentence.}
the additional term related to $\xi_{\max}({\tilde{M}^{-1}({\lambda})})$ is absorbed and the result is the same to that of \textsf{RAGE}, which matches the lower bound within a logarithmic factor.
The sample complexity of \textsf{PolyALBA} shows superiority over other heavily $\Delta_{\min}$-dependent algorithms~\cite{Soare2014,Karin16,Kuroki+19}, and the additional term related to $\xi_{\max}({\tilde{M}^{-1}({\lambda})})$ can be viewed as a cost for achieving computational efficiency.

%==================
%\yuko{I changed this paragraph for second case to see the worst-case optimality, can you please check it?}
We also remark that \textsf{PolyALBA} is close to the lower bound in the worst-case instances and our sample complexity has an interesting relation with that of G-allocation strategy for BAI-LB \cite{Soare2014}.
%\emph{oracle sample complexity} 
% \citet{Soare2014} proved that the oracle sample complexity is $O(H_{\mathrm{LB}}\log(K^2/\delta))$ , where $H_{\mathrm{LB}}$ is defined and bounded as follows.
\citet{Soare2014} proved the following lemma to bound the \emph{oracle complexity} $H_{\mathrm{LB}}$ defined as follows, which also appears in the information theoretic lower bound~\cite{Fiez2019}.
\[
H_{LB}=\min_{\lambda \in \triangle(\cX)} \max_{x\in \mathcal{\cX} \setminus\{x^*\}}\frac{ \|x^*-x\|^2_{M(\lambda)^{-1}}}{((x^*-x)^{\top}\theta)^2}.
\]
\begin{lemma}[Lemma 2, ~\citet{Soare2014}]
    Given an arm set $\cX \subseteq \mathbb{R}^d$ and parameter $\theta$, the complexity $H_{\mathrm{LB}}$ is such that
    \[
    \max_{x\in \mathcal{\cX} \setminus\{x^*\}}\frac{ \|x^*-x\|^2_2}{L_x \Delta^2_{\min}}\leq H_{\mathrm{LB}} \leq \frac{4d}{\Delta_{\min}^2},
    \]
    where $L_x$ is the upper bound of the $\ell_2$-norm of any $x \in \cX$.
%    \yihan{I changed $L$ to $L_x$ because the notation in \cite{Soare2014} is different from ours.}
    Furthermore, if $\cX$ is the canonical basis, the problem reduces to a MAB and $\sum_{i=1}^{|\cX|}1/\Delta_i \leq H_{\mathrm{LB}} \leq 2\sum_{i=1}^{|\cX|}1/\Delta_i$.
\end{lemma}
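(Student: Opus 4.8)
The plan is to prove the two-sided estimate by separating the outer minimization over $\lambda \in \triangle(\cX)$ from the inner maximization, handling the upper and lower directions by different tools, and then specializing $M(\lambda)$ to a diagonal matrix for the canonical-basis (MAB) case. For the upper bound I would first exploit that every denominator satisfies $((x^*-x)^{\top}\theta)^2 = \Delta_x^2 \ge \Delta_{\min}^2$, so
\[
H_{\mathrm{LB}} \le \frac{1}{\Delta_{\min}^2}\,\min_{\lambda \in \triangle(\cX)}\,\max_{x \in \cX\setminus\{x^*\}} \|x^*-x\|_{\Mlin}^2 .
\]
Then I would decouple the shifted vector through $\|x^*-x\|_{\Mlin}^2 \le 2\|x^*\|_{\Mlin}^2 + 2\|x\|_{\Mlin}^2 \le 4\max_{y\in\cX}\|y\|_{\Mlin}^2$ and evaluate the remaining quantity at the G-optimal design. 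By the equivalence theorem for optimal experimental designs (Proposition~\ref{proposition:equivalence}), $\min_{\lambda}\max_{y\in\cX}\|y\|_{\Mlin}^2 = d$, which delivers $H_{\mathrm{LB}} \le 4d/\Delta_{\min}^2$.

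For the general lower bound I would control the \emph{top} eigenvalue of $M(\lambda) = \E_{z\sim\lambda}[z z^{\top}]$: for any unit $v$ we have $v^{\top} M(\lambda) v = \E_{z\sim\lambda}[(v^{\top}z)^2] \le \max_{z\in\cX}\|z\|_2^2$, so $M(\lambda)^{-1} \succeq (\max_z\|z\|_2^2)^{-1} I$ and hence $\|x^*-x\|_{\Mlin}^2 \ge \|x^*-x\|_2^2 / L_x$ for every $\lambda$, where $L_x$ denotes the relevant norm bound. Because this inequality is uniform in $\lambda$, substituting the competitor realizing $\Delta_{\min}$ into the inner maximum (where $\Delta_x = \Delta_{\min}$ exactly) survives the outer minimization and produces the stated lower bound; reconciling the precise index inside $\max_x$ and the exponent on $L_x$ against the definition of $L_x$ is a bookkeeping step I would verify.

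For the canonical-basis specialization I would set $\cX = \{e_1,\dots,e_d\}$, so that $M(\lambda) = \mathrm{diag}(\lambda)$ and, writing $x^* = e_{i^*}$, the key quantity collapses to $\|e_{i^*}-e_i\|_{\Mlin}^2 = 1/\lambda_{i^*} + 1/\lambda_i$. This turns $H_{\mathrm{LB}}$ into the explicit program $\min_{\lambda\in\triangle(\cX)}\max_{i\ne i^*}(1/\lambda_{i^*}+1/\lambda_i)/\Delta_i^2$. I would solve it by a balancing (water-filling) argument: at the optimum the active ratios are equalized, forcing the allocation to scale inversely with $\Delta_i^2$ up to the mass reserved for $i^*$, and comparing the resulting value against the benchmark sum yields the constant-factor sandwich, with the factor $2$ coming from the shared $1/\lambda_{i^*}$ term.

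The step I expect to be the main obstacle is the canonical-basis optimization: pinning the constants requires characterizing the minimizer of a $\max$ of coupled ratios in which the single term $1/\lambda_{i^*}$ appears in every constraint and must be budgeted against the remaining mass, so the balancing argument must be executed carefully rather than dimension-by-dimension. By contrast, the upper bound is essentially a direct invocation of Kiefer--Wolfowitz together with the parallelogram-type inequality, and the general lower bound is a one-line eigenvalue estimate once the norm convention for $L_x$ is fixed.
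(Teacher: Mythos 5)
First, a structural note: the paper contains no proof of this lemma at all --- it is quoted (with some transcription slips) from Soare et al.\ (2014), so there is no in-paper argument to compare against; your proposal has to be judged against what is actually provable.

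Your upper bound is correct and is the standard route: relax every denominator to $\Delta_{\min}^2$, bound $\|x^*-x\|^2_{M(\lambda)^{-1}} \le \bigl(\|x^*\|_{M(\lambda)^{-1}}+\|x\|_{M(\lambda)^{-1}}\bigr)^2 \le 4\max_{y\in\cX}\|y\|^2_{M(\lambda)^{-1}}$, and invoke Kiefer--Wolfowitz to get $4d/\Delta_{\min}^2$. Your canonical-basis reduction is also right: $M(\lambda)=\mathrm{diag}(\lambda)$ collapses the program to $\min_{\lambda}\max_{i\ne i^*}(1/\lambda_{i^*}+1/\lambda_i)/\Delta_i^2$; the allocation $\lambda_i\propto 1/\Delta_i^2$ (with the convention $\Delta_{i^*}:=\Delta_{\min}$) gives the factor-$2$ upper bound, and the mediant inequality $\max_i a_i/b_i\ge\sum_i a_i/\sum_i b_i$ combined with balancing the shared $1/\lambda_{i^*}$ term against the remaining mass gives the matching lower bound. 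Note, however, that what this correctly produces is a sandwich in $\sum_i \Delta_i^{-2}$, not $\sum_i\Delta_i^{-1}$ as printed in the paper --- the missing square is a transcription error you should not try to reproduce.

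The genuine problem sits exactly at the step you deferred as ``bookkeeping.'' Your eigenvalue estimate $M(\lambda)\preceq(\max_{z\in\cX}\|z\|_2^2)\,I$ is fine and uniform in $\lambda$, but what it delivers is $H_{\mathrm{LB}}\ge\max_{x}\|x^*-x\|_2^2/(L\,\Delta_x^2)$ --- each arm's distance paired with \emph{its own} gap --- or, specializing to the arm attaining the minimum gap, $H_{\mathrm{LB}}\ge\|x^*-x_{(2)}\|_2^2/(L\,\Delta_{\min}^2)$. It cannot deliver the displayed claim, which pairs $\max_x\|x^*-x\|_2^2$ with $\Delta_{\min}^2$, because that claim is false for the $H_{\mathrm{LB}}$ defined here: take arms $\{1,\,1-\epsilon,\,-1\}$ in $\mathbb{R}^1$ with $\theta=1$, so that both ratios appearing in $H_{\mathrm{LB}}$ equal $1/\E_{z\sim\lambda}[z^2]$ and hence $H_{\mathrm{LB}}=1$, while the displayed lower bound is $4/\epsilon^2$. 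So your plan, executed carefully, proves the correct (Soare et al.) version of the inequality, and no bookkeeping will upgrade it to the version printed in this statement; the mismatch of indices is a defect of the transcription, not of your argument (likewise, with $L_x$ defined as a bound on the $\ell_2$-norm rather than its square, your eigenvalue step shows it must enter squared).
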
\citet{Soare2014} designed the G-allocation strategy and its sample complexity is $O\left(\frac{4d}{\Delta_{\min}^2}\log\frac{|\cX|}{\delta}\right)$, which shows that G-allocation strategy is optimal within logarithmic terms for instances where the worst-case value of  $H_{\mathrm{LB}}$ is given (See Theorem 1 in their paper).
For instances with $\Delta_i=\Delta_{\textup{min}}, \forall i>1$, 
\textsf{PolyALBA} has the sample complexity of $\tilde{O}\left(\frac{md^2 \xi_{\max}({\tilde{M}^{-1}({\lambda})})}{\Delta^2_{\min}}\log \frac{|\cX|}{\delta} \right)$, which matches
that of G-allocation strategy  up to a factor of $md\xi_{\max}({\tilde{M}^{-1}({\lambda})}))$ (when we ignore the $\log \log$ terms).
\textsf{PolyALBA} also employs G-optimal design for its static phase but we restrict its support in order to make it running in polynomial time, which causes the additional term related to $\xi_{\max}({\tilde{M}^{-1}({\lambda})})$.
Note that the lower bound~\cite{Fiez2019} does not consider time complexity, and the gap in the additional term may be a price paid for achieving computational efficiency.

To sum up, to our best knowledge, \textsf{PolyALBA} is the first polynomial-time adaptive algorithm that works for CPE-BL with general combinatorial structures and achieves nearly optimal sample complexity for a family of problem instances.
It is very interesting to study a lower bound for polynomial-time CPE-BL algorithms and it remains open to design efficient algorithms with instance-wise optimal sample complexity.

%\section{Preliminary}

% \paragraph{Notation.}
% Let $[d]=\{1,2,\dots, n\}$ for an integer $n$.
% For a vector $x \in \mathbb{R}^n $ and a matrix $B\in \mathbb{R}^{n\times n}$, let $\|x\|_B=\sqrt{x^\top Bx}$. 
% For a vector $\theta \in \mathbb{R}^n$
% and a subset $M \subseteq [d]$, 
% we define $\theta(M)  =\sum_{e\in M} \theta(e)$. 
% Let ${\bf M}_t=(M_1, M_2, \ldots, x_t) \in \cX^t$ be a sequence of super-arms
% and $(r_{M_1}, \ldots, r_{x_t}) \in \mathbb{R}^{ t}$ be the corresponding sequence of observed rewards.
% Let $\bm{\chi}_{\sm {$M$}} \in \{0,1\}^n$ denote the indicator vector of super-arm $x \in  {\cal M}$;
% for each $e \in [d]$, $\chiM(e)=1$ if $e \in M$ and $\chiM(e)=0$ otherwise.

%\clearpage
\section{Naive Reduction of Top-$k$ and Analysis of a UCB-based Algorithm for CPE-BL}\label{apx:clucb}

 \begin{algorithm}[ht]
  \caption{Combinatorial lower-upper naive confidence bound (\textsf{CLUNCB})}\label{alg:CLUCB}%
%	\caption{Ellipsoid Confidence Bound Algorithm ({\tt SAQM})}\label{alg:ECB}
	\SetKwInOut{Input}{Input}
	\SetKwInOut{Output}{Output}
	\Input{ Accuracy $\epsilon>0$, confidence level $\delta \in (0,1)$}
	
		%\STATE \textbf{procedure} \textsc{Greedy}($h$, $k$):
    \textbf{Initialization}  For each $e \in [d]$, pull $x_e \in \cX$  such that $e \in x_e$ once. Initialize $\Atl$ and $b_t$;
    
	\While{ $\widetilde{\theta}_t^\top \widetilde{x}_t-\widetilde{\theta}^{\top}\widehat{x}^*_t \leq \epsilon$ is not true}{
	
            $t \leftarrow t+1$;
    
            $\widehat{x}^*_t   \leftarrow  \argmax_{x \in {\cX}} \widehat{\theta}_t^{\top} x$;
        	
            Set ${\rm rad}_t(e)=C_t \sqrt{\Atinl(e,e)} $ for all $e \in [d]$;
            
            % \red{Compute confidence radius ${\rm rad}_t(e)=R\sqrt{\frac{2k\log\left(\frac{4nt^3}{\delta} \right) }{T_e(t)} } $ for all $e \in [d]$ (classical setting);}

            \For{$e=1,\ldots, d$ }{
            	\textbf{if} $e \in \widehat{x}^*_t   $ \textbf{then} $\widetilde{\theta}_t(e) \leftarrow \thetahat(e)-{\rm rad}_t(e)$;
            	
                \textbf{else} $\widetilde{\theta}_t(e) \leftarrow \thetahat(e) +{\rm rad}_t(e)$;
            }

             $\widetilde{x}_t \leftarrow  \argmax_{x \in \cX} \widetilde{\theta}_t^{\top}x $;

            $p_t  \leftarrow \argmax_{e \in  (\widetilde{x}_t \setminus \widehat{x}^*_t)   \bigcup (\widehat{x}^*_t \setminus \widetilde{x}_t)} {\rm rad}_t(e)$;
            
    %         $l_t \leftarrow \argmin_{ e \in [d]} T_e(t)$;
            
    % %        $T_{t}(p_t) \leftarrow T_{t-1}(p_t)+1$;
            
            Sample any $x_t \in \cX$ such that $p_t \in x_t$; 

            Update $\Atl$, $b_t$ and $\widehat{\theta}_t$;

%    \STATE{$Z_t^{\alpha_t} \leftarrow $ $\alpha_t$-approximation of ${\rm max}_{x \in {\cX}} \| \bm{\chi}_{\sm M}\|_{\Atin}$}

    }
		%\WHILE{$\sum_u x_u \le k$}
	%	\STATE{Find $u \in \argmax_{u \in U^{\prime}} h(\bix+\bie_u) - h(\bix)$.}
	%	\STATE{$\bix \leftarrow \bix + \bie_u$.}
	%	\STATE{$U^{\prime} \leftarrow U^{\prime} \setminus \{u\}$.}
		%\STATE{ \textbf{return} $\boldmath{X}$.} 
    \textbf{Return }{$\texttt{Out} \leftarrow \widehat{x}^*_t$}
\end{algorithm}

In this section,
we briefly explain a naive reduction to the classic CPE-MB
for CPE-BL in the top-$k$ setting, and note that there is no simple reduction for general CPE-BL by showing an undesirable property of a UCB-based algorithm with a regularized least-square estimator.

We first remark that with only $O(k)$ more samples, the problem of top-$k$ identification with full-bandit feedback can be solved by classic top-$k$ algorithms in which base arms are queried.
Suppose that an algorithm has a sample complexity of $C_{\delta, \Delta}$ in classic setting, it yields a complexity of $\tilde{O}(k \cdot C_{\delta, \Delta} )$ for the full-bandit setting, where $\tilde{O}$ omits some log factors.
This is due to the fact that the unbiased estimate can be obtained for the difference between the two base arms by comparing two $k$-base arm queries with one base arm difference.
Formally, with any fixed base arm $i_0$, one can get an unbiased estimate for the gap $\theta_j -\theta_{i_0}$ with $O(k)$ times larger variance by querying two super-arms $S \cup \{j\}$ and $S \cup \{i_0\}$ for $S \subseteq [d]$ such that $|S|=k-1$ and $j, i_0 \notin S$.
Therefore, when $C_{\delta, \Delta}$ has dependence of $\sum_{i\in [d]} \Delta^{-2}_i$, we also have a sample complexity which has dependence of $\sum_{i\in [d]} \Delta^{-2}_i$for top-$k$ case with full-bandit feedback.
However,
for more complex cases such as matroid, matroid intersection,
and $s$-$t$ path,
we cannot use such a reduction due to its combinatorial constraint.

We show that with a simple modification using the regularized least-square estimator,
CLUCB algorithm~proposed in~\cite{Chen2014} can work for CPE-BL for general constraints such as top-$k$, matroid, matroid intersection, $s$-$t$ path, and it is a polynomial-time $(\epsilon,\delta)$-PAC algorithm.
However, we prove that this naive adaption can be sub-optimal and the sample complexity depends on $\frac{1}{\Delta_{\min}^2}$ in the worst case.

\subsection{Preliminary}
%\paragraph{Regularized least square estimator.}
We call an algorithm \emph{fully adaptive} if it changes the arm-selection strategy based on the past observation at all rounds. For such an adaptive algorithm, we cannot use the ordinary least-square estimator for $\theta \in \mathbb{R}^d$ as an unbiased estimator. Instead we will use the regularized least-square estimator.
If the sequence of super-arm selections $\textbf{x}_t= (x_1, \ldots, x_t)$ is adaptively determined based on the past observations, the regularized least-square estimator is given by
\begin{align}\label{OLS}
    \thetahat=\Atinl b_{{\bf x}_t},
\end{align}
where $\Atl$ and $b_{{\bf x}_t}$ is defined by 
\[
\Atl=\iota I+\sum_{i=1}^t x_i x_i^{\top},
\ \ \text{and} \ \
b_{{\bf x}_t}=\sum_{i=1}^tx_ir_i \in \mathbb{R}^n.
\]
for regularization parameter $\iota>0$ and the identity matrix $I$.
If we set $\iota=0$ and we are allowed to sample a base arm, i.e., unit vector at all rounds, it is easy to see that $\Atl(i,i)=T_i(t)$ for $i \in [d]$ and $\Atl(i,j)=0$ for $i \neq j$,
where $T_i(t)$ is the number of times that base arm $i$ is sampled before round $t+1$.

Abbasi-Yaddkori et al.~\shortcite{yadkori11} showed the high probability bound for the regularized least-squares estimator $\hat{\theta}$.
\begin{proposition}[Theorem~2 in Abbasi-Yaddkori et al.~\shortcite{yadkori11}]\label{proposi:Abbasi}
Let $\hat{\theta}_t$ be the regularized least-squares estimator.
Suppose that a noise $\eta_t$ is $\kappa$-sub-Gaussian.
If the $\ell_2$-norm of parameter $\theta$ is less than $L$,
then for all $i \in [d]$ and for every adaptive sequence $\bf{x}_t$,
\begin{align*}%\label{ineq:ellipsoid}
|x^{\top}\theta-x^{\top} \widehat{\theta}_t| \leq C_t \|x\|_{\Atinl}
\end{align*}
holds for all $t \in \{ 1,2,\ldots\}$
and $\forall x \in \mathbb{R}^n$ with probability at least $1-\delta$,
where 
\begin{align}\label{def:ct}
    C_t =\kappa \sqrt{2\log \frac{ \det(\Atl)^{\frac{1}{2}}}{\iota^{\frac{n}{2}}}} +\iota^{\frac{1}{2}}L.
\end{align}
Moreover, if $\| x \|_2 \leq \sqrt{m}$ holds for all $t>0$,
then
\begin{align}\label{ineq:ct}
    C_t \leq \kappa \sqrt{d\log \frac{ 1+tm/\iota}{\delta}} +\iota^{\frac{1}{2}}L.
\end{align}
\end{proposition}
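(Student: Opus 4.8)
The statement is the standard self-normalized tail inequality for the regularized least-squares estimator, so the plan is to follow a supermartingale / method-of-mixtures argument. First I would isolate the noise. Writing $V_t=\sum_{i=1}^{t}x_i x_i^{\top}$ (so that $\Atl=\iota I+V_t$) and $S_t=\sum_{i=1}^{t}x_i\eta_i$, the definition $b_{{\bf x}_t}=\sum_{i=1}^{t}x_i r_i=V_t\theta+S_t$ gives the clean decomposition
\begin{align*}
\thetahat-\theta=\Atinl\bigl(S_t-\iota\theta\bigr).
\end{align*}
For any fixed $x$, Cauchy--Schwarz in the $\Atinl$ geometry yields
\begin{align*}
|x^{\top}\theta-x^{\top}\thetahat|\le\|x\|_{\Atinl}\Bigl(\|S_t\|_{\Atinl}+\iota\|\theta\|_{\Atinl}\Bigr).
\end{align*}
Since $\Atl\succeq\iota I$ implies $\Atinl\preceq\iota^{-1}I$, the second term is bounded by $\iota\cdot\iota^{-1/2}\|\theta\|_2\le\iota^{1/2}L$, which is exactly the additive $\iota^{1/2}L$ appearing in $C_t$. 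Thus the whole claim reduces to a uniform-in-$t$ bound on the self-normalized quantity $\|S_t\|_{\Atinl}$.

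The hard part, and the core of the argument, is this uniform control of $\|S_t\|_{\Atinl}$. For a fixed $\lambda\in\R^d$ I would introduce
\begin{align*}
M_t^{\lambda}=\exp\!\Bigl(\lambda^{\top}S_t-\tfrac{\kappa^2}{2}\|\lambda\|_{V_t}^2\Bigr),
\end{align*}
and use the conditional $\kappa$-sub-Gaussianity of $\eta_t$ together with the $\cF_{t-1}$-measurability of $x_t$ (valid because ${\bf x}_t$ is an adaptive sequence) to show $\E[M_t^{\lambda}\mid\cF_{t-1}]\le M_{t-1}^{\lambda}$, so that $(M_t^{\lambda})$ is a nonnegative supermartingale with $\E[M_t^{\lambda}]\le1$. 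The method of mixtures then averages $M_t^{\lambda}$ against a mean-zero Gaussian prior with covariance $(\kappa^2\iota)^{-1}I$; this preserves the supermartingale property, and the Gaussian integral evaluates in closed form to
\begin{align*}
\bar M_t=\Bigl(\tfrac{\det(\iota I)}{\det(\Atl)}\Bigr)^{1/2}\exp\!\Bigl(\tfrac{1}{2\kappa^2}\|S_t\|_{\Atinl}^2\Bigr).
\end{align*}
Applying Ville's maximal inequality to the nonnegative supermartingale $\bar M_t$ gives $\Prob[\exists t\ge1:\bar M_t\ge\delta^{-1}]\le\delta$; solving the complementary event $\bar M_t<\delta^{-1}$ for $\|S_t\|_{\Atinl}$ produces
\begin{align*}
\|S_t\|_{\Atinl}^2\le2\kappa^2\log\frac{\det(\Atl)^{1/2}\,\iota^{-d/2}}{\delta}
\end{align*}
simultaneously for all $t\ge1$ with probability $1-\delta$. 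Combining this with the Cauchy--Schwarz bound gives the first inequality with the stated $C_t$.

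For the refinement \eqref{ineq:ct}, I would bound the log-determinant deterministically. The hypothesis $\|x_i\|_2\le\sqrt{m}$ gives $\mathrm{tr}(V_t)=\sum_{i=1}^{t}\|x_i\|_2^2\le tm$, and AM--GM applied to the eigenvalues of $\Atl=\iota I+V_t$ then yields $\det(\Atl)\le(\iota+tm/d)^{d}$, so that $\det(\Atl)^{1/2}\iota^{-d/2}\le(1+tm/\iota)^{d/2}$. Taking logarithms converts the determinant term into $\tfrac{d}{2}\log(1+tm/\iota)$, which reproduces the $\kappa\sqrt{d\log((1+tm/\iota)/\delta)}$ form. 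I expect the only genuine obstacle to be verifying the supermartingale property and carrying out the mixture integral exactly; once $\|S_t\|_{\Atinl}$ is controlled, the remaining steps are routine linear algebra and an eigenvalue/determinant estimate.
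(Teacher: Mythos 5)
Your proof is correct, and it is essentially the canonical argument for this statement: the paper itself gives no proof of this proposition --- it imports it verbatim as Theorem~2 of Abbasi-Yadkori et al.\ (2011) --- and the self-normalized supermartingale/method-of-mixtures derivation you outline (the supermartingale $M_t^{\lambda}$, the Gaussian mixture with covariance $(\kappa^2\iota)^{-1}I$ evaluating to $\bigl(\det(\iota I)/\det(\Atl)\bigr)^{1/2}\exp\bigl(\tfrac{1}{2\kappa^2}\|S_t\|^2_{\Atinl}\bigr)$, Ville's maximal inequality, and the AM--GM eigenvalue bound $\det(\Atl)\le(\iota+tm/d)^d$) is exactly how the cited theorem is proved. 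One remark your reconstruction makes visible: the correct radius is $C_t=\kappa\sqrt{2\log\bigl(\det(\Atl)^{1/2}\iota^{-d/2}/\delta\bigr)}+\iota^{1/2}L$, with the confidence parameter $\delta$ inside the logarithm and the exponent $d/2$ on $\iota$; the paper's displayed definition of $C_t$ omits the $\delta$ and writes $n$ where it means $d$, typos that are confirmed by the reappearance of $\delta$ in the subsequent bound and that your derivation silently fixes.
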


% \renewcommand{\baselinestretch}{2}

%\paragraph{Naive confidence radiuss.}

% the event $\cE = \cap_{t=1}^{\infty}\cE_t$ occurs with probability at least $1-\delta$.
% From the Proposition~\ref{proposi1} ,

% where $C_t$ is 
% \begin{align}
%     C_t \leq R \left( \sqrt{ n \log \left( \frac{1+tm^2/\iota}{\delta} \right) } \right)+\iota^{1/2}S
% \end{align}

% With the different definition of $\mathrm{rad}_t(e) = C_t \sqrt{\Atinl(i,i)}$,
We also introduce the notion of the \emph{width} for a decision set $\cX$ defined in~\citet{Chen2014}; $\mathrm{width}(\cX)$ prescribes the size of the thinnest exchange class (see \citet{Chen2014} for detailed definition). For example, if $\cX$ are independent sets of ground set $[d]$, $\mathrm{width}(\cX) \leq 2$. 
\subsection{Analysis of CLUCB for CPE-BL}

 In the setting where each base arm is pulled at all rounds,
the confidence radius is simply defined as ${\rm rad}_t(e)=\sqrt{\frac{2\log\left(\frac{4dt^3}{\delta} \right) }{T_e(t)} } $ for all $e \in [d]$. Since we are not allowed to pull each base arm, we cannot define such a radius as the above form. However, we have concentration inequalities for each unit vector of $e$, and thus we can construct the confidence radius in the full-bandit setting.
From Proposition~\ref{proposi:Abbasi},
we can construct the high probability confidence radius as follows.

\begin{lemma}\label{lemma:concentration_ineq}
Suppose that a reward from each base arm follows a $1$-sub-Gaussian distribution for all $i \in [d]$.
For all $t>0$ and all $i \in [d]$, the confidence radius $\mathrm{rad}_t(i)$ is defined as
\begin{align}\label{eq:radius}
 \mathrm{rad}_t(i)=C_t \sqrt{\Atinl(i,i)} \ \ \ \ (\forall i \in [d]),
\end{align}
where $C_t$ is given by \eqref{def:ct}.
Let $\bold{rad}_t$ be an $d$-dimensional vector with nonnegative entries.
%whose $e$ th component is defined 
For $\bold{rad}_t$, define random event $\cE_t$ for all $t>0$ as follows.
\begin{align}\label{event}
\cE_t= \{  \forall i \in [d],\ |\theta(i)-\hat{\theta}_t(i)| \leq \mathrm{rad}_t(i) \} 
\end{align}
Then we have 
\begin{align}
    \Prob \left[ \bigcap_{t=1}^{\infty}\cE_t \right] \geq 1-\delta.
\end{align}
\end{lemma}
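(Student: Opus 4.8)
The plan is to obtain Lemma~\ref{lemma:concentration_ineq} as an immediate corollary of Proposition~\ref{proposi:Abbasi}, read as a single \emph{uniform-in-time} confidence statement. Recall that Proposition~\ref{proposi:Abbasi} guarantees one event $\cG$ of probability at least $1-\delta$ on which $|x^\top \theta - x^\top \thetahat| \leq C_t \|x\|_{\Atinl}$ holds \emph{simultaneously} for every round $t \ge 1$ and every test vector $x \in \R^d$, with $C_t$ given by \eqref{def:ct}. The crucial observation is that the quantifier order is exactly what the lemma needs: the guarantee is already over all rounds and all directions at once, so no union bound over the $d$ coordinates or over time will be required, and the final intersection incurs no additional $\delta$ loss.

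The key step is then to specialize the test vector to each standard basis vector $e_i$. Fixing the event $\cG$ first and then taking $x = e_i$, on $\cG$ we have, for every $i \in [d]$ and every $t \ge 1$,
\[
|\theta(i)-\thetahat(i)| = |e_i^\top\theta - e_i^\top\thetahat| \le C_t\, \|e_i\|_{\Atinl} = C_t\sqrt{\Atinl(i,i)} = \mathrm{rad}_t(i),
\]
where the middle identity uses $\|e_i\|_{\Atinl} = \sqrt{e_i^\top \Atinl e_i} = \sqrt{\Atinl(i,i)}$ from the definition $\|x\|_B=\sqrt{x^\top Bx}$. Since this holds for all $i$ and all $t$ on the \emph{same} event $\cG$, we conclude $\cG \subseteq \bigcap_{t=1}^{\infty} \cE_t$, and hence $\Prob\big[\bigcap_{t=1}^\infty \cE_t\big] \geq \Prob[\cG] \geq 1-\delta$, as claimed.

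Two routine points remain to be checked while carrying this out. First, to match the hypotheses of Proposition~\ref{proposi:Abbasi} I would note that a $1$-sub-Gaussian reward per base arm makes the full-bandit noise appropriately sub-Gaussian with $\kappa=1$, so the stated form of $C_t$ applies. Second, the pointwise inequality $|x^\top(\theta-\thetahat)| \leq C_t\|x\|_{\Atinl}$ is itself the Cauchy--Schwarz consequence of the ellipsoidal containment $\|\theta-\thetahat\|_{\Atl}\le C_t$ in the inner product induced by $\Atl$ (namely $|x^\top v|\le \|x\|_{\Atinl}\|v\|_{\Atl}$), but this is already packaged inside Proposition~\ref{proposi:Abbasi}, so I can invoke it directly rather than re-deriving it. I do not anticipate a genuine obstacle here: the entire content is the correct reading of the quantifiers in the self-normalized bound, and the only place where care is needed is fixing the single high-probability event \emph{before} specializing $x$ to the $d$ basis directions.
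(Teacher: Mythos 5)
Your proposal is correct and is essentially the paper's own (omitted) argument: Lemma~\ref{lemma:concentration_ineq} is read off from Proposition~\ref{proposi:Abbasi} by specializing the direction $x$ to the standard basis vectors $e_i$, using $\|e_i\|_{\Atinl}=\sqrt{\Atinl(i,i)}$, and your observation that the proposition's single event is already uniform over all $t$ and all $x$ --- so that no union bound over coordinates or rounds (and hence no $\log d$ inflation of $C_t$) is needed --- is a correct and slightly sharper reading than the paper's remark that the proof follows ``from Proposition~\ref{proposi:Abbasi} and union bounds.'' The one imprecision is your side remark that the full-bandit noise is sub-Gaussian with $\kappa=1$: with $1$-sub-Gaussian base-arm noise the scalar observation noise $x_t^{\top}\eta_t$ is only $\sqrt{m}$-sub-Gaussian (independent coordinates) or $m$-sub-Gaussian in general, so Proposition~\ref{proposi:Abbasi} should be invoked with that $\kappa$; this affects only the constant inside $C_t$, which the lemma leaves generic, and not the structure of the argument.
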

The proof is omitted since it is straightforward from Proposition~\ref{proposi:Abbasi} and union bounds.
Using the above confidence radius, we can design CLUCB-based algorithm for CPE-BL, which is detailed in Algorithm~\ref{alg:CLUCB}.
We show that Algorithm~\ref{alg:CLUCB} is $(\epsilon,\delta)$-PAC and its sample complexity bound is given in Corollary~\ref{Corollary:CLUCB}.
As can be seen, the sample complexity depends on $\Delta^{-2}_{\min}$ in the worst case.
Also, since \textsf{CLUNCB} is fully adaptive, 
we cannot completely control $\lambda_{\mathrm{C}}$ beforehand and thus $M(\lambda_{\mathrm{C}})^{-1}(e,e) \leq \lambda_{\max}(M(\lambda_{\mathrm{C}})^{-1})$ can be large.

\begin{corollary}\label{Corollary:CLUCB}
% Suppose that $\iota \leq \frac{\kappa^2}{S^2} \log \delta^{-1}$.
% We define 
% \begin{align}
% \tilde{H} =  
%   \max_{e \in [d]} \left( (\LamdaTinl(e,e) \min\left\{ \frac{9 \mathrm{width}(\cX)^2 }{\Delta^2_e},  \frac{4m^2}{ \epsilon^2} \right\}\right),
% \end{align}
% where $\iota_{\bf{x}_t} = \frac{\Atl}{t}$.
% Set ${\rm rad}_t(e)=C_t \sqrt{\Atinl(e,e)}$ for all $e \in [d]$ and $t>0$.
% Then, with probability at least $1-\delta$, the CLUCB algorithm returns the $\epsilon$-optimal set and 
 Let $\lambda_{\mathrm{C}} \in \triangle(\cX)$ be a distribution in which $\lambda_{\mathrm{C}}(x)$ represents the ratio that $x$ is pulled by \textsf{CLUNCB}.
The total number of samples $T$ is bounded as
\begin{align*}
    T  
    \!=\!  O  \left(  d \kappa^2  \tilde{H} \log  \left(  \frac{ d \kappa^2  m \tilde{H}/\iota   \!+\!   \log \delta^{-1}}{\delta}  \right)  \right)  \!,
\end{align*}
where $\tilde{H}$ is defined as 
\begin{align*}
  \tilde{H}  \!=\!  \max_{e \in [d]} \left( M(\lambda_{\mathrm{C}})^{-1} (e,e) \min  \left\{  \frac{9 \mathrm{width}(\cX)^2 }{\Delta^2_e} ,   \frac{4m^2}{ \epsilon d^2} \right\}\right) \!.
\end{align*}

\end{corollary}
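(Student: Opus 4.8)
The plan is to condition throughout on the good event $\bigcap_{t\ge 1}\cE_t$ of Lemma~\ref{lemma:concentration_ineq}, which holds with probability at least $1-\delta$, and to bound the stopping time $T$ under this event; the $(\epsilon,\delta)$-PAC correctness is established separately, so here I only control the sample count. The argument has three parts: a per-round radius lower bound at the sampled pivot, a translation of that bound into a lower bound on $\Atl$, and the resolution of the resulting implicit inequality in $T$. First I would establish the sampling lemma: at every round $t$ before termination, the pivot $p_t$ (selected in Algorithm~\ref{alg:CLUCB} as the $\mathrm{rad}_t(\cdot)$-maximizer over the symmetric difference $(\widetilde{x}_t\setminus\widehat{x}^*_t)\cup(\widehat{x}^*_t\setminus\widetilde{x}_t)$) has confidence radius bounded below by a quantity combining its gap $\Delta_{p_t}$ and the accuracy $\epsilon$. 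Adapting the exchange-class analysis of Chen et al.~\shortcite{Chen2014} to the entry-wise radius $\mathrm{rad}_t(e)=C_t\sqrt{\Atinl(e,e)}$, failure of the stopping test forces the total uncertainty on the symmetric difference to exceed the relevant gap; since that set has at most $2\,\mathrm{width}(\cX)$ elements and $p_t$ carries the largest radius, one obtains $\mathrm{rad}_t(p_t)\ge\max\{\Delta_{p_t}/(3\,\mathrm{width}(\cX)),\ \sqrt{\epsilon}\,d/(2m)\}$. Squaring and recalling $\mathrm{rad}_t(p_t)=C_t\sqrt{\Atinl(p_t,p_t)}$ gives $\Atinl(p_t,p_t)\ge (\theta^{\mathrm{thr}}_{p_t})^2/C_t^2$, where $1/(\theta^{\mathrm{thr}}_{e})^2=\min\{9\,\mathrm{width}(\cX)^2/\Delta_e^2,\ 4m^2/(\epsilon d^2)\}$ is exactly the factor appearing inside $\tilde H$.

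Next I would pass from this radius bound to $T$. Writing $\Atl=\iota I+\sum_{i\le t}x_ix_i^\top$ and noting that at the terminal round $\sum_{i\le T}x_ix_i^\top=T\,M(\lambda_{\mathrm{C}})$ for the empirical pull distribution $\lambda_{\mathrm{C}}$, monotonicity of the PSD updates gives $A^{\iota}_{{\bf x}_T}\succeq T\,M(\lambda_{\mathrm{C}})$ and hence $(A^{\iota}_{{\bf x}_T})^{-1}(e,e)\le M(\lambda_{\mathrm{C}})^{-1}(e,e)/T$ for every $e$. Combining with the sampling lemma applied at the last pivot $p$ yields $(\theta^{\mathrm{thr}}_{p})^2\le C_T^2\,M(\lambda_{\mathrm{C}})^{-1}(p,p)/T$, i.e. $T\le C_T^2\,M(\lambda_{\mathrm{C}})^{-1}(p,p)/(\theta^{\mathrm{thr}}_p)^2\le C_T^2\,\tilde H$, with $\tilde H$ the stated maximum over $e$. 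Substituting $C_T^2=O(d\kappa^2\log((1+Tm/\iota)/\delta))$ from \eqref{ineq:ct} of Proposition~\ref{proposi:Abbasi} turns this into the self-referential inequality $T\le O(d\kappa^2\tilde H\,\log((1+Tm/\iota)/\delta))$, which I would close with the standard lemma that $T\le a\log(b+cT)$ implies $T=O(a\log(b+ac))$. This delivers the claimed bound $T=O(d\kappa^2\tilde H\log((d\kappa^2 m\tilde H/\iota+\log\delta^{-1})/\delta))$.

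The main obstacle is the first part. In this full-bandit regularized setting the radius $\mathrm{rad}_t(e)=C_t\sqrt{\Atinl(e,e)}$ couples all base arms through $\Atl$ and is no longer the decoupled $1/\sqrt{T_e(t)}$ form that the original CLUCB proof exploits, so the exchange-class and $\mathrm{width}(\cX)$ argument must be re-derived against these correlated radii and the confidence event of Lemma~\ref{lemma:concentration_ineq}. A second, conceptual difficulty, and precisely the point this section means to expose, is that $\lambda_{\mathrm{C}}$ is produced by the adaptive run and cannot be fixed in advance, so $M(\lambda_{\mathrm{C}})^{-1}(e,e)$ enters as an uncontrolled factor that may be as large as $\xi_{\max}(M(\lambda_{\mathrm{C}})^{-1})$; this is what prevents \textsf{CLUNCB} from shedding the heavy $1/\Delta^2_{\min}$ dependence in the worst case and motivates the static design of \textsf{PolyALBA}.
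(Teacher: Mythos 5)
Your overall route is the same as the paper's: condition on the event of Lemma~\ref{lemma:concentration_ineq}, invoke the contrapositive of Lemma~\ref{lemma:lemma13_chen_etal14} at the pivot, convert the resulting radius lower bound into an implicit inequality for the round count via the bound \eqref{ineq:ct} on $C_t$, and resolve that inequality. The genuine problem is your ``sampling lemma.'' The exchange-class argument you appeal to is exactly Lemma~\ref{lemma:lemma13_chen_etal14}, and it yields the threshold $\max\{\Delta_e/(3\,\mathrm{width}(\cX)),\ \epsilon/(2m)\}$: the stopping test is linear in $\epsilon$ and the symmetric difference involves at most $2m$ coordinates, so no factor of $d$ and no $\sqrt{\epsilon}$ can appear. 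Consequently the quantity that the argument actually produces is $\min\{9\,\mathrm{width}(\cX)^2/\Delta_e^2,\ 4m^2/\epsilon^2\}$ --- precisely the $H_e$ the paper works with --- and not $\min\{9\,\mathrm{width}(\cX)^2/\Delta_e^2,\ 4m^2/(\epsilon d^2)\}$. Your threshold $\sqrt{\epsilon}\,d/(2m)$ appears to be reverse-engineered from the $\tilde H$ displayed in Corollary~\ref{Corollary:CLUCB}, but that display is inconsistent with the paper's own derivation (the $4m^2/(\epsilon d^2)$ there is best read as a typo for $4m^2/\epsilon^2$); if you tried to prove your sampling lemma by adapting the analysis of Chen et al., this step would fail. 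With the corrected threshold, the remainder of your argument does establish the corollary in the form that the paper's proof actually supports.

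Two smaller points. First, your terminal-round bookkeeping is a genuine (and welcome) simplification of the paper's: the paper bounds, for every base arm $e$, the last round $\tau_e$ at which $e$ is chosen as the pivot, sets $T=\max_{e\in[d]}\tau_e$, and only then replaces $({\Lambda^{\iota}_{{\bf x}_{\tau_e}}})^{-1}(e,e)$ by $M(\lambda_{\mathrm{C}})^{-1}(e,e)$ through the unproved approximation $\Lambda_{{\bf x}_T}\approx M(\lambda_{\mathrm{C}})$, whereas at the final round the identity $A^{\iota}_{{\bf x}_T}=\iota I+T\,M(\lambda_{\mathrm{C}})$ holds exactly, which is what you exploit. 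Second, there is an off-by-one you should patch: the radius lower bound at the last pivot is computed from the Gram matrix \emph{before} that pull, and $A^{\iota}_{{\bf x}_{T-1}}\preceq A^{\iota}_{{\bf x}_{T}}$ gives $(A^{\iota}_{{\bf x}_{T-1}})^{-1}\succeq(A^{\iota}_{{\bf x}_{T}})^{-1}$, i.e.\ the Loewner inequality points the wrong way for your chaining; define $\lambda_{\mathrm{C}}$ over the first $T-1$ pulls (or absorb the single extra pull into the $O(\cdot)$) and the chain closes. Your closing observation --- that $\lambda_{\mathrm{C}}$ is produced by the adaptive run itself, so $M(\lambda_{\mathrm{C}})^{-1}(e,e)$ enters as an uncontrolled factor --- is indeed exactly the message this section of the paper intends to convey.
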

\begin{proof}

First, we state the following two lemmas in Chen et al.~\shortcite{Chen2014};
Lemma~\ref{lemma:lemma12_chen_etal14} (Lemma 12 in Chen et al.~\shortcite{Chen2014}) shows that if the confidence radius is valid, then CLUCB always outputs $\epsilon$-optimal set,
and
Lemma~\ref{lemma:lemma13_chen_etal14} (Lemma 13 in Chen et al.~\shortcite{Chen2014}) implies that if the confidence radius of an arm is sufficiently small, then the arm will not be chosen as $p_t$.
Note that we have the lemmas since Lemmas~3, 5, 7, and 10 in Chen et al.~\shortcite{Chen2014} also hold for our setting where $\bold{rad}_t$ is given by \eqref{eq:radius} and the empirical mean is replaced with the least square estimator $\hat{\theta}_t$ in~\eqref{OLS}.

\begin{lemma}[Lemma 12 in Chen et al.~\shortcite{Chen2014}]\label{lemma:lemma12_chen_etal14}
If CLUCB stops on round $t$ and suppose that $\cE_t$ occurs. Then, we have $\theta^{\top}x^*-\theta^{\top}x_{\textsf{Out}}\leq \epsilon$.
\end{lemma}

\begin{lemma}[Lemma 13 in Chen et al.~\shortcite{Chen2014}]\label{lemma:lemma13_chen_etal14}
Given any $t$ and suppose that event $\cE_t$ occurs.
For any $e \in [d]$,
if $\mathrm{rad}_t(e) <\max\left\{ \frac{\Delta_e}{ 3\mathrm{width}(\cX)}, \frac{\epsilon}{2m} \right\}$, then $p_t \neq e$.
\end{lemma}

The random event $\bigcap_{t=1}^{\infty}\cE_t$ occurs with probability at least $1-\delta$ from 
Lemma~\ref{lemma:concentration_ineq}.
From Lemma~\ref{lemma:lemma12_chen_etal14},
under the event $\bigcap_{t=1}^{\infty}\cE_t$,
\textsf{CLUNCB} returns an $\epsilon$-optimal set.
Therefore,
in the rest of part, we shall assume this event holds.

Fix any arm $e \in [d]$ and let $\tau_e$ be the last round which arm $e$
 is chosen as $p_t$.
From \eqref{ineq:ct} and for a small $\iota \leq \frac{\kappa^2}{L^2} \log \delta^{-1}$,
we have 
\begin{align*}
     C_{\tau_e} \leq & \kappa \sqrt{n\log \frac{ 1+\tau_e m/\iota}{\delta}} +\iota^{\frac{1}{2}}L 
     \\
     \leq & 2\kappa \sqrt{d \log \left( \frac{1+\tau_e m/\iota }{ \delta} \right) }.
\end{align*}
By Lemma~\ref{lemma:lemma13_chen_etal14}, we have $\mathrm{rad}_{\tau_e}(e) \geq \max\left\{ \frac{\Delta_e}{ 3\mathrm{width}(\cX)}, \frac{\epsilon}{2m} \right\}$.
We define  $\Lambda_{\bf{x}_{\tau_e}} = \frac{\Atl}{\tau_e}$.
Then, we have
 \begin{align*}
 &\max\left\{ \frac{\Delta^2_e}{ 9\mathrm{width}(\cX)^2}, \frac{\epsilon^2}{4m^2} \right\} \leq \mathrm{rad}^2_t(e)\\
 &= C_{\tau_e}^2 \Atauinl(e,e) \\
 & \leq 4 \kappa^2 d \log \left( \frac{1+\tau_e m/\iota }{ \delta} \right)   \Atauinl(e,e)\\
 & = 4 \kappa^2 d \log \left( \frac{1+\tau_e m/\iota }{ \delta} \right)   \frac{\Lamdatauinl(e,e)}{\tau_e}.
 \end{align*}
 That is, we obtain
 \begin{align*}
     \tau_e \leq
     %4 \kappa^2 d  \Lamdatauinl(e,e) 
     %\min\left\{ \frac{9 \mathrm{width}(\cX)^2 }{\Delta^2_e},  \frac{4m^2}{ \epsilon^2} \right\}  \log \left( \frac{1+\tau_e m/\iota }{ \delta} \right)=
     H_e \log \left( \frac{1+\tau_e m/\iota }{ \delta} \right),
 \end{align*}
where we define $H_e= 4 \kappa^2 d  \Lamdatauinl(e,e) 
     \min\left\{ \frac{9 \mathrm{width}(\cX)^2 }{\Delta^2_e},  \frac{4m^2}{ \epsilon^2} \right\}$.
 Let $\tau' (\leq \tau_e)$ satisfying 
 \begin{align}\label{eq:tau}
     \tau_e = H_e \left( \log \left( 1+\frac{\tau' m}{\iota } \right)  +\log \frac{1}{\delta} \right).
 \end{align}
 Then, we have
  \begin{alignat}{4}\label{ineq:tau'}
 \tau' \leq \tau_e &  =H_e  \left( \log \left( 1+\frac{\tau' m}{\iota} \right) + \log\frac{1}{\delta}   \right) \notag \\
 & \leq  H_e  \left( \sqrt{ \frac{\tau' m}{\iota}}  + \log\frac{1}{\delta}   \right) .
 \end{alignat}

 Solving \eqref{ineq:tau'} for $\sqrt{\tau'}$,
 we obtain 
   \begin{align*}
 \sqrt{\tau'} \leq&  \frac{1}{2}\left( H_e \sqrt{\frac{m}{\iota}}+\sqrt{H_e^2 \frac{m}{\iota}+4 H_e \log \frac{1}{\delta}} \right) \\
 & \leq 2 \sqrt{H_e^2 \frac{m}{4\iota}+ H_e \log \frac{1}{\delta}}.
 \end{align*}
 That is, we see that 
 $\tau'=O \left( H_e^2 \frac{m}{\iota}+ H_e \log \frac{1}{\delta} \right)$,
 which shows that  
 \begin{align}\label{order:tau'}
   \log \left( \frac{1+ \tau'm}{\iota} \right)=
   O \left( \log \left( \frac{ m H_e}{\iota} +\log \frac{1}{\delta} \right) \right).
 \end{align}
Combining \eqref{order:tau'} into \eqref{eq:tau},
we obtain
\begin{align*}
    \tau_e =   O \left(H_e \log \left( \frac{ m H_e/\iota+\log \delta^{-1}}{\delta}\right) \right).
\end{align*}
The number of samples used by \textsf{CLUNCB} is $T = \max_{e \in [d]} \tau_e$.

% Since we assume that each reward distribution $\phi_i$ is a $R$-sub-Gaussian distribution, we have $\kappa^2=k\kappa^2$.
Recall that $\lambda_{\mathrm{C}} \in \triangle(\cX)$ is a distribution in which $\lambda_{\mathrm{C}}(x)$ represents the ratio that $x$ was pulled by \textsf{CLUNCB}.
Suppose that $T$ is sufficiently large such that $\Lambda_{\bf{x}_T}=\frac{\Atl}{T} \approx M(\lambda_{\mathrm{C}})$.
Define 
\begin{align*}
  \tilde{H}= \max_{e \in [d]} \left( (M(\lambda_{\mathrm{C}})^{-1}(e,e) \min\left\{ \frac{9 \mathrm{width}(\cX)^2 }{\Delta^2_e},  \frac{4m^2}{ \epsilon d^2} \right\}\right) 
\end{align*}

Then,
we have 
\begin{align*}
    T = \max_{e \in [d]} \tau_e
   = O \left( d \kappa^2  \tilde{H} \log \left( \frac{ d \kappa^2  m\tilde{H} /\iota+\log \delta^{-1}}{\delta}\right) \right).
\end{align*}
\end{proof}

\section{Entropic Mirror Descent Algorithm for Computing the Distribution $\lambda^*_{\mathcal{X}_{\sigma}}$}\label{apx:G-opt}

For completeness, we provide the algorithm for
	computing the optimal distribution $\lambda^*_{\mathcal{X}_{\sigma}} = \min_{\lambda \in \triangle(\mathcal{X}_{\sigma})}\max_{x \in \mathcal{X}_{\sigma}} x^{\top} M(\lambda)^{-1}x$ 
%	\wei{Why in the RHS of the equation, there is no $\mathcal{X}_{\sigma}$?}\yuko{I fixed the typo}
	for a given set of actions $\mathcal{X}_{\sigma} \subseteq {\mathcal{X}}$, which is used in both \textsf{ALBA}~\cite{Tao2018} and our method \textsf{PolyALBA}.
	Algorithm~\ref{alg:descent} details the entropic mirror descent algorithm proposed in \citet{Tao2018}.
% 	\wei{Need to make it clear where is this algorithm from? Is it originally proposed in \cite{Tao2018}?}
% \yuko{Where do we have to say we actually use cvxopt in python tool, we are not using this algorithm in our empirical evaluation? Here or experimental section?}
% \wei{In the experiment section I think.}

\begin{algorithm}[ht!]
	\caption{The entropic mirror descent for computing $\lambda^*_{\mathcal{X}_{\sigma}}$~\cite{Tao2018} }\label{alg:descent}
	\SetKwInOut{Input}{Input}
	\SetKwInOut{Output}{Output}
	\Input{$d$-set of base arms $[d]$, a set of super arms $\mathcal{X}_{\sigma} \subseteq {\mathcal{X}}$, Lipschitz constant $L_f$ of function $\log \det M(\lambda)$ and tolerance $\epsilon$}

	Choose , such that  $\mathrm{rank}(X)=d$ where $X=(b_1, \dots, b_d)$\;
	
	Initialize $t \leftarrow 1$ and $\lambda^{(1)} \leftarrow (1/|\mathcal{X}_{\sigma}|,\ldots, 1/|\mathcal{X}_{\sigma}|)$;
	
	\While{$|\max_{x \in \cX_{\sigma}} x^{\top} M(\lambda^{(t)})^{-1}x|-d \geq	\epsilon$}{
	$a_t \leftarrow \frac{\sqrt{2 \ln |\mathcal{X}_{\sigma}| }}{L_f \sqrt{t}}$ ;
	
	Compute gradient $G_i^{(t)} \leftarrow \mathrm{Tr}(M(\lambda^{(t)})^{-1}(x_i x_i^{\top}))$;
	
	Update $\lambda_i^{(t+1)} \leftarrow \frac{\lambda_i^{(t)}\exp{(a_t G_i^{(t)})} }{ \sum_{i=1}^{|\cX_{\sigma}|}\lambda_i^{(t)}\exp{(a_t G_i^{(t)})}}$;
	
	$t \leftarrow t+1$;

	}
	$\lambda^*_{\mathcal{X}_{\sigma}} \leftarrow \lambda^{(t)}$\;

    \Output{ $\lambda^*_{\mathcal{X}_{\sigma}}$}
\end{algorithm}

\section{Discussion on Improving $\alpha$}\label{apx:improve_alpha}

\begin{algorithm}[t!]
	\caption{Approximation algorithm for G-optimal design by the ellipsoid method}\label{alg:G-opt-approx}
	\SetKwInOut{Input}{Input}
	\SetKwInOut{Output}{Output}
	\Input{$d$-set of base arms $[d]$, $n \in \mathbb{Z_+}$, $\tilde{n} \in \mathbb{Z_+}$.}
% 	\Input{$d$-set of base arms $[d]$, $n$-set of distributions $\{ \lambda_1,\ldots, {\lambda_n} \} \subseteq \triangle(\mathcal{{X}})$.}
%	\wei{I am not sure where would this $\mathcal{W}$ come from.}
%	\yuko{I changed the input $n$-set of distributions. We do not need the notation $\mathcal{W}$}
% \wei{Now my questions is, (a) how is input $\lambda_i$ given, that is, what is its representation? (b) how to get 
% 		$ w_{\lambda_i,e}$ from $\lambda_i$? Is it an efficient computation? My main concern is how to compute
% 		$M(\lambda_i) = \E_{z\sim \lambda_i} [zz^{\top}]$ efficiently.
% 		I guess if you assume Assumption 1, $\lambda_i$ only has polynomial support on $\cF$, so this can be computed?
% 		Need to make some connection here.
% 		Also, perhaps need to add a line explicitly say compute $ w_{\lambda_i,e}$ from each $\lambda_i$.
% }
% \yuko{ $\{ \lambda_1,\ldots, {\lambda_n} \}$ should be associated with its (polynomial-sized) support. How should we describe it..?
% Yes, $M(\lambda_i) = \E_{z\sim \lambda_i} [zz^{\top}]$ can be computed in polynomial time only if $\lambda_i$ has polynomial support}
%     Solve the combinatorial robust optimization;
    
%     \begin{align}\label{prob:lp_maxmin}
% \min_{h \in \triangle([n])} \max_{x \in \mathcal{X}} \sum_{i \in [n]} h_{i} g_{\lambda_i}(x)
%     \end{align}
    
   %  $n$-set of cost scenarios $\mathcal{W}=\{\bm{w}_{\lambda_1},\ldots, \bm{w}_{\lambda_n} \}$.

     \For{$i =1, \ldots, n$}{
          Choose $\cX_{\sigma_i} \leftarrow$ any $\tilde{n}$-super arms;
          
          Compute $\lambda_i \leftarrow \lambda^*_{\mathcal{X}_{\sigma_i}}$ by Algorithm~\ref{alg:descent} for $\cX_{\sigma_i} $\;
          
          Compute $w_{\lambda_i} \in \mathbb{R}_+^d$ by setting $w_{\lambda_i}= (\sum_{j \in [d]}|M(\lambda_i)^{-1/2}_{e,j}|)_{e \in [d]}  \in \mathbb{R}_+^d$ ;
     }

    Perform the ellipsoid method to solve $\mathrm{LP}_{\mathrm{primal}}$:
%    \wei{What is $q_{\lambda_i}$? I guess it is just $h_{i}$, since $q$ is a distribution over $[n]$?}
%    \yuko{Yes, $q_{\lambda_i}=h_i$. I modified it to use $h_i$ and remove the notation $q_{\lambda_i}$ in other parts too}
%	\wei{What does $p$ in the subscript of $\mathrm{LP}_p$ mean? Is it a parameter?}
%	\yuko{p means "primal" not a parameter, so I added the word "primal" to avoid confusion}
\begin{alignat}{4}\label{prob:lp_primal} 
& & \mathrm{LP}_{\mathrm{primal}}:  \ &\text{min.} &\hspace{0.1cm} & \nu \\
%&              &    &\text{subject to}
&              &    &\text{s.t.} &    &  \nu \geq  \sum_{i \in [n]}h_{i} \sum_{e \in [d]} w_{\lambda_i,e}x_e, \ (\forall x \in \mathcal{X}) \notag\\
  &&&& & h \in \triangle([n]). \notag
\end{alignat}
 
    $h^* \leftarrow$ optimal solution to $\mathrm{LP}_{\mathrm{primal}}$;
    
    $\nu^* \leftarrow$ optimal value of $\mathrm{LP}_{\mathrm{primal}}$
    
    Sample $i^* \in [n]$ from $h^* \in \triangle([n])$;
    
% 	$\sigma^* \leftarrow$ the smallest eigenvalue of $\sum_{x \in \cX_{\sigma_{i^*}}}x x^{\top}$;
	
	$\alpha \leftarrow  \min\left\{  \frac{\nu^*}{\sqrt{d}}, \min_{i \in [n]}\sqrt{\frac{md}{\xi_{\min}(\tilde{M}(\lambda_i))}} \right\}$  \;
 
%  \begin{alignat}{4}\label{prob:lp_lambda} 
% & &\ \mathrm{LP}:  \ &\text{min.} &\hspace{0.5cm} & 0 \notag \\
% %&              &    &\text{subject to}
% &              &    &\text{s.t.} &    &  \lambda^{\top}\bm{1}=1 \\
%  &&&& & w_{\lambda,e} \leq \sum_{i \in [n]} h^*_{i} w_{\lambda_i,e},   \ \forall e \in [d]. \\
% \end{alignat}

    %\wei{I changed $\triangle([n])$ to $\triangle([n])$, thinking the latter is more accurate.}

    % Sample $\lambda^{h^*} \in \triangle(\mathcal{X})$ according to a distribution $h^* \in \triangle([n])$;
%    \wei{This needs to be clarified. I guess you mean that sample $i$ from $h^*$ and then $\lambda^{h^*} = \lambda_i$?
%    Then would this mean that $\lambda_1, \ldots, \lambda_n$ should be input too?}
%     \yuko{Yes, I meant that we sample $i$ from $h^*$ and then $\lambda^{h^*} = \lambda_i$. Yes, $\lambda_1, \ldots, \lambda_n$ should be input, and modified the input. }

    \Output{ $\lambda_{i^*}$ and $\alpha$}
\end{algorithm}

%\wei{No reference of Algorithm~\ref{alg:descent} in this section?}
%\yuko{I give Algorithm~\ref{alg:descent} in previou section}

In this section, we further discuss the approximation algorithm for computing $\lambda \in \triangle(\cX)$ and provide Algorithm~\ref{alg:G-opt-approx}, which can be one alternative of Algorithm \ref{alg:compute_dist} in \textsf{PolyALBA}.
 Lemma~\ref{lemma: alpha} indicates that choosing $\mathrm{supp}(\lambda)$ that maximizes $\xi_{\min}(\widetilde{M}(\lambda))$ gives the better bound.
Such a design is so called the \emph{E-optimal design}, i.e., the goal is to minimize the maximum eigenvalue of the error covariance~\cite{pukelsheim2006}.
As noted in the main paper, if we are allowed to pull unit vectors, we have $\xi_{\min}(\widetilde{M}(\lambda)) \geq 1$.
For general cases,
more sophisticated algorithm by the ellipsoid method~\cite{Grotschel1981} is considered in this section
%to find a polynomial-sized distribution
as one alternative of Algorithm~\ref{alg:compute_dist}.
This approach provides the better choice of $\lambda$ (or equivalently $\alpha$) than Algorithm~\ref{alg:compute_dist} in terms of the expectation value.

% Algorithm~\ref{alg:G-opt-approx} is one alternative of Algorithm
% \ref{alg:compute_dist} in \textsf{PolyALBA}.
Let $\triangle(\mathcal{X})_{\mathrm{poly}}$ be the subset of probability distributions over $\mathcal{X}$ with polynomial-size support.
%\wei{$\mathbb{R}^{\mathrm{poly}(d)}$ is confusing to me.
%	$\triangle(\mathcal{X})_{\mathrm{poly}}$ is a subset of probability distributions over $\cX$ with the support size polynomial in
%	$d$.
%This expression is better to me. 
%For a distribution $\lambda \in \triangle(\mathcal{X})_{\mathrm{poly}}$, we still view $\lambda$ as a vector on all vectors in $\cX$,
%	right? So the dimension is still exponential potentially, but there are only a polynomial number of nonzeros in $\lambda$.
%Now, a related question is: are all distributions in $\triangle(\mathcal{X})_{\mathrm{poly}}$ having the same support on the same
%	subset of vectors in $\cX$, or they could each have support on a different subset of vectors, as long as each distribution only
%	have polynomial size support?
%}\yuko{I meant the latter, i.e.,they could each have support on a different subset of vectors, as long as each distribution only
%	have polynomial size support. I modified the algorithm so that $n$-set of distributions is computed in the algorithm}
Our task is to find an approximate solution $\tilde{\lambda} \in \triangle(\mathcal{X})_{\mathrm{poly}}$ to the following minmax optimization:
\[
\min_{\lambda \in \triangle(\mathcal{X})}\max_{x \in \mathcal{X}} \| x\|_{M(\lambda)^{-1}}.
\]
We denote the exact G-optimal design by $\lambda^*=\argmin_{\lambda \in \triangle(\mathcal{X})}\max_{x \in \mathcal{X}} \| x\|_{M(\lambda)^{-1}}$.
The above minmax optimization is computationally intractable in combinatorial settings, while many existing methods in linear bandits involved the brute force to solve G-optimal design problem~\cite{Fiez2019,Soare2014,Tao2018}.
To avoid a intractable brute force,
we address a relaxation problem for the minmax optimization, i.e., a randomized mixed strategy for the robust combinatorial optimization.
However, since the ellipsoidal norm $\| x\|_{M(\lambda)^{-1}}$ has the quadratic form, such a relaxation problem is still hard to compute.
To overcome this challenge, we consider a simpler norm instead of the ellipsoidal norm.
In higher level, this idea is similar to that of \citet{Dani2008};
they use skewed octahedron called $\mathrm{ConfidenceBall}_1$ as its confidence region rather than the ellipsoid.
The radius of $\mathrm{ConfidenceBall}_1$ has been set large enough such that it contains the confidence ellipsoid as an inscribed subset. 
Whereas they use 1-norm $\| M(\lambda)^{1/2} x \|_1$ to define $\mathrm{ConfidenceBall}_1$,
however, $\max_{x \in \cX}\| M(\lambda)^{1/2} x \|_1$ is still intractable in the combinatorial action space.
To avoid the computational hardness,
we introduce a linear function $g_\lambda : \{0,1\}^d \rightarrow \mathbb{R}_+$ in order to utilize the underlying combinatorial structure.
We define $w_{\lambda}= (\sum_{j \in [d]}|M(\lambda)^{-1/2}_{i,j}|)_{i \in [d]}  \in \mathbb{R}_+^d$   for $\lambda \in \triangle(\mathcal{X})$.
For $\lambda \in \triangle(\mathcal{X})$,
a linear function  $g_{\lambda}: \{0,1\}^d \rightarrow \mathbb{R}_+$ is represented as $g_{\lambda}(x)=\sum_{e \in [d]} w_{\lambda,e}x_e$.
We shall assume that the Ellipsoid method computes the optimal solution for linear programmings by enough iterations~\cite{Grotschel1981}.
We show that the optimal value $\nu^*$ in Algorithm~\ref{alg:G-opt-approx} gives an upper bound of the confidence ellipsoidal norm.
\begin{lemma}
Let $\mathcal{X}$ be a family of super arms satisfying given constraints such as the matroid, matroid intersection, and $s$-$t$ path.
 Let $\lambda^{h*} \in \triangle(\mathcal{X})_{\mathrm{poly}}$ be an output by Algorithm~\ref{alg:G-opt-approx}.
Let $h^*$ be an optimal solution and $\nu^*$ be the optimal value for $\mathrm{LP}_{\mathrm{primal}}$.
 Then, $\lambda^{h*}$ satisfies 
 \[
 \max_{x \in \mathcal{X}} \E[\| x\|_{M(\lambda^{h*})^{-1}}] \leq \nu^*.
 \]
\end{lemma}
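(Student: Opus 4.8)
The plan is to establish the bound in two stages: first a deterministic, pointwise inequality relating the ellipsoidal norm $\|x\|_{\Mlin}$ to the linear surrogate $g_{\lambda}(x)=\sum_{e\in[d]} w_{\lambda,e}x_e$ for each fixed $\lambda$, and then to average this inequality over the random index $i^*\sim h^*$ and invoke the feasibility constraint of $\mathrm{LP}_{\mathrm{primal}}$. The point of the linear surrogate is precisely that it is what the LP controls, so once the pointwise bound is in place the rest is bookkeeping.

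For the first stage, I would fix any $\lambda\in\triangle(\cX)$ with positive-definite $M(\lambda)$ (guaranteed since each $\lambda_i=\lambda^*_{\mathcal{X}_{\sigma_i}}$ is computed over a rank-$d$ set) and any $x\in\{0,1\}^d$, and write $\|x\|_{\Mlin}=\|M(\lambda)^{-1/2}x\|_2$. Using the elementary inequality $\|v\|_2\le\|v\|_1$ together with the triangle inequality entrywise gives
\begin{align*}
\|M(\lambda)^{-1/2}x\|_2 \;\le\; \|M(\lambda)^{-1/2}x\|_1 \;\le\; \sum_{e\in[d]}\sum_{j\in[d]} |M(\lambda)^{-1/2}_{e,j}|\,x_j,
\end{align*}
where the last step uses $x_j\in\{0,1\}$ so that $|x_j|=x_j$. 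Since $M(\lambda)^{-1/2}$ is the unique positive-definite square root of the symmetric matrix $\Mlin$, it is itself symmetric, so the inner sum over $e$ equals $w_{\lambda,j}=\sum_{e\in[d]}|M(\lambda)^{-1/2}_{j,e}|$. Hence $\|x\|_{\Mlin}\le g_{\lambda}(x)$ for every $x\in\cX$.

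For the second stage, note that $\lambda^{h*}=\lambda_{i^*}$ with $i^*$ drawn from $h^*\in\triangle([n])$, so the expectation factors as $\E[\|x\|_{M(\lambda^{h*})^{-1}}]=\sum_{i\in[n]} h^*_i\,\|x\|_{M(\lambda_i)^{-1}}$. Applying the pointwise bound to each $\lambda_i$ yields
\begin{align*}
\E[\|x\|_{M(\lambda^{h*})^{-1}}] \;\le\; \sum_{i\in[n]} h^*_i\, g_{\lambda_i}(x) \;=\; \sum_{i\in[n]} h^*_i \sum_{e\in[d]} w_{\lambda_i,e}\,x_e,
\end{align*}
which is exactly the right-hand side of the constraint in $\mathrm{LP}_{\mathrm{primal}}$. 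Since $(h^*,\nu^*)$ is feasible, this quantity is at most $\nu^*$ for every $x\in\cX$; taking the maximum over $x\in\cX$ then finishes the proof.

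The individual steps are routine, and I expect no serious obstacle. The only point requiring genuine care is the passage from the column-sum that arises naturally in expanding $\|M(\lambda)^{-1/2}x\|_1$ to the row-sum that defines $w_{\lambda,e}$: this equivalence hinges entirely on the symmetry of $M(\lambda)^{-1/2}$ and on $x$ being a $0/1$ (hence nonnegative) vector, so both facts should be invoked explicitly rather than glossed over.
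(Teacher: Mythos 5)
Your proposal is correct and follows essentially the same route as the paper's proof: the pointwise bound $\|x\|_{M(\lambda)^{-1}} \leq g_{\lambda}(x)$ via the $\ell_2$-to-$\ell_1$ relaxation, linearity of expectation over $i^* \sim h^*$, and the optimality of $(h^*,\nu^*)$ for $\mathrm{LP}_{\mathrm{primal}}$. The one place you go beyond the paper is in explicitly invoking the symmetry of $M(\lambda)^{-1/2}$ to reconcile the column sums arising from $\|M(\lambda)^{-1/2}x\|_1$ with the row-sum definition of $w_{\lambda}$ --- a detail the paper's chain of inequalities glosses over, and which you correctly identify as the only step needing genuine care.
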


\begin{proof}
% From the definition of $f_{\lambda(x)}$, we have
% \begin{align}\label{ineq:M_and_f}
%   \| x\|_{M(\lambda^{h*})^{-1}} = f_{\lambda^{h*}}(x).
% \end{align}
% The inclusion $C_{\lambda} \subseteq \sqrt{d} \cdot \cE_{\lambda}$ for any $\lambda \in \triangle(\cX)$ gives the following inequality~\cite{boyd2004},
By the definition of $w_{\lambda}= (\sum_{j \in [d]}|M(\lambda)^{-1/2}_{i,j}|)_{i \in [d]}$ and definitions of the quadratic norm and 1-norm, we have 
\begin{align}\label{ineq:quadratic_apx}
 \| x\|_{M(\lambda)^{-1}} &=  \|M(\lambda)^{-1/2}x \|_{2}\notag\\
 &\leq||M(\lambda)^{-1/2}x||_{1} \notag \\
 & \leq \sum_{e \in [d]} w_{\lambda,e}x_e \notag\\
&= g_{\lambda}(x) \ (\forall x \in \{0,1\}^d).
\end{align}
Let $y^*=\argmax_{x \in \mathcal{X}}  \| x\|_{M(\lambda)^{-1}}$.
From Eq.~\eqref{ineq:quadratic_apx}, it holds that 
\begin{align}\label{ineq:f_and_g}
\max_{x \in \mathcal{X}}   \| x\|_{M(\lambda)^{-1}}=   \|y^* \|_{M(\lambda)^{-1}} \leq g_{\lambda}(y^*).
%\leq \max_{x \in \mathcal{X}} \sum_{i \in [n]} h^*_i g_{\lambda_i}(x).
\end{align}
% From the definition of $f_{\lambda^{h*}}(x)$ and \eqref{ineq:f_and_g}, we have 
% \begin{align}\label{ineq:M_and_g}
%  \max_{x \in \mathcal{X}} \| x\|_{M(\lambda^{h*})^{-1}} \leq \max_{x \in \mathcal{X}}  g_{\lambda^{h*}}(x).
% \end{align}
Recall that $\lambda^{h*}=\lambda_{i^*}$ where $i^*$ was sampled from $h^* \in \triangle([n])$ in Algorithm~\ref{alg:G-opt-approx}; we have $\E[w_{\lambda^{h*},e}] = \sum_{i \in [n]} h^*_i w_{\lambda_i,e}$ for all $e \in [d]$. 
% \wei{Should it be $\E[w_{\lambda^{h*},e}] = \sum_{i \in [n]} h^*_i w_{\lambda_i,e}$?}
Thus, for any $x \in \mathcal{X}$, we see that
\[
\E[g_{\lambda^{h*}}(x)] =  \sum_{i \in [n]}  h^*_i g_{\lambda_i}(x).
\]
% \wei{Again, should this be $ \E[g_{\lambda^{h*}}(x)] =  \sum_{i \in [n]}  h^*_i g_{\lambda_i}(x)$?}
% \wei{Then, Note that in Eq.\eqref{ineq:quadratic_apx} above, both sides are random variables, because $h^*$ is random.
% I think we shold take expectation on both sides and then take $\max$ on both sides, so we will get
% \begin{align*}
% \max_{x \in \mathcal{X}} \E[\| x\|_{M(\lambda^{h*})^{-1}}] 
% 	\le \max_{x \in \mathcal{X}} \E[ g_{\lambda^{h*}}(x) ]
% \end{align*}
% So, I think the left-hand-side of Inequality \eqref{eq:normbound} should be $\max_{x \in \mathcal{X}} \E[\| x\|_{M(\lambda^{h*})^{-1}}] $.
% Will this still allow the rest analysis to go through?
% }

From the above, we have that
\begin{alignat}{4}\label{ineq:Ef_qg}
\max_{x \in \mathcal{X}} \E[\| x\|_{M(\lambda^{h*})^{-1}}] 
 %& = \max_{x \in \mathcal{X}}  f_{\lambda^{h*}}(x) \\
 & \leq \max_{x \in \mathcal{X}} \E[ g_{\lambda^{h*}}(x)] \notag\\
&  =   \max_{x \in \mathcal{X}} \sum_{i \in [n]} h^*_i g_{\lambda_i}(x).
% & =   \min_{q \in \triangle(\tilde{\mathcal{L}})} \max_{x \in \mathcal{X}} \sum_{i \in [|\tilde{\mathcal{L}}|]} h_i g_{\lambda_i}(x)\\
% & =  \min_{h \in \triangle([n])} \max_{x \in \mathcal{X}} \sum_{i \in [n]} h_{i} g_{\lambda_i}(x).
% & \leq   \min_{\lambda \in \{ \lambda_1,\ldots, {\lambda_n} \} } \max_{x \in \mathcal{X}} g_{\lambda}(x).
% & \leq \frac{d}{\beta}   \min_{\lambda \in \mathcal{W}} \max_{x \in \mathcal{X}} f_{\lambda}(x),
%& = \frac{d^2}{\beta}.
\end{alignat}
% where the last inequality follows from the fact that $\min_{h \in \triangle([n])} \max_{x \in \mathcal{X}} \sum_{i \in [n]} h_{i} g_{\lambda_i}(x)$ is a relaxation problem of $\min_{\lambda \in \{ \lambda_1,\ldots, {\lambda_n} \} } \max_{x \in \mathcal{X}} g_{\lambda}(x)$
% Under Assumption~\ref{asm:scenario}, we have 
% \begin{align}
% \min_{q \in \triangle(\tilde{\mathcal{L}})} \max_{x \in \mathcal{X}} \sum_{i \in [|\tilde{\mathcal{L}}|]} h_i g_{\lambda_i}(x) =  \min_{q \in \triangle(\mathcal{L})} \max_{x \in \mathcal{X}} \sum_{\lambda \in \mathcal{L}} h_{i} g_{\lambda_i}(x).
% % \end{align}
% \begin{alignat}{4}\label{ineq:apx_asm2}
%  \min_{\lambda \in \tilde{\mathcal{L}}} \max_{x \in \mathcal{X}} g_{\lambda}(x) \leq \frac{1}{\gamma} \min_{\lambda \in \triangle(\mathcal{X})} \max_{x \in \mathcal{X}} g_{\lambda}(x).
%& = \frac{d^2}{\beta}.
% \end{alignat}

%  From Proposition~\ref{proposition:equivalence}, it holds that $ \min_{\lambda \in  \triangle(\mathcal{X})} \max_{x \in \mathcal{X}} \| x\|_{M(\lambda)^{-1}}= \sqrt{d}$.
% From Assumption~\ref{assumption:convexset} and $ \min_{\lambda \in  \triangle(\mathcal{X})} \max_{x \in \mathcal{X}} \| x\|_{M(\lambda)^{-1}}= \sqrt{d}$, we have that
% \begin{align}\label{ineq:g_d}
% \min_{\lambda \in \triangle(\mathcal{X})} \max_{x \in \mathcal{X}} g_{\lambda}(x)
%  \leq \sqrt{d} \min_{\lambda \in  \triangle(\mathcal{X})} \max_{x \in \mathcal{X}} \| x\|_{M(\lambda)^{-1}}= d.
% \end{align}

Thus, we obtain
\begin{alignat*}{4}%\label{ineq:f_gamma_g}
\max_{x \in \mathcal{X}}  \E[ \| x\|_{M(\lambda^{h*})^{-1}}] & \leq   \max_{x \in \mathcal{X}} \sum_{i \in [n]} h^*_i g_{\lambda_i}(x)\\ 
& =\min_{h \in \triangle([n])} \max_{x \in \mathcal{X}} \sum_{i \in [n]} h_{i} g_{\lambda_i}(x) \\
&=\nu^* 
% &\leq \gamma \min_{\lambda \in \triangle(\mathcal{X})} \max_{x \in \mathcal{X}} g_{\lambda}(x) \\
% & \leq \gamma \cdot d,
\end{alignat*}
where the first inequality follows by Eq.~\eqref{ineq:Ef_qg} and the equations follow from the fact that $h^*$ is an optimal solution for $\mathrm{LP}_{\mathrm{primal}}$.
% The second inequality follows by the fact that $\min_{h \in \triangle([n])} \max_{x \in \mathcal{X}} \sum_{i \in [n]} h_{i} g_{\lambda_i}(x) \leq \gamma \cdot \min_{\lambda \in \triangle(\mathcal{X})} \max_{x \in \mathcal{X}} g_{\lambda}(x)$, and
%and the last inequality follows by \eqref{ineq:g_d}.
% Therefore, \eqref{ineq:f_gamma_g} with $\gamma=O(1)$ conclude the proof.
\end{proof}

Using the above property, $\alpha$ can be replaced with $\alpha = \min\left\{  \frac{\nu^*}{\sqrt{d}}, \min_{i \in [n]}\sqrt{\frac{md}{\xi_{\min}(\tilde{M}(\lambda_i))}} \right\}$  in the expected sample complexity given in Theorem~\ref{thm:CPE-BL}.
\begin{corollary}~\label{corollary:ellipsoid_method}
Let $\nu^*$ be the optimal value for $\mathrm{LP}_{\mathrm{primal}}$ obtained in Algorithm~\ref{alg:G-opt-approx}.
	With probability at least $1-\delta$, the \textsf{PolyALBA} algorithm (Algorithm \ref{alg:PolyALBA}) will return the best super arm $x^*$ with the expected sample complexity 
		\begin{align*}
		O\Bigg( & \frac{c_0 d (\alpha \sqrt{m} + \alpha^2 )}{\Delta_{d+1}^2} \left(\ln \delta^{-1} + \ln |\mathcal{X}|+\ln\ln \Delta_{d+1}^{-1} \right) 
		\\
		& +\sum_{i=2}^{\left \lfloor  \frac{d}{2} \right \rfloor} \frac{c_0}{\Delta_i^2} (\ln \delta^{-1} + \ln |\mathcal{X}|+\ln\ln \Delta_i^{-1})\Bigg),
	\end{align*}
% 	\begin{align*}
% 		O\Bigg(\sum_{i=2}^{\left \lfloor  \frac{d}{2} \right \rfloor} \frac{c_0}{\Delta_i^2} (\ln \delta^{-1} + \ln |\mathcal{X}|+\ln\ln \Delta_i^{-1}) + \frac{c_0 (\alpha \sqrt{m} d + \alpha^2 d)}{\Delta_{d+1}^2} \left(\ln \delta^{-1} + \ln |\mathcal{X}|+\ln\ln \Delta_{d+1}^{-1} \right) \Bigg),
% 	\end{align*}
	where $\alpha =\min\left\{  \frac{\nu^*}{\sqrt{d}},  \min_{i \in [n]}\sqrt{\frac{md}{\xi_{\min}(\tilde{M}(\lambda_i))}} \right\}$.
\end{corollary}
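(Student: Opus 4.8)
The plan is to treat the sample-complexity analysis of Theorem~\ref{thm:CPE-BL} as a black box that is \emph{modular} in the pair $(\lambda,\alpha)$ driving the preparation epoch of \textsf{PolyALBA}. The only place where the choice of distribution and the value of $\alpha$ enter that analysis is through the guarantee of Lemma~\ref{lemma: alpha}, namely $\max_{x\in\cX}\|x\|_{M(\lambda)^{-1}}\le\alpha\sqrt{d}$: this controls the per-round sample count $\ell(\varepsilon)=(2m+2\alpha\sqrt{m}d+4\alpha^2 d+\alpha\varepsilon d)/\varepsilon^2$ and, through it, the confidence radius of the randomized least-squares estimator $\hat\theta_r$ and the correctness of the elimination. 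Hence, to establish the corollary it suffices to verify that the output $(\lambda_{i^*},\alpha)$ of Algorithm~\ref{alg:G-opt-approx}, with $\alpha=\min\{\nu^*/\sqrt{d},\ \min_{i\in[n]}\sqrt{md/\xi_{\min}(\tilde{M}(\lambda_i))}\}$, satisfies the same ellipsoidal-norm guarantee; substituting this $\alpha$ into the bound of Theorem~\ref{thm:CPE-BL} then reproduces the stated expression verbatim.

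First I would verify the guarantee for each of the two terms defining $\alpha$. For $\sqrt{md/\xi_{\min}(\tilde{M}(\lambda_i))}$, each candidate $\lambda_i$ is produced by the entropic mirror-descent routine (Algorithm~\ref{alg:descent}) on a full-rank subset $\cX_{\sigma_i}$, so Lemma~\ref{lemma: alpha} applies verbatim to give $\max_{x\in\cX}\|x\|_{M(\lambda_i)^{-1}}\le\sqrt{md/\xi_{\min}(\tilde{M}(\lambda_i))}\,\sqrt{d}$ \emph{deterministically}; taking the best index yields a worst-case approximation ratio $\min_{i}\sqrt{md/\xi_{\min}(\tilde{M}(\lambda_i))}$. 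For $\nu^*/\sqrt{d}$, I would invoke the lemma proved immediately above the corollary, which certifies $\max_{x\in\cX}\E[\|x\|_{M(\lambda^{h*})^{-1}}]\le\nu^*$ for the randomly sampled $\lambda^{h*}=\lambda_{i^*}$; writing $\nu^*=(\nu^*/\sqrt{d})\sqrt{d}$ exhibits $\nu^*/\sqrt{d}$ as an approximation ratio in expectation. Since the equivalence theorem for G-optimal design gives $\min_{\lambda\in\triangle(\cX)}\max_{x\in\cX}\|x\|_{M(\lambda)^{-1}}=\sqrt{d}$, both ratios are at least $1$, so $\alpha\ge1$ and the new certificate never undercuts the information-theoretic optimum, as it should not.

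With the ellipsoidal-norm guarantee secured for the new $\alpha$, I would re-run the correctness and counting argument of Theorem~\ref{thm:CPE-BL} unchanged: the high-probability event ($\ge1-\delta$) on which \textsf{PolyALBA} returns $x^*$ depends only on the concentration of $\hat\theta_r$, which holds whenever $\max_{x\in\cX}\|x\|_{M(\lambda)^{-1}}\le\alpha\sqrt{d}$, and epoch-by-epoch elimination then forces the output to equal $x^*$. The per-epoch and preparation sample counts are exactly those of Theorem~\ref{thm:CPE-BL} with the improved $\alpha$, giving the claimed expected complexity after averaging over the draw of $i^*$.

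The main obstacle I anticipate is reconciling the \emph{in-expectation} nature of the $\nu^*$ certificate with the \emph{per-realization, uniform-in-$x$} bound that the least-squares concentration actually requires: the preceding lemma controls $\max_{x}\E_{i^*}[\|x\|_{M(\lambda_{i^*})^{-1}}]$ (a pointwise expectation), whereas the confidence-radius step wants $\max_{x}\|x\|_{M(\lambda_{i^*})^{-1}}$ for the realized $\lambda_{i^*}$, and moreover $\ell(\cdot)$ depends on the \emph{square} of this norm. I would resolve this by keeping correctness on the deterministic term $\min_{i}\sqrt{md/\xi_{\min}(\tilde{M}(\lambda_i))}$ (which bounds every realized $\lambda_{i^*}$ uniformly in $x$), while using the $\nu^*/\sqrt{d}$ term only to bound the \emph{expected} sample count, passing from $\E[\|\cdot\|]$ to a bound on the expected complexity via a Markov/good-event argument with constants absorbed into the $O(\cdot)$. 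Making this split precise, so that the single reported $\alpha$ simultaneously certifies correctness and bounds the expected complexity, is the delicate part of the argument.
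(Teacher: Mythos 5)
Your first three paragraphs coincide with what the paper actually does: the paper gives no explicit proof of Corollary~\ref{corollary:ellipsoid_method} at all, but simply combines the lemma preceding it (which certifies $\max_{x\in\cX}\E[\|x\|_{M(\lambda^{h*})^{-1}}]\le\nu^*$) with Lemma~\ref{lemma: alpha} and substitutes the resulting $\alpha$ into the bound of Theorem~\ref{thm:CPE-BL}, hiding the randomness of the sampled index $i^*$ behind the word ``expected.'' So your modular-substitution plan is exactly the paper's route, and the obstacle you flag in your last paragraph --- that the $\nu^*$ certificate is a pointwise-in-$x$, in-expectation-over-$i^*$ bound, while the least-squares concentration needs a per-realization bound uniform in $x$ --- is a genuine issue that the paper itself never resolves.

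Your proposed repair, however, does not work, for two concrete reasons. First, the parenthetical claim that the deterministic term $\min_{i\in[n]}\sqrt{md/\xi_{\min}(\tilde{M}(\lambda_i))}$ ``bounds every realized $\lambda_{i^*}$ uniformly in $x$'' is false: Lemma~\ref{lemma: alpha} applied to the sampled $\lambda_{i^*}$ only yields $\max_{x\in\cX}\|x\|_{M(\lambda_{i^*})^{-1}}\le\sqrt{md/\xi_{\min}(\tilde{M}(\lambda_{i^*}))}\,\sqrt{d}$, and since $i^*$ is drawn from $h^*$ rather than chosen as the minimizer, the minimum over $i$ can be strictly smaller than the value at $i^*$; a bound valid for every realization would require the \emph{maximum} over $i$, or would require Algorithm~\ref{alg:G-opt-approx} to output the arg-minimizing $\lambda_i$ instead of sampling from $h^*$. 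Second, correctness and sample count cannot be decoupled in the way you describe: \textsf{PolyALBA} draws $c_0\,\ell(\varepsilon_r/2)\ln(5|\cX|/\delta_r)$ samples per round, where $\ell(\cdot)$ is computed from the \emph{single} reported $\alpha$, and the concentration guarantee (Proposition~\ref{proposi:Tao_Lemma10}) underlying correctness holds only if the realized norm satisfies $\max_{x\in\cX}\|x\|_{M(\lambda_{i^*})^{-1}}\le\alpha\sqrt{d}$ for that same $\alpha$. If $\alpha=\nu^*/\sqrt{d}$ is the smaller term and the realized $\lambda_{i^*}$ violates the in-expectation certificate, the algorithm undersamples, and it is the $1-\delta$ correctness claim --- not merely the sample count --- that breaks. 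Any rigorous proof must either certify a per-realization norm bound for the $\alpha$ that is actually fed into $\ell(\cdot)$, or else weaken the corollary so that both correctness and complexity are stated relative to the draw of $i^*$; a split in which correctness rides on one value of $\alpha$ while the sampling uses another cannot be made precise.
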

Note that Algorithm~\ref{alg:G-opt-approx} runs in polynomial time as long as $g_{\lambda}(x)$ is linear and not necessarily $g_{\lambda}(x)=\sum_{e \in [d]} w_{\lambda,e}x_e$ defined in this section.

\paragraph{LP-based algorithm for combinatorial robust optimization.}
% Mastin et al.~\cite{Mastin2015} addressed the randomized minimax game for combinatorial optimization, in which one player takes a mixed strategy over all feasible solutions,  i.e., $y=(y_S)_{S \in \mathcal{X}}$, where $\mathcal{X}$ is set of all solutions. Similar techniqe can be borrowed to our analysis..?
We briefly explain polynomial-time solvability of $\mathrm{LP}_{\mathrm{primal}}$ in Algorithm~\ref{alg:G-opt-approx} by the Ellipsoid method.
Given a family $\mathcal{X}$ satisfying a combinatorial constraint and $n$-set function $g_{\lambda_1},\ldots,g_{\lambda_n}: 2^{[d]} \rightarrow \mathbb{R}_+$, we describe how to solve the following combinatorial robust optimization:
\begin{align*}
\min_{h \in \triangle([n])} \max_{x \in \mathcal{X}} \sum_{i \in [n]} h_i g_{\lambda_i}(x).
\end{align*}

By von Neumann's minimax theorem,
it holds that 
\begin{align*}
\min_{h \in \triangle([n])} \max_{x \in \mathcal{X}} \sum_{i \in [n]} h_i g_{\lambda_i}(x)= \max_{p \in \triangle(\mathcal{X})} \min_{i \in [n]} \sum_{x \in \mathcal{X}} p_x g_{\lambda_i}(x).
\end{align*}
A \emph{polytope} of $\mathcal{X}$ is defined as $P(\mathcal{X})= \mathrm{conv} \{x \mid x \in \mathcal{X} \}$.
For a vector $x\in \cX$ we define $S^x$ to be the corresponding subset form, i.e.
	$S^x = \{i\in [n] \mid x_i = 1\}$.
The key observation is that for any distribution $p \in \triangle(\mathcal{X})$, we can obtain a point $y \in P(\mathcal{X})$ 
by $y = p^{\top} \cdot x$,
%\wei{I added this, thinking that $\triangle(\mathcal{X})$ is almost the same as $P(\cX)$, and the former i	just giving the coefficients for the combination of vectors in $\cX$.}
 and thus
	for every $e\in [d]$, $y_e=\sum_{x \in \mathcal{X} \mid e \in S^x} p_x$.
This means that $y_e$ is the marginal probability 
	of seeing an included dimension (a.k.a. a base arm $e$) when selecting vector $x$ (a.k.a. super arm $S^x$) 
	according to distribution $p$.

Then, the optimal value of the above problems is equal to the value of the following LP:
\begin{alignat}{4}\label{prob:lp_dual} 
& &\ \mathrm{LP}_{\mathrm{dual}}:  \ &\text{max.} &\hspace{0.5cm} & s \\
%&              &    &\text{subject to}
&              &    &\text{s.t.} &    &  s \leq \sum_{e \in [d]}w_{\lambda_i,e} y_e, \ (\forall i \in [n])\notag \\
  &&&& & y \in P(\mathcal{X}). \notag
\end{alignat}
If $\mathcal{X}$ is a matroid, matroid intersection, or the set of $s$-$t$ paths, there exists an efficient \emph{separation oracle} for $P(\mathcal{X})$.
The separation problem for these constraints can be solved in polynomial time as long as  $g_{\lambda_1},\ldots,g_{\lambda_n}$ are linear functions.
Therefore,
due to the theorem of Grötschel et al.~\cite{Grotschel1981},
we can solve the LP in polynomial time in $d$ and $n$.
Note that the Ellipsoid method can find an optimal solution to the dual problem of $\mathrm{LP}_{\mathrm{dual}}$, i.e.,~$\mathrm{LP}_{\mathrm{primal}}$ in \eqref{prob:lp_primal}.
Therefore, we can obtain $h^*=\argmin_{h \in \triangle([n])} \max_{x \in \mathcal{X}} \sum_{i \in [n]} h_i g_{\lambda_i}(x)$.
For the knapsack constraint and the $r (>2)$-matroid intersection constraint, the corresponding separation problems are NP-hard.
Kawase and Sumita~\shortcite{Kawase2019} proposed approximation schemes by solving a separation problem for a relaxation of the polytope, which gives PTAS for the knapsack constraint and $2/(er)$-approximate solution for $r$-matroid intersection constraint.

\section{Equivalence Theorem for Optimal Experimental Design}\label{apx:equivalence_theorem}

We introduce the following equivalence theorem in \citet{kiefer_wolfowitz_1960} adopted our setting of CPE-BL, which will be used in our analysis.
\begin{proposition}[\citet{kiefer_wolfowitz_1960}]\label{proposition:equivalence}
Define $M(\lambda)=\E_{z \sim \lambda}[z z^{\top}]$ for any distribution $\lambda$ supported on $\mathcal{X} \subseteq \mathbb{R}^d$. We consider two extremum problems.

The first is to choose $\lambda$ so that 
\[ (1) \lambda \  \text{maximizes} \ \mathrm{det} \ M(\lambda) \hspace{1cm} (\text{D-optimal design})
\]
The second one is to choose $\lambda$ so that 
\[ (2) \lambda \  \text{minimizes} \ \max_{x \in \mathcal{X}}  x^{\top}M(\lambda)^{-1}x \hspace{1cm} (\text{G-optimal design})
\]
We note that $\E_{x\sim \lambda} [x^{\top} M(\lambda)^{-1} x]$ is $d$,
%\wei{It is unclear to me what ``the integral with respect to $\lambda$ of $x^{\top}M(\lambda)^{-1}x$'' means.
%	Is it $\E_{x\sim \lambda} [x^{\top} M(\lambda)^{-1} x]$? If so, perhaps just just this formula.
%Why is it so?}\yuko{I modified it}
hence, $\max_{x \in \mathcal{X}}   x^{\top}M(\lambda)^{-1}x  \geq d$, and thus a sufficient condition for $\lambda$ to satisfy $(2)$ is
\[ (3)  \max_{x \in \mathcal{X}}  x^{\top}M(\lambda)^{-1}x =d.
\]
Statements (1), (2) and (3) are equivalent.
\end{proposition}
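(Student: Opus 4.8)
The plan is to prove the three-way equivalence by routing everything through the first-order optimality condition for the D-optimal design, exploiting that $\lambda \mapsto \log\det M(\lambda)$ is concave on the simplex $\triangle(\mathcal{X})$ (since $\log\det$ is concave on the positive-definite cone and $M(\lambda)$ is linear in $\lambda$). First I would record the averaging identity: because $M(\lambda)=\E_{x\sim\lambda}[xx^\top]$, we have $\E_{x\sim\lambda}[x^\top M(\lambda)^{-1}x]=\mathrm{Tr}(M(\lambda)^{-1}M(\lambda))=d$, so the maximum over $\mathcal{X}$ dominates this $\lambda$-average, giving $\max_{x\in\mathcal{X}}x^\top M(\lambda)^{-1}x\ge d$ for every $\lambda$. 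This already shows that (3) asserts attainment of a universal lower bound, and that $d$ lower-bounds the objective in (2).

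Next I would compute the key directional derivative. Perturbing $\lambda$ toward the point mass $\delta_x$ along $\lambda_\epsilon=(1-\epsilon)\lambda+\epsilon\,\delta_x$ gives $M(\lambda_\epsilon)=(1-\epsilon)M(\lambda)+\epsilon\,xx^\top$, and using $\frac{d}{d\epsilon}\log\det M(\lambda_\epsilon)=\mathrm{Tr}\!\left(M(\lambda_\epsilon)^{-1}\dot{M}(\lambda_\epsilon)\right)$, evaluation at $\epsilon=0$ yields
\[
\left.\frac{d}{d\epsilon}\log\det M(\lambda_\epsilon)\right|_{\epsilon=0}=\mathrm{Tr}\!\left(M(\lambda)^{-1}(xx^\top-M(\lambda))\right)=x^\top M(\lambda)^{-1}x-d.
\]
By concavity, $\lambda$ maximizes $\log\det M(\lambda)$ iff every feasible directional derivative is non-positive; since the derivative toward an arbitrary $\lambda'\in\triangle(\mathcal{X})$ is the $\lambda'$-average $\E_{x\sim\lambda'}[x^\top M(\lambda)^{-1}x]-d$ of the point-mass derivatives, it suffices that the latter are non-positive. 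Hence (1) holds iff $x^\top M(\lambda)^{-1}x\le d$ for all $x\in\mathcal{X}$, i.e.\ iff $\max_{x\in\mathcal{X}}x^\top M(\lambda)^{-1}x\le d$; combined with the averaging lower bound this is exactly (3), establishing (1)$\Leftrightarrow$(3).

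It remains to fold in (2). The implication (3)$\Rightarrow$(2) is immediate, because a $\lambda$ with $\max_x x^\top M(\lambda)^{-1}x=d$ attains the universal lower bound $d$ and is therefore a minimizer. For the converse I would invoke existence: a maximizer $\lambda_D$ of $\det M(\lambda)$ exists by compactness of $\triangle(\mathcal{X})$ and continuity, and $M(\lambda_D)\succ 0$ since $\mathcal{X}$ spans $\mathbb{R}^d$; by (1)$\Rightarrow$(3) it satisfies (3), so the optimal value in (2) equals $d$. Consequently any G-optimal $\lambda$ also achieves the value $d$ and thus satisfies (3), closing the cycle.

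The main obstacle is the passage from the discrete derivative test to global optimality: one must justify carefully that non-positivity of the directional derivative toward every \emph{vertex} (point mass) of $\triangle(\mathcal{X})$ implies global maximality of the concave objective, and simultaneously ensure $M(\lambda)$ remains invertible at the candidate optimum so that the derivative formula and the quantity $M(\lambda)^{-1}$ are well defined. Both points rely on $\mathcal{X}$ spanning $\mathbb{R}^d$, which holds in the CPE-BL setting since the chosen actions have rank $d$.
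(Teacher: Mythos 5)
Your proof is correct, but note that the paper does not actually prove this proposition: it is imported as a classical result from Kiefer and Wolfowitz (1960), and the only argument the paper supplies inline is the averaging identity $\E_{x\sim\lambda}[x^{\top}M(\lambda)^{-1}x]=\mathrm{Tr}\big(M(\lambda)^{-1}M(\lambda)\big)=d$, which gives the universal lower bound $\max_{x\in\mathcal{X}}x^{\top}M(\lambda)^{-1}x\geq d$ and hence the sufficiency of (3) for (2). You reproduce that same averaging step and then supply the substance of the equivalence theorem yourself: the directional-derivative computation $\frac{d}{d\epsilon}\log\det M(\lambda_\epsilon)\big|_{\epsilon=0}=x^{\top}M(\lambda)^{-1}x-d$ toward point masses, the observation that the derivative toward an arbitrary $\lambda'$ is the $\lambda'$-average of the vertex derivatives (so checking vertices suffices), concavity of $\log\det$ to upgrade the first-order condition to global optimality (giving (1)$\Leftrightarrow$(3)), and compactness of $\triangle(\mathcal{X})$ --- automatic here since $\mathcal{X}\subseteq\{0,1\}^d$ is finite --- to pin the optimal value of (2) at $d$ and close the cycle via (2)$\Rightarrow$(3). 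This is essentially the standard modern proof of the Kiefer--Wolfowitz theorem specialized to a finite design space, and all the steps check out. What the paper's route buys is brevity and an appeal to a canonical reference; what yours buys is a self-contained verification that also makes explicit the two hypotheses the paper leaves implicit: that $\mathcal{X}$ spans $\mathbb{R}^d$ (so a nonsingular $M(\lambda)$ exists, the D-optimum is nondegenerate, and the G-objective is finite somewhere) and that $M(\lambda)$ is invertible at any candidate optimum so that $M(\lambda)^{-1}$ and the derivative formula are well defined --- conditions which indeed hold in the CPE-BL setting where $\mathcal{X}_{\sigma}$ is chosen with rank $d$.
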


 \section{Missing proofs}\label{sec:missing_proof}
 
 \subsection{Proof of $\|\hat{\theta}_n -\theta \|_2 \leq \beta_\sigma$ in Section~\ref{sec:GCBPEprocedure}} \label{apx:eq_beta_sigma}
 \begin{proof}
 	 We prove inequality $\|\hat{\theta}_n -\theta \|_2 \leq \beta_\sigma$ in Section~\ref{sec:GCBPEprocedure} using similar techniques in \cite{LinTian2014}.
 	 
 	 Recall that in the \textsf{GCB-PE} algorithm (Algorithm \ref{alg:GCB-PE}), $\hat{\theta}_n$ is the estimate of the environment vector $\theta$ in the $n$-th exploration round. For any $n$,
 	\begin{align*}
 		& \left \| \hat{\theta}_n -\theta \right \|_2 
 		\\
 		= & \left \|  M_\sigma^+ \vec{y}_n - M_\sigma^+ M_\sigma \theta \right \|_2
 		\\
 		= & \left \|  M_\sigma^+ \cdot [M_{x_1} \eta_1; \cdots;  M_{x_{|\sigma|}} \eta_{|\sigma|} ] \right \|_2
 		\\
 		= & \left \| (M_{\sigma}^{\top} M_{\sigma})^{-1} \sum_{i=1}^{|\sigma|} M_{x_i}^{\top} M_{x_i} \eta_i \right \|_2
 		\\
 		\leq & \max_{\eta_1, \cdots, \eta_{|\sigma|}  \in [-1,1]^{d}} \left \|  (M_{\sigma}^{\top} M_{\sigma})^{-1} \sum_{i=1}^{|\sigma|} M_{x_i}^{\top} M_{x_i} \eta_i \right \|_2
 		\\
 		= & \beta_{\sigma}.
 	\end{align*}
 \end{proof}

 \subsection{Proof of Theorem~\ref{thm:GCB_ub}} \label{apx:gcb_ub}
 In order to prove Theorem~\ref{thm:GCB_ub}, we first present the following three lemmas,  Lemma~\ref{lemma:rad}-\ref{lemma:GCB_terminate}.
 
 \begin{lemma} \label{lemma:rad}
	For Algorithm \ref{alg:GCB-PE}, after $n$ exploration rounds,
	$$
	\Pr[\|\theta-\hat{\theta}(n)\|_2 \geq \textup{rad}_n]\leq \frac{\delta}{2n^2}
	$$
\end{lemma}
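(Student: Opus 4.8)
The plan is to recognize $\hat\theta(n)-\theta$ as the empirical average of independent, mean-zero, norm-bounded random vectors and then apply a dimension-free (Hilbert-space) Hoeffding inequality. Concretely, I would set $V_j := \hat\theta_j - \theta = M_\sigma^+ \vec y_j - \theta$ for $j=1,\dots,n$, so that $\hat\theta(n)-\theta = \frac{1}{n}\sum_{j=1}^n V_j$, and the event in question becomes a deviation bound for this average from its mean.

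First I would check the three hypotheses the concentration bound needs. Independence across $j$ holds because $V_j$ is a deterministic function of the noise realized in the $j$-th exploration round, and the noise is independent across rounds. The mean-zero property $\mathbb{E}[V_j]=0$ is exactly $\mathbb{E}[\hat\theta_j]=\theta$, which follows from Assumption~\ref{assumption_global} via $M_\sigma^+ M_\sigma = I_d$. The almost-sure norm bound $\|V_j\|_2 \le \beta_\sigma$ is precisely the inequality $\|\hat\theta_n-\theta\|_2 \le \beta_\sigma$ already established in Appendix~\ref{apx:eq_beta_sigma}.

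Next I would invoke a vector-valued Hoeffding/Azuma inequality for the $\ell_2$-norm: for independent zero-mean $V_1,\dots,V_n \in \mathbb{R}^d$ with $\|V_j\|_2 \le \beta_\sigma$,
\[
\Pr\!\left[\Big\|\sum_{j=1}^n V_j\Big\|_2 \ge u\right] \le 2\exp\!\left(-\frac{u^2}{2 n \beta_\sigma^2}\right).
\]
Taking $u = n\,\textup{rad}_n$ turns the left-hand side into $\Pr[\|\hat\theta(n)-\theta\|_2 \ge \textup{rad}_n]$, and substituting $\textup{rad}_n^2 = 2\beta_\sigma^2 \log(4n^2 e^2/\delta)/n$ makes the exponent equal $\log(4n^2 e^2/\delta)$. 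Hence the right-hand side is $2\exp(-\log(4n^2 e^2/\delta)) = \delta/(2 n^2 e^2) \le \delta/(2 n^2)$, which is the claim. This also clarifies the role of the $e^2$ inside the logarithm in $\textup{rad}_n$: it supplies exactly the slack needed to absorb the two-sided constant $2$.

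The main obstacle is obtaining a concentration inequality that is free of the ambient dimension $d$. Applying a scalar Hoeffding bound coordinate-wise and union-bounding over the $d$ coordinates would inflate the radius by a $\sqrt{d}$-type factor and so would not reproduce the stated $\textup{rad}_n$. I would therefore rely on the Hilbert-space version (for instance Pinelis's inequality for martingales in $2$-smooth spaces), whose only overhead relative to the scalar case is the leading factor $2$; with the hypotheses above verified, the remainder is the direct substitution sketched in the previous paragraph.
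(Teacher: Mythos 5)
Your proof is correct, but it takes a different route from the paper's, which is a one-line appeal to an external result: the paper simply invokes Lemma A.3 of \cite{LinTian2014} (itself a Hayes-type vector-Azuma inequality) in the form
\[
\Pr\bigl[\|\theta-\hat{\theta}(n)\|_2 \geq \gamma\bigr] \;\leq\; 2e^2 \exp\Bigl(-\tfrac{n\gamma^2}{2\beta_\sigma^2}\Bigr),
\]
and substitutes $\gamma=\textup{rad}_n$, which yields exactly $\delta/(2n^2)$. You instead make the argument self-contained: you decompose $\hat{\theta}(n)-\theta$ into i.i.d.\ mean-zero vectors $V_j$ with $\|V_j\|_2 \leq \beta_\sigma$ (all three hypotheses verified correctly, using $M_\sigma^+ M_\sigma = I_d$ from Assumption~\ref{assumption_global} and the per-round bound of Appendix~\ref{apx:eq_beta_sigma}), and then apply Pinelis's dimension-free Hoeffding inequality for Hilbert-space-valued martingales, whose leading constant is $2$ rather than $2e^2$. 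This is a legitimate substitute for the cited lemma, correctly avoids the $\sqrt{d}$ loss of a coordinate-wise union bound, and in fact gives the slightly stronger bound $\delta/(2n^2e^2) \leq \delta/(2n^2)$. The one inaccuracy is your closing interpretation of the $e^2$ inside the logarithm: in your derivation the factor $4$ already absorbs the two-sided constant $2$ (since $2\cdot\delta/(4n^2e^2)=\delta/(2n^2e^2)$), so for you the $e^2$ is pure slack; it is in the paper's route, where the concentration constant is $2e^2$, that the $e^2$ in $\textup{rad}_n$ is precisely what cancels it. This remark is cosmetic and does not affect the validity of your proof.
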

\begin{proof}
	In Lemma A.3 in \cite{LinTian2014}, let $\gamma=\textup{rad}_n$. Then, we have 
	\begin{align*}
		\Pr[\|\theta-\hat{\theta}(n)\|_2 \geq \textup{rad}_n] \leq & 2 e^2 \textup{exp}  \left \{ -\frac{n}{2 \beta_\sigma^2 } \cdot \frac{2 \beta_\sigma^2 \log( \frac{4n^2e^2}{\delta} ) }{n} \right \}
		\\
		= & \frac{\delta}{2n^2}.
	\end{align*}
\end{proof}

Define the following events 
$$\cE_n:=\{ \forall x \in \cX, |\bar{r}(x, \theta) - \bar{r}(x, \hat{\theta}(n))| < L_p \cdot \textup{rad}_n \}, n \geq 1
$$
$$
\cE:=\bigcap \limits_{n=1}^{\infty} \cE_n. 
$$

\begin{lemma} \label{lemma:gcb_event}
It hols that $
\Pr[ \cE ] \geq 1-\delta.
$
\end{lemma}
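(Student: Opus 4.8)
The plan is to reduce the event $\cE$, which controls the reward estimation error \emph{uniformly} over all super arms, to the parameter estimation event already bounded in Lemma~\ref{lemma:rad}, and then close the gap with a single union bound over the exploration rounds $n$.

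First I would fix an exploration round $n$ and establish an event inclusion. Whenever the high-probability event $\{\|\theta - \hat{\theta}(n)\|_2 < \textup{rad}_n\}$ holds, the Lipschitz continuity of the expected reward function (Assumption~\ref{assumption_Lipschitz}) gives, for every $x \in \cX$ simultaneously,
\[
|\bar{r}(x, \theta) - \bar{r}(x, \hat{\theta}(n))| \leq L_p \|\theta - \hat{\theta}(n)\|_2 < L_p \cdot \textup{rad}_n,
\]
which is exactly the defining condition of $\cE_n$. The key observation is that the bound on $\|\theta-\hat{\theta}(n)\|_2$ is uniform in $x$, so the Lipschitz inequality transfers to all super arms at once with the same constant $L_p$; no union bound over the (exponentially many) actions in $\cX$ is required. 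This yields the inclusion $\{\|\theta - \hat{\theta}(n)\|_2 < \textup{rad}_n\} \subseteq \cE_n$, equivalently $\cE_n^c \subseteq \{\|\theta - \hat{\theta}(n)\|_2 \geq \textup{rad}_n\}$, whence Lemma~\ref{lemma:rad} gives
\[
\Pr[\cE_n^c] \leq \Pr[\|\theta - \hat{\theta}(n)\|_2 \geq \textup{rad}_n] \leq \frac{\delta}{2n^2}.
\]

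Then I would take a union bound over all rounds $n \geq 1$:
\[
\Pr[\cE^c] = \Pr\Big[\bigcup_{n=1}^{\infty} \cE_n^c\Big] \leq \sum_{n=1}^{\infty} \Pr[\cE_n^c] \leq \sum_{n=1}^{\infty} \frac{\delta}{2n^2} = \frac{\pi^2}{12}\,\delta < \delta,
\]
using $\sum_{n \geq 1} n^{-2} = \pi^2/6$. Taking complements yields $\Pr[\cE] \geq 1 - \delta$, as claimed. This step is also where the factor $4n^2e^2$ inside $\textup{rad}_n$ pays off: it is precisely what makes the per-round failure probability in Lemma~\ref{lemma:rad} summable in $n$ while keeping the aggregate below $\delta$.

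There is no substantial obstacle here; the argument is a Lipschitz transfer followed by a routine union bound. The only points needing care are bookkeeping: matching the strict inequality in the definition of $\cE_n$ against the non-strict inequality in Lemma~\ref{lemma:rad} when passing to complements, and checking that the constant $\pi^2/12 < 1$ so the total failure probability indeed stays strictly below $\delta$.
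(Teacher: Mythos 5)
Your proposal is correct and follows essentially the same route as the paper's own proof: a Lipschitz transfer from the parameter-estimation error $\|\theta-\hat{\theta}(n)\|_2$ (controlled by Lemma~\ref{lemma:rad}) to the reward-estimation error, uniformly over $x \in \cX$, followed by a union bound over rounds $n$ using $\sum_{n\geq 1} \frac{\delta}{2n^2} = \frac{\pi^2}{12}\delta < \delta$. Your write-up is in fact slightly more careful than the paper's (explicitly noting the event inclusion, the uniformity in $x$ avoiding any union bound over $\cX$, and the strict/non-strict inequality bookkeeping), but there is no substantive difference.
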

\begin{proof}
	From the continuity of the expected reward function, 
	$$
	\bar{r}(x, \theta) - \bar{r}(x, \hat{\theta}(n)) < L_p \cdot \|\theta-\hat{\theta}(n)\|_2 .
	$$
	From Lemma \ref{lemma:rad}, we have that with probability at least $1-\frac{\delta}{2n^2}$,
	$$
	\|\theta-\hat{\theta}(n)\|_2 < \textup{rad}_n .
	$$
	Thus, with probability at least $1-\frac{\delta}{2n^2}$, 
	$$
	\bar{r}(x, \theta) - \bar{r}(x, \hat{\theta}(n)) < L_p \cdot \textup{rad}_n.
	$$
	In other words, 
	$$
	\Pr[ \cE_n ] \geq 1-\frac{\delta}{2n^2} .
	$$
	Thus, we have
	\begin{align*}
	\Pr[\cE] & =  1-\Pr[\bar{\cE}]
	\\
	& \geq 1-\sum \limits_{n=1}^{\infty} \Pr[\bar{\cE}_j]
	\\
	& \geq 1-\sum \limits_{n=1}^{\infty} \frac{\delta}{2n^2}
	\\
	& \geq 1-\delta.
	\end{align*}
\end{proof}

\begin{lemma} \label{lemma:GCB_terminate}
	Suppose that $\cE$ occurs. If $\textup{rad}_n < \frac{\Delta_{\textup{min}}}{4 L_p}$, Algorithm \ref{alg:GCB-PE} will terminate.
\end{lemma}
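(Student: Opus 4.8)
The plan is to work entirely on the event $\cE$ and show that as soon as $\textup{rad}_n < \Delta_{\textup{min}}/(4L_p)$, the stopping test $\bar r(\hat x,\hat\theta(n)) - \bar r(\hat x^-,\hat\theta(n)) > 2L_p\cdot\textup{rad}_n$ in Algorithm~\ref{alg:GCB-PE} is already satisfied, so the algorithm returns at this round. Throughout I would write $r(x):=\bar r(x,\theta)$ and $\hat r_n(x):=\bar r(x,\hat\theta(n))$; by the definition of $\cE$ (hence of every $\cE_n$) we have the uniform two-sided bound $|r(x)-\hat r_n(x)| < L_p\cdot\textup{rad}_n$ for every $x\in\cX$. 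I also recall that, by the definition of $\Delta_{\textup{min}}$ as the smallest gap to a suboptimal arm, $r(x^*)-r(x)\ge \Delta_{\textup{min}}$ for every $x\neq x^*$.

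First I would establish that the empirically best arm coincides with the true best arm, i.e.\ $\hat x = x^*$. For any $x\neq x^*$, combining the two-sided bounds from $\cE$ gives
\begin{align*}
 \hat r_n(x^*) - \hat r_n(x)
 &> \bigl(r(x^*) - L_p\textup{rad}_n\bigr) - \bigl(r(x) + L_p\textup{rad}_n\bigr)\\
 &\ge \Delta_{\textup{min}} - 2L_p\textup{rad}_n > 0,
\end{align*}
where the final strict inequality uses the hypothesis $\textup{rad}_n < \Delta_{\textup{min}}/(4L_p)$. Thus $\hat r_n(x^*)$ strictly exceeds $\hat r_n(x)$ for every competitor, so $\argmax_{x\in\cX}\hat r_n(x)=x^*$ and $\hat x=x^*$. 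In particular the empirical runner-up $\hat x^-=\argmax_{x\neq x^*}\hat r_n(x)$ is a genuinely \emph{suboptimal} arm, whence $r(\hat x^-) \le r(x^*)-\Delta_{\textup{min}}$.

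Next I would lower bound the empirical gap appearing in the stopping test. Using $\hat x=x^*$ together with the same concentration bounds,
\begin{align*}
 \hat r_n(\hat x) - \hat r_n(\hat x^-)
 &> \bigl(r(x^*)-L_p\textup{rad}_n\bigr) - \bigl(r(\hat x^-)+L_p\textup{rad}_n\bigr)\\
 &\ge \Delta_{\textup{min}} - 2L_p\textup{rad}_n.
\end{align*}
It then remains only to note that $\textup{rad}_n < \Delta_{\textup{min}}/(4L_p)$ is exactly equivalent to $\Delta_{\textup{min}} - 2L_p\textup{rad}_n > 2L_p\textup{rad}_n$; chaining this with the display yields $\hat r_n(\hat x)-\hat r_n(\hat x^-) > 2L_p\cdot\textup{rad}_n$, which is precisely the termination condition.

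I do not expect a substantial obstacle: the argument is direct concentration bookkeeping on $\cE$. The only point requiring care is the intermediate claim $\hat x=x^*$, since without it one cannot guarantee that $\hat x^-$ is actually suboptimal and therefore cannot invoke the $\Delta_{\textup{min}}$ gap to upper bound $r(\hat x^-)$. The constant $4$ in the threshold is exactly what this two-stage argument forces: one factor of $2L_p\textup{rad}_n$ is consumed to identify $\hat x=x^*$, and a second factor is needed to clear the stopping margin $2L_p\textup{rad}_n$.
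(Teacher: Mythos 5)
Your proof is correct, and it follows the same basic strategy as the paper's proof (concentration bookkeeping on $\cE$ followed by comparing $\Delta_{\textup{min}} - 2L_p\,\textup{rad}_n$ against the stopping threshold $2L_p\,\textup{rad}_n$). However, your version is actually \emph{more} complete than the paper's. The paper's proof asserts, in one line, that $\bar{r}(\hat{x},\theta) - \bar{r}(\hat{x}^-,\theta) = \Delta_{\textup{min}}$, treating the empirically best and runner-up arms as if they were the true best and second-best arms. As written this is unjustified: a priori $\hat{x}$ could differ from $x^*$ (in which case the left-hand side could even be negative, if $\hat{x}^- = x^*$), and even granting $\hat{x} = x^*$ the correct relation is an inequality $\geq \Delta_{\textup{min}}$, not an equality, since $\hat{x}^-$ need not be the true second-best arm. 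Your intermediate step---showing that under $\cE$ and $\textup{rad}_n < \Delta_{\textup{min}}/(4L_p)$ one has $\bar{r}(x^*,\hat{\theta}(n)) - \bar{r}(x,\hat{\theta}(n)) > \Delta_{\textup{min}} - 2L_p\,\textup{rad}_n > 0$ for every $x \neq x^*$, hence $\hat{x} = x^*$ and consequently $\bar{r}(\hat{x}^-,\theta) \leq \bar{r}(x^*,\theta) - \Delta_{\textup{min}}$---is exactly the argument needed to license the paper's middle line, and your closing observation that the factor $4$ in the threshold is what pays for this two-stage argument (one margin of $2L_p\,\textup{rad}_n$ to pin down $\hat{x} = x^*$, a second to clear the stopping test) is a correct and clarifying account of where the constant comes from.
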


\begin{proof}
	Suppose that $\cE$ occurs. From the definition of $\cE$, we have
	\begin{align*}
		\bar{r}(\hat{x}, \hat{\theta}(n)) - \bar{r}(\hat{x}^-, \hat{\theta}(n)) & > \bar{r}(\hat{x}, \theta) - \bar{r}(\hat{x}^-, \theta) - 2L_p \cdot \textup{rad}_n
		\\
		& = \Delta_{\textup{min}} - 2L_p \cdot \textup{rad}_n
		\\
		& > 2L_p \cdot \textup{rad}_n
	\end{align*}
	Thus, the stop condition holds, and then Algorithm \ref{alg:GCB-PE} will terminate.
\end{proof}

Now we prove Theorem~\ref{thm:GCB_ub}.
 
\begin{proof}
	First, we prove the correctness of Algorithm \ref{alg:GCB-PE}.
	From the stop condition, we have that when Algorithm \ref{alg:GCB-PE} terminates, for all $x \in \cX \setminus \{ \hat{x} \}$,
	\begin{align*}
		\bar{r}(\hat{x}, \hat{\theta}(n)) - \bar{r}(x, \hat{\theta}(n)) > 2L_p \cdot \textup{rad}_n .
	\end{align*}
	Then, conditioning on $\cE$, when Algorithm \ref{alg:GCB-PE} terminates, for all $x \in \cX \setminus \{ \hat{x} \}$,
	\begin{align*}
	\bar{r}(\hat{x}, \theta) & > \bar{r}(\hat{x}, \hat{\theta}(n)) - L_p \cdot \textup{rad}_n
	\\
	& > \bar{r}(x, \hat{\theta}(n)) + L_p \cdot \textup{rad}_n
	\\
	& > \bar{r}(x, \theta),
	\end{align*}
	which complete the proof of correctness.
	
	Next, we prove the sample complexity of  Algorithm \ref{alg:GCB-PE}.
	Let $N$ denote the total number of the exploration rounds. 
	If $N=1$, Theorem \ref{thm:GCB_ub} trivially holds. In the 
	If $N>1$, from Lemma \ref{lemma:GCB_terminate}, we have that after $N-1$ exploration rounds,
	\begin{align*}
		\sqrt{ \frac{2 \beta_\sigma^2 \log( \frac{4(N-1)^2e^2}{\delta} ) }{N-1} } \geq \frac{\Delta_{\textup{min}}}{4 L_p}
		\\
		N \leq  \frac{32 \beta_\sigma^2 L_p^2}{\Delta_{\textup{min}}^2} \log \left( \frac{4 N^2 e^2}{\delta} \right) + 1 
	\end{align*}
	Let $\tilde{H}:=\frac{ \beta_\sigma^2 L_p^2}{\Delta_{\textup{min}}^2} $. In the following, we prove $N \leq 655 \tilde{H} \log (\frac{\tilde{H}}{\delta})$.
	We can write $N=C \tilde{H} \log (\frac{\tilde{H}}{\delta})$ for some $C>0$.
	In order to prove the theorem, it suffices to prove $C \leq 655$.
	Suppose, on the contrary, that $C > 655$. Then, we have
	\begin{align*}
		N  \leq & 32 \tilde{H} \log \left( \frac{4 N^2 e^2}{\delta} \right) + 1
		\\
		 =	&64 \tilde{H} \log \left( \frac{2 e C \tilde{H} \log \frac{\tilde{H}}{\delta} }{\delta} \right) + 1
		\\
		 \leq &	64 \tilde{H} \log(2 e C) + 64 \tilde{H} \log \frac{\tilde{H}}{\delta} + 64 \tilde{H} \log \left( \log \frac{\tilde{H}}{\delta} \right)  \\& + \tilde{H} \log\frac{\tilde{H}}{\delta}
		\\
		 \leq &	64 \tilde{H} \log(2 e C) + 129 \tilde{H} \log \frac{\tilde{H}}{\delta}
		\\
		 < & C \tilde{H} \log \frac{\tilde{H}}{\delta}
		\\
		 = & N,
	\end{align*}
	which makes a contradiction.
	Thus, 
	$$
	N \leq 655 \tilde{H} \log (\frac{\tilde{H}}{\delta}).
	$$
	Since an exploration round contains $|\sigma| \leq n$ actions, the total number of samples
	$$
	T=|\sigma| \cdot N \leq  \frac{655 \beta_\sigma^2 L_p^2}{\Delta_{\textup{min}}^2} \log \left(\frac{ \beta_\sigma^2 L_p^2}{\Delta_{\textup{min}}^2 \delta}\right).
	$$
\end{proof}
 
%\subsection{ Proof of Corollary~\ref{corollary:gcb_linear}}
%\begin{proof}
%	In the linear expected reward case, $\bar{r}(x,\theta)=x^{\top} \theta$. Then, we can write the continuity of the expected reward function as 
%	$$
%	\bar{r}(x, \theta) - \bar{r}(x, \hat{\theta}(n)) < \|x\|_2 \cdot \|\theta-\hat{\theta}(n)\|_2 .
%	$$
%	Since $\|x\|_2 \leq m , \forall x \in \cX$, this theorem follows from Theorem \ref{thm:GCB_ub}  by setting $L_p=m$. 
%\end{proof}

\subsection{ Proof of Lemma~\ref{lemma: alpha}}
\begin{proof}
	Recall that in Algorithm~\ref{alg:compute_dist}, we choose $d$ super arms $\mathcal{X}_{\sigma}=\{x_1, \dots, x_d\}$ from $\mathcal{X}$, such that  $\mathrm{rank}(X)=d$ where $X=(x_1, \dots, x_d)$.
	Then, for any super arm $z \in \mathcal{X}$, $z$ can be written as a linear combination of $x_1,x_2, \dots ,x_d$, i.e., 
	$$
	z=Xw,
	$$
	where $w\in \mathbb{R}^{d}$ is the vector of coefficients. Let $\xi_{\textup{min}}(A)$ denote the smallest eigenvalue of matrix $A$.
	Then, we have	
	\begin{align*}
	\sum_{k=1}^{d}|w_k| \leq & \sqrt{d \left(\sum_{k=1}^{d} w_k^2 \right)} 
	\\
	= & \sqrt{d w^{\top} w}
	\\
	\leq & \sqrt{d w^{\top} X^{\top} X w} \cdot \max_{w'\in \mathbb{R}^{d}} \sqrt{\frac{w'^{\top} w'}{w'^{\top} X^{\top} X w'} }
	\\
	\leq & \sqrt{d w^{\top} X^{\top} X w} \cdot  \sqrt{\frac{1}{\xi_{\textup{min}}(X^{\top} X)} }
	\\
	=  &  \sqrt{\frac{d}{\xi_{\textup{min}}(X^{\top} X)} } \|z\|_2
	\\
	\leq &  \sqrt{\frac{md}{\xi_{\textup{min}}(X^{\top} X)} } 
	\\
	= & \sqrt{\frac{md}{\xi_{\textup{min}}(X X^{\top})} } 
	\\
	= & \alpha.
	\end{align*}
	
	Note that in Algorithm~\ref{alg:compute_dist}, we compute $\lambda^*_{\mathcal{X}_{\sigma}} = \argmin_{\lambda \in \triangle( {\mathcal{X}_{\sigma})}} \max_{x \in {\mathcal{X}_{\sigma}}} x^{\top}M(\lambda)^{-1}x$ by the entropic mirror descent (Algorithm~\ref{alg:descent} in Appendix~\ref{apx:G-opt}). $\lambda^*_{\mathcal{X}_{\sigma}}$ is the solution to Proposition \ref{proposition:equivalence} and satisfies $ \max_{x \in {\mathcal{X}_{\sigma}}} x^{\top}M(\lambda^*_{\mathcal{X}_{\sigma}})^{-1}x= d$. Thus, $ \max_{x \in {\mathcal{X}_{\sigma}}} \|x\|_{M(\lambda^*_{\mathcal{X}_{\sigma}})^{-1}} = \sqrt{d}$.
	Thus, for any $z \in \cX$ we have
	\begin{align*}
	\|z\|_{M(\lambda^*_{\mathcal{X}_{\sigma}})^{-1}}= & \|w_1 x_1 + \dots + w_d x_d\|_{M(\lambda^*_{\mathcal{X}_{\sigma}})^{-1}}
	\\
	\leq & |w_1| \cdot \|x_1\|_{M(\lambda^*_{\mathcal{X}_{\sigma}})^{-1}} + \dots \\& + |w_d| \cdot \|x_d\|_{M(\lambda^*_{\mathcal{X}_{\sigma}})^{-1}}
	\\
	\leq & |w_1| \cdot \sqrt{d} + \dots  + |w_d| \cdot \sqrt{d} 
	\\
	= &  (|w_1|   + \dots + |w_d| ) \sqrt{d}
	\\
	\leq & \alpha \sqrt{d}.
	\end{align*}

\end{proof}

% \begin{theorem}
% 	With probability at least $1-\delta$, the EffALBA algorithm (Algorithm \ref{alg:PolyALBA}) will return the best super arm $x^*$ with sample complexity 
% 	\begin{align*}
% 		O\Bigg(\sum_{i=2}^{\left \lfloor  \frac{d}{2} \right \rfloor} \frac{c_0}{\Delta_i^2} (\ln \delta^{-1} + \ln |\mathcal{X}|+\ln\ln \Delta_i^{-1}) + \frac{c_0 (\alpha \sqrt{m} d + \alpha^2 d)}{\Delta_{d+1}^2} \left(\ln \delta^{-1} + \ln |\mathcal{X}|+\ln\ln \Delta_{d+1}^{-1} \right) \Bigg)
% 	\end{align*}
% \end{theorem}

\subsection{Proof of Theorem~\ref{thm:CPE-BL}}\label{apx:theorem2}
\subsubsection{Technical lemmas for Theorem~\ref{thm:CPE-BL}}
% \subsubsection{ Proof of Lemma~\ref{lemma:rad_ell_varepsilon}}
 \begin{lemma} \label{lemma:rad_ell_varepsilon}
	When $n \geq c_0 \ell(\varepsilon)  \ln (\frac{5|\mathcal{X}|}{\delta_r}) $ where $\varepsilon \leq 3$, we have 
	$$
	\Pr[ |x^{\top} \theta - x^{\top} \hat{\theta} | \leq  \varepsilon , \forall x \in \cX] \geq 1-\delta .
	$$
\end{lemma}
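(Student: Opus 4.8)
The plan is to follow the concentration analysis of the randomized least-squares estimator of Tao et al.~\shortcite{Tao2018}, adapted to the restricted design $\lambda=\lambda^*_{\mathcal{X}_{\sigma}}$. First I would record that the estimator is unbiased: writing the reward of a sampled action as $r_i=y_i^\top\theta+y_i^\top\eta_i$ with $\eta_i$ conditionally mean-zero, one gets $\mathbb{E}[b]=nM(\lambda)\theta=A\theta$, hence $\mathbb{E}[\hat\theta]=\theta$. Fixing a target $x\in\cX$ and setting $v=M(\lambda)^{-1}x$, the deviation becomes an average of $n$ i.i.d. mean-zero terms,
\[
x^\top\hat\theta-x^\top\theta=\frac1n\sum_{i=1}^n\big(r_i\,(v^\top y_i)-x^\top\theta\big),
\]
so the whole lemma reduces to a scalar concentration inequality for this average, followed by a union bound over $x\in\cX$ (which supplies the $\ln|\cX|$ factor).

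The crux is that the two ellipsoidal norms entering $v^\top y_i=x^\top M(\lambda)^{-1}y_i$ obey \emph{different} bounds. Because the samples $y_i$ are drawn from $\mathrm{supp}(\lambda)\subseteq\mathcal{X}_{\sigma}$ and $\lambda=\lambda^*_{\mathcal{X}_{\sigma}}$ is the G-optimal design on $\mathcal{X}_{\sigma}$, the equivalence theorem (Proposition~\ref{proposition:equivalence}) gives $\|y_i\|_{M(\lambda)^{-1}}\le\sqrt d$, whereas the target only satisfies $\|x\|_{M(\lambda)^{-1}}\le\alpha\sqrt d$ (Lemma~\ref{lemma: alpha}). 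Cauchy--Schwarz in the $M(\lambda)^{-1}$ inner product then yields the pointwise bound $|v^\top y_i|\le\alpha d$, while the second moment $\mathbb{E}[(v^\top y_i)^2]=\|x\|_{M(\lambda)^{-1}}^2\le\alpha^2 d$ is exact. Together with the reward bounds $|y_i^\top\theta|\le\sqrt m\,L$ and $\mathbb{E}[(y_i^\top\eta_i)^2\mid y_i]\le\|y_i\|_1\le m$ (using boundedness and independence of the noise coordinates), these control the per-sample range $b$ and second moment $V$ of the summands.

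Plugging $V$ and $b$ into Bernstein's inequality for i.i.d. bounded summands gives $\Pr[|x^\top(\hat\theta-\theta)|>\varepsilon]\le 2\exp\!\big(-\tfrac{n\varepsilon^2/2}{V+b\varepsilon/3}\big)$, and it remains to verify that $n\ge c_0\,\ell(\varepsilon)\ln(5|\cX|/\delta_r)$ drives the right-hand side below $\delta/|\cX|$; the factor $c_0=\max\{4L^2,3\}$ is precisely what absorbs the $L^2$ arising from $|y_i^\top\theta|\le\sqrt m\,L$ together with the numerical constants, and the hypothesis $\varepsilon\le 3$ is used to tame the range contribution. I expect the main obstacle to be the constant bookkeeping: one must organize the split between the design-variance, the reward-noise variance, and the range so that, after pulling out $c_0$, the requirement on $n$ collapses to exactly the numerator $2m+2\alpha\sqrt m\,d+4\alpha^2 d+\alpha\varepsilon d$ of $\ell(\varepsilon)$; extra care is also needed to justify the conditional reward-noise second-moment bound and to make the final union bound absorb the factor $5$ into the logarithm.
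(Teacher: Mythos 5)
Your high-level skeleton (unbiasedness, reduction to a scalar i.i.d.\ average, the two norm bounds, union bound) is sound, and you correctly spotted the two geometric facts that drive everything: $\|y_i\|_{M(\lambda)^{-1}}\le\sqrt d$ for arms in the support via the Kiefer--Wolfowitz equivalence, and $\|x\|_{M(\lambda)^{-1}}\le\alpha\sqrt d$ for all targets via Lemma~\ref{lemma: alpha}. But your route diverges from the paper's in a way that opens a genuine gap. The paper does \emph{not} re-derive the concentration: it quotes Tao et al.'s randomized-estimator bound (Proposition~\ref{proposi:Tao_Lemma10}), substitutes $\|x\|_2\le\sqrt m$ and $\|x\|_{M(\lambda)^{-1}}\le\alpha\sqrt d$, takes a union bound, and then verifies algebraically that the choice $\ell(\varepsilon)=\frac{2m+2\alpha\sqrt m\,d+4\alpha^2 d+\alpha\varepsilon d}{\varepsilon^2}$ makes the resulting radius at most $\varepsilon$ (this last step, which your plan omits, is exactly the check that $2\sqrt{d/\ell}\,\alpha^2 d\le\alpha\varepsilon d$ given $4\alpha^2 d$ sits in the numerator). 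What you propose instead is to re-prove that proposition by a \emph{single} Bernstein application to the summands $Z_i=r_i(v^\top y_i)-x^\top\theta$, and with the bounds you list this provably falls short by polynomial factors, not constants. Concretely, your variance bound is forced through the product $\mathbb{E}[r_i^2(v^\top y_i)^2]\lesssim\bigl(mL^2+m\bigr)\,\|x\|^2_{M(\lambda)^{-1}}$, i.e.\ up to order $c_0\,m\,\alpha^2 d$, and your range is of order $(\sqrt m\,L+m)\alpha d$; Bernstein then demands $n\varepsilon^2\gtrsim c_0\,m\,\alpha^2 d+m\,\alpha d\,\varepsilon$ (times the log), which exceeds the lemma's budget $c_0\bigl(2m+2\alpha\sqrt m\,d+4\alpha^2 d+\alpha\varepsilon d\bigr)$ by roughly a factor of $m$ in the dominant terms. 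The constant $c_0=\max\{4L^2,3\}$ can absorb $L^2$ but it cannot absorb $m$ (or $\alpha\sqrt m$). The tell-tale cross term $2\alpha\sqrt m\,d$, which is $2\sqrt d\,\|x\|_2\|x\|_{M(\lambda)^{-1}}$ after substitution, simply cannot arise from a crude product bound; in Tao et al.'s proof it comes from splitting $x^\top(\hat\theta-\theta)$ into a design-fluctuation part $\frac1n\sum_i\bigl((y_i^\top\theta)(v^\top y_i)-x^\top\theta\bigr)$ and a noise part $\frac1n\sum_i(y_i^\top\eta_i)(v^\top y_i)$, handling the noise part conditionally on the draws so that only the \emph{second moment} $\sum_i(v^\top y_i)^2\approx n\|x\|^2_{M(\lambda)^{-1}}$ (and never the pointwise product of worst-case reward magnitude with worst-case design variance) enters. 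So the ``constant bookkeeping'' you defer to the end is not bookkeeping; it is where the substance of the proof lives, and the single-shot Bernstein plan cannot close it.

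A secondary, independent problem: your bound $\mathbb{E}[(y_i^\top\eta_i)^2\mid y_i]\le\|y_i\|_1\le m$ invokes independence of the noise \emph{coordinates}, but the model only assumes $\eta_t$ is zero-mean, bounded in $[-1,1]^d$, and independent \emph{across time steps}. Without coordinate independence (or at least uncorrelatedness) the best generic bound is $\mathbb{E}[(y_i^\top\eta_i)^2\mid y_i]\le\|y_i\|_1^2\le m^2$, which makes the mismatch above even worse. If you want a self-contained proof, the honest path is to reproduce the two-part decomposition sketched above (effectively re-proving Proposition~\ref{proposi:Tao_Lemma10} in this setting) and then still carry out the paper's closing algebra with $\ell(\varepsilon)$; citing the proposition, as the paper does, is the short way around.
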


\begin{proof}
We introduce the high probability bound for the estimator $\hat{\theta}$ as follows.
 \begin{proposition}[Lemma 10 ~\citet{Tao2018}]\label{proposi:Tao_Lemma10}
% \begin{proposition}[Lemma 7 in Tao et al.~\cite{Tao2018}]

Let $c_0=\max\{4L^2, 3\}$.
Let $n \geq \ell \ln(5/8)$ where $\ell \geq d$. For any fixed $x \in \cX$, with probability at least $1-\delta$, we have
\begin{align*}
	& \qquad |x^{\top}(\theta-\hat{\theta})| \leq  
	\\  & \sqrt{ \frac{ 2\|x\|_2^2 \!+\! 2\sqrt{d}\|x\|_2\|x\|_{M(\lambda)^{-1}} \!\!+\! (4 \!+\! 2\sqrt{d/\ell}) \|x\|^2_{M(\lambda)^{-1}} }{\ell}  }.
\end{align*}
    
\end{proposition}

	Since $\|x\|_2 \leq \sqrt{m}$ and $\|x\|_{M(\lambda)^{-1}} \leq \alpha \sqrt{d}$ from Lemma~\ref{lemma: alpha}, applying Proposition~\ref{proposi:Tao_Lemma10} for every super arm in $\cX$ and via a union bound, we have that 
	 when $n \geq c_0 \ell  \ln (\frac{5|\mathcal{X}|}{\delta_r})$ where $\ell \geq d$, 
	 \begin{align*}
	 	& \Pr \Bigg[ |x^{\top} \theta - x^{\top} \hat{\theta} | \leq \\ &
	 	\sqrt{ \frac{2m+ 2\alpha\sqrt{m} d + (4+2 \sqrt{d/\ell})\alpha^2d }{\ell} }
	 	, \forall x \in \cX \Bigg] \geq 1-\delta .
	 \end{align*}

	 Setting $\ell$ as $\ell(\varepsilon) := \frac{2m + 2 \alpha \sqrt{m} d + 4 \alpha^2 d + \alpha \varepsilon d}{\varepsilon^2}$, we have that with probability at least $1-\delta$, $\forall x \in \cX$,
	 \begin{align*}
	 	|x^{\top} \theta - x^{\top} \hat{\theta} |  \leq  &
	 	\sqrt{ \frac{2m+ 2\alpha\sqrt{m} d + (4+2 \sqrt{d/\ell})\alpha^2d }{\ell} }
	 	\\
	 	 = & \Bigg( \frac{2m+ 2\alpha\sqrt{m} d + 4 \alpha^2 d}{2m + 2 \alpha \sqrt{m} d + 4 \alpha^2 d + \alpha \varepsilon d} \varepsilon^2 \\&+ \frac {2\alpha^2 d^{\frac{3}{2}} }{(2m + 2 \alpha \sqrt{m} d + 4 \alpha^2 d + \alpha \varepsilon d)^{\frac{3}{2}}} \varepsilon^3 \Bigg)^{\frac{1}{2}}
	 	\\
	 	 \leq & \varepsilon  \Bigg( 1- \frac{\alpha \varepsilon d}{2m + 2 \alpha \sqrt{m} d + 4 \alpha^2 d + \alpha \varepsilon d}  \\&+   \frac{2 \alpha \sqrt{d} }{ \sqrt{2m + 2 \alpha \sqrt{m} d + 4 \alpha^2 d + \alpha \varepsilon d} }  \cdot \\& \qquad \frac {\alpha \varepsilon d }{(2m + 2 \alpha \sqrt{m} d +  4 \alpha^2 d + \alpha \varepsilon d) } \Bigg)^{\frac{1}{2}}
	 	\\
	 	 \leq & \varepsilon,
	 \end{align*}
	 which completes the proof.
\end{proof}

Next, we show the sample complexity bound for the epoch $q=0$.
\begin{lemma} \label{lemma:samples_epoch_0}
	With probability at least $1-\delta_0$, the first epoch $q=0$ in Algorithm \ref{alg:PolyALBA} satisfies the following properties: (i)  epoch $q=0$ ends with $x^* \in S_{1}$; and (ii) the sample complexity is bounded by 
	$$
	O \left(\frac{c_0 (\alpha \sqrt{m} d + \alpha^2 d)}{\Delta_{d+1}^2} \left(\ln \delta^{-1} + \ln |\mathcal{X}|+\ln\ln \Delta_{d+1}^{-1} \right) \right) .
	$$
\end{lemma}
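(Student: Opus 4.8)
The plan is to condition on a single high-probability event on which the estimates in every round of the preparation epoch ($q=0$) are accurate, and then establish both claims deterministically on that event. For each round $r$ define $\mathcal{G}_r := \{\,|x^\top\theta - x^\top\hat\theta_r| \le \varepsilon_r/2 \text{ for all } x\in\cX\,\}$. Since round $r$ draws $n_r = c_0\,\ell(\varepsilon_r/2)\,\ln(5|\cX|/\delta_r)$ samples, Lemma~\ref{lemma:rad_ell_varepsilon} (applied with accuracy $\varepsilon_r/2$ and confidence $\delta_r$) gives $\Pr[\mathcal{G}_r]\ge 1-\delta_r$. Because $\delta_r = \frac{6}{\pi^2}\frac{\delta_0}{r^2}$ and $\sum_{r\ge1}\frac{6}{\pi^2 r^2}=1$, a union bound yields $\Pr[\bigcap_{r\ge1}\mathcal{G}_r]\ge 1-\delta_0$. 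I establish everything on this event $\mathcal{G}:=\bigcap_r \mathcal{G}_r$; note that on $\mathcal{G}$ every empirical reward gap is within $\varepsilon_r$ of the corresponding true gap.

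For (i), suppose the epoch breaks at round $R$, so $\hat x_1^\top\hat\theta_R - \hat x_{d+1}^\top\hat\theta_R > \varepsilon_R$. If $x^*\notin B_1$, then $x^*$ is empirically ranked $d{+}1$ or lower, so $x^{*\top}\hat\theta_R \le \hat x_{d+1}^\top\hat\theta_R$; combining with $\mathcal{G}_R$ gives $\hat x_1^\top\theta - x^{*\top}\theta \ge (\hat x_1^\top\hat\theta_R-\hat x_{d+1}^\top\hat\theta_R)-\varepsilon_R>0$, contradicting the optimality of $x^*$. Hence $x^*\in B_1$. Moreover, optimality yields $x^{*\top}\hat\theta_R \ge x^{*\top}\theta-\varepsilon_R/2 \ge \hat x_1^\top\theta - \varepsilon_R/2 \ge \hat x_1^\top\hat\theta_R - \varepsilon_R$, so $x^*$ survives the elimination defining $S_1$, i.e.\ $x^*\in S_1$.

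For (ii) I first bound the number of rounds. Among the $d{+}1$ empirically best actions at round $r$, pigeonhole forces at least one, say $\hat x_j$, to lie outside the true top-$d$ set, so $\hat x_j^\top\theta \le x^{*\top}\theta-\Delta_{d+1}$; since $\hat x_{d+1}$ is the empirically weakest of these $d{+}1$ actions, on $\mathcal{G}$ this gives $\hat x_1^\top\hat\theta_r-\hat x_{d+1}^\top\hat\theta_r \ge \Delta_{d+1}-\varepsilon_r$. Thus the break is guaranteed once $\varepsilon_r < \Delta_{d+1}/2$, so the epoch ends at some round $R \le R^*:=\lceil\log_2(2/\Delta_{d+1})\rceil$, at which $\varepsilon_{R^*} > \Delta_{d+1}/4$. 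The total sample count is $\sum_{r=1}^{R}n_r$. Using $\ell(\varepsilon_r/2)=O\big((\alpha\sqrt m\,d+\alpha^2 d)/\varepsilon_r^2\big)$ (here $m\le\alpha\sqrt m\,d$ since $\alpha\ge1$ and $\sqrt m\le d$, which also absorbs the lower-order $\alpha d/\varepsilon_r$ term), bounding every logarithmic factor by its value at $r=R^*$, namely $O(\ln|\cX|+\ln\delta^{-1}+\ln\ln\Delta_{d+1}^{-1})$, and summing the geometric series $\sum_{r\le R^*}\varepsilon_r^{-2}=\sum_{r\le R^*} 4^r=O(4^{R^*})=O(\Delta_{d+1}^{-2})$, yields exactly the stated bound.

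The main obstacle I anticipate is the correctness argument of part (i) together with the pigeonhole step in (ii): one must select the right pair of actions ($\hat x_1$ against the empirical $(d{+}1)$-th best, and $x^*$ against the empirically weakest top action lying outside the true top-$d$ set) so that the one-sided confidence width $\varepsilon_r/2$ converts the empirical break gap into a genuine gap of order $\Delta_{d+1}$. Keeping the factors of $\varepsilon_r$ consistent through these chains of inequalities is the delicate bookkeeping; once the round count $R^*$ is pinned down, the final summation is routine.
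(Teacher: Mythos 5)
Your proof is correct and takes essentially the same route as the paper's: the same high-probability event built from Lemma~\ref{lemma:rad_ell_varepsilon} with a $\sum_r \delta_r \le \delta_0$ union bound, the same inequality chains showing $x^*$ lands in $B_1$ and survives the elimination defining $S_1$, and the same termination criterion $\varepsilon_r < \Delta_{d+1}/2$ driving the complexity bound. Your explicit pigeonhole step (at least one empirical top-$(d{+}1)$ action lies outside the true top-$d$) and the explicit geometric-series summation over rounds merely spell out details the paper leaves implicit, since it charges the whole epoch to the final round $r^*$.
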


% \subsection{Proof of Lemma~\ref{lemma:samples_epoch_0}}

\begin{proof}
	For round $r$ of the first epoch, define event $\cF_r:=\{  |x^{\top} \theta - x^{\top} \hat{\theta}_r | \leq  \frac{\varepsilon}{2} , \forall x \in \cX \}$. Applying Lemma \ref{lemma:rad_ell_varepsilon}, we have 
	$
	\Pr [ \cF_r ] \leq 1- \delta_r
	$.
	Define event $\cF:= \bigcap_{r=1}^{\infty} \cF_r $. By a union bound, we have 
	$
	\Pr [ \cF ] \geq 1- \sum_{r=1}^{\infty} \delta_r = 1- \sum_{r=1}^{\infty} \frac{6}{\pi^2} \frac{\delta_q}{r^2} \geq 1-\delta_q
	$.
	We condition the remaining proof on event $\cF$.
	
	(i) First, we show that the first epoch $q=0$ will end with  $x^* \in S_{1}$. Let $x_1, x_2, \dots$ denote the super arms ranked by $x^{\top} \theta$ for $x \in \cX$ (i.e., $x_1^{\top}\theta \geq x_2^{\top}\theta, \ldots$), and we use $x^*$ and $x_1$ interchangeably.
	
	For any round $r \geq 1$, $x_1^{\top} \hat{\theta}_r \geq  x_1^{\top} \theta - \frac{\varepsilon_r}{2} \geq  \hat{x}_1^{\top} \theta - \frac{\varepsilon_r}{2} \geq \hat{x}_1^{\top} \hat{\theta}_r - \varepsilon_r$. Rearranging the terms, we have $\hat{x}_1^{\top} \hat{\theta}_r-x_1^{\top} \hat{\theta}_r \leq \varepsilon_r$, which implies that $x_1$ will never be discarded. Thus, when the first epoch $q=0$ ends, $x_1 \in S_{1}$.
	
	Let $r^*$ be the smallest round such that $\varepsilon_{r^*} < \frac{\Delta_{d+1}}{2} $. In round $r^*$, for $x_{i}$ s.t.  $i \geq d+1$,
	$
		x_1^{\top} \hat{\theta}_{r^*} - x_{i}^{\top} \hat{\theta}_{r^*} \geq (x_1^{\top} \theta - \frac{\varepsilon_{r^*}}{2})-( x_{i}^{\top} \theta + \frac{\varepsilon_{r^*}}{2}) \geq \Delta_i - \varepsilon_{r^*} \geq \Delta_{d+1} - \varepsilon_{r^*} >\varepsilon_{r^*}
	$. Then, the first epoch $q=0$ will end.
	
	(ii) Since the first epoch $q=0$ will end  in (or before) round $r^*$, which is the smallest round such that $\varepsilon_{r^*} < \frac{\Delta_{d+1}}{2} $, then the sample complexity of the first epoch $q=0$ is bounded by 
	\begin{align*}
		& O \left (\frac{ c_0 (\alpha \sqrt{m} d +  \alpha^2 d ) }{\varepsilon_{r^*}^2} \ln \left(\frac{|\mathcal{X}|}{\delta_{r^*}} \right) \right ) 
		\\
		= &
		O \left(\frac{c_0 (\alpha \sqrt{m} d + \alpha^2 d)}{\Delta_{d+1}^2} \left(\ln \delta^{-1} + \ln |\mathcal{X}|+\ln\ln \Delta_{d+1}^{-1} \right) \right) .
	\end{align*}
\end{proof}

\subsubsection{Proof of Theorem~\ref{thm:CPE-BL}}

\begin{proof}
	Define $q^*=\left \lfloor  \log_2d \right \rfloor$. 
	For epoch $q=0$, applying Lemma \ref{lemma:samples_epoch_0}, the sample complexity is bounded by 
	$$O \left(\frac{c_0 (\alpha \sqrt{m} d + \alpha^2 d)}{\Delta_{d+1}^2} \left(\ln \delta^{-1} + \ln |\mathcal{X}|+\ln\ln \Delta_{d+1}^{-1} \right) \right) .
	$$  
	For epoch $q\geq 1$, the \textsf{PolyALBA} algorithm directly calls
	subroutine \textsf{ALBA}.
	%subroutine $\textsf{ElimTil}_p(S,\delta)$.
	Applying Lemma 17 in \cite{Tao2018}, we can bound the sample complexity for epoch $q\geq 1$ by
	\begin{align*}
	& \sum_{q=1}^{q^*} O \left( \frac{c_0 \left \lfloor  \frac{d}{2^{q-1}} \right \rfloor}{\Delta_{\left \lfloor  \frac{d}{2^q} \right \rfloor + 1}^2} \left(\ln \delta^{-1} + \ln |\mathcal{X}|+\ln\ln \Delta_{\left \lfloor  \frac{d}{2^q} \right \rfloor + 1}^{-1} \right)\right)
	\\
	 = & \sum_{q=1}^{q^*} O \left( \frac{c_0 \left \lfloor  \frac{d}{2^{q}} \right \rfloor}{\Delta_{\left \lfloor  \frac{d}{2^q} \right \rfloor + 1}^2} \left(\ln \delta^{-1} + \ln |\mathcal{X}|+\ln\ln \Delta_{\left \lfloor  \frac{d}{2^q} \right \rfloor + 1}^{-1} \right)\right)	
	 \\
	 = & \sum_{q=1}^{q^*} O \Bigg( \frac{c_0 (\left \lfloor  \frac{d}{2^{q}} \right \rfloor - \left \lfloor  \frac{d}{2^{q+1}} \right \rfloor ) }{\Delta_{\left \lfloor  \frac{d}{2^q} \right \rfloor + 1}^2} \cdot\\& \qquad\qquad \left(\ln \delta^{-1} + \ln |\mathcal{X}|+\ln\ln \Delta_{\left \lfloor  \frac{d}{2^q} \right \rfloor + 1}^{-1} \right)\Bigg)	
	 \\
	 = & \sum_{q=1}^{q^*-1} \sum_{\left \lfloor  \frac{d}{2^{q+1}} \right \rfloor+ 1 }^{\left \lfloor  \frac{d}{2^{q}} \right \rfloor } O \left(\frac{c_0  }{\Delta_i^2} \left(\ln \delta^{-1} + \ln |\mathcal{X}|+\ln\ln \Delta_{i}^{-1} \right)\right)	
	 \\
	 = & O \left( \sum_{i=2}^{\left \lfloor  \frac{d}{2} \right \rfloor} \frac{c_0}{\Delta_i^2} \left(\ln \delta^{-1} + \ln |\mathcal{X}|+\ln\ln \Delta_i^{-1}\right) \right ).
	\end{align*}

	Summing the sample complexity for epoch $q=0$ and $q \geq 1$, we obtain the theorem.
\end{proof}

\section{Experiments on the Top-$k$ Instances} \label{apx:top_k_experiments}

\begin{figure}[t]
	\centering
	\subfigure[The number of samples and running time with $(d,k)=(8,3)$. We vary the minimum gap $\Delta_{\min}$ from 0.1 to 1.0.
			Each point is an average over 10 realizations.]{\label{fig:minimal_gap}
		\includegraphics[width=0.47\columnwidth]{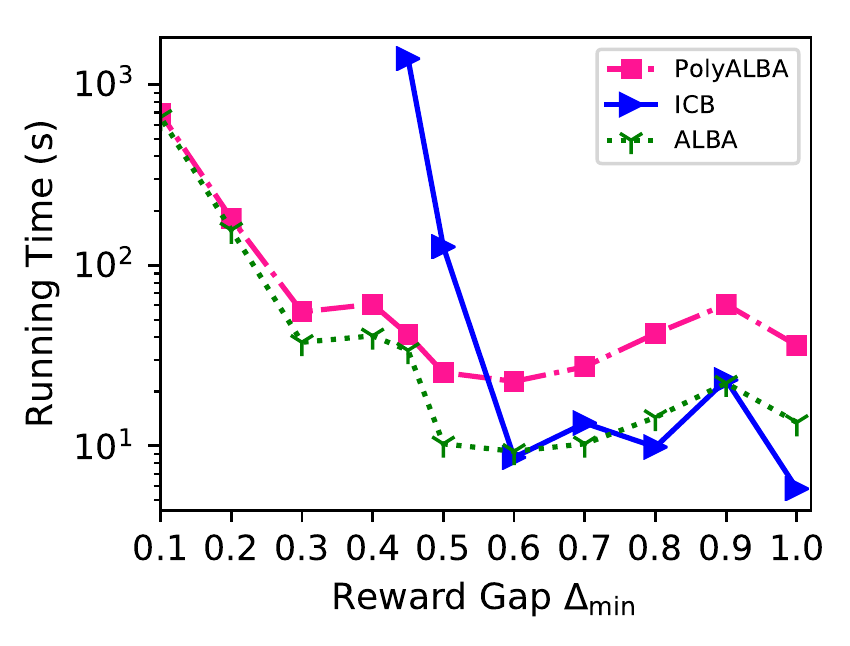}
		\includegraphics[width=0.47\columnwidth]{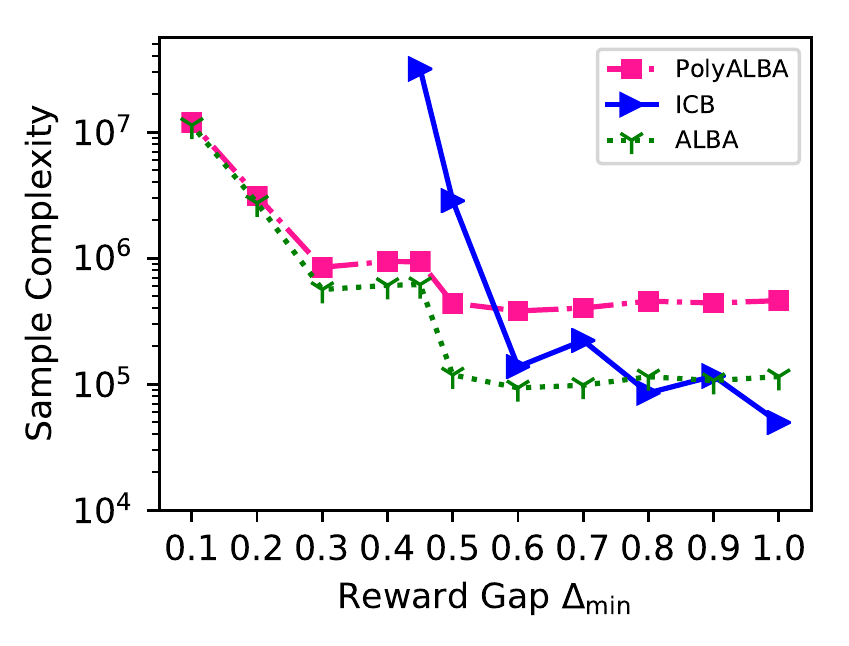}

	}
	\quad
	\subfigure[The number of samples and running time with $k=3$, $d=8, 10, 12, 14, 16$, and $\Delta_{\min}=0.5$.
			Each point is an average over 10 realizations.]{\label{fig:scale}
		\includegraphics[width=0.47\columnwidth]{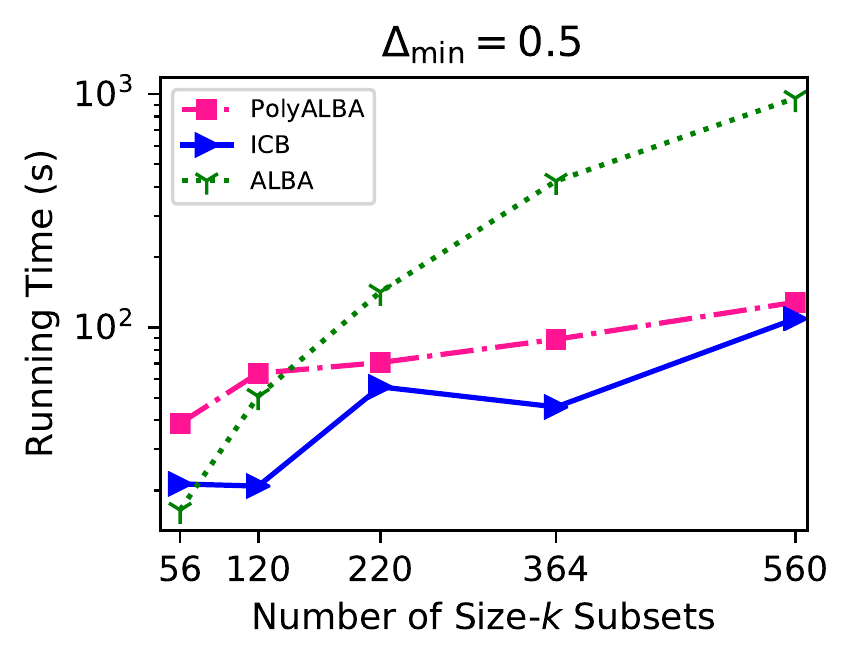}
		\includegraphics[width=0.47\columnwidth]{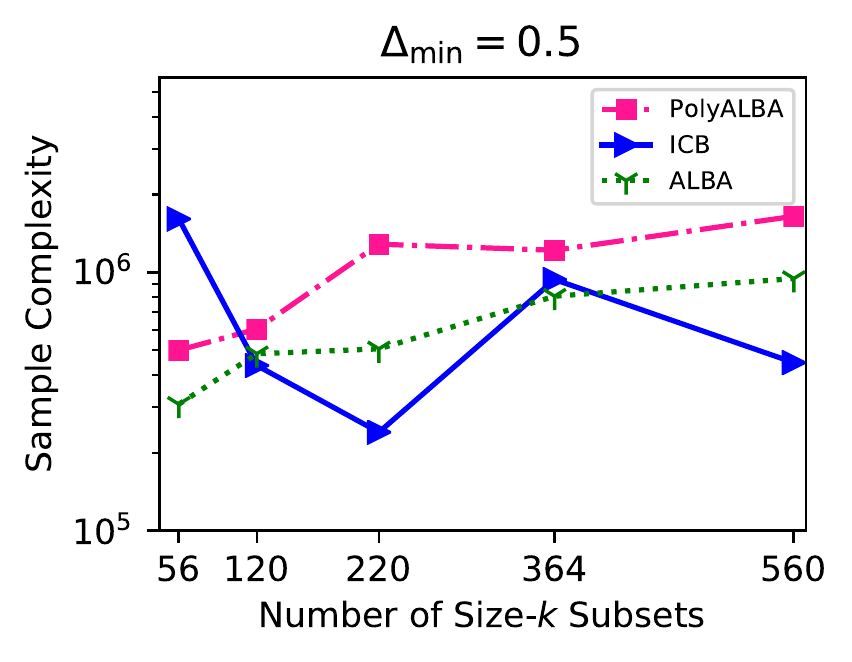}
	}
	\caption{Experimental results of running time and sample complexity  for full-bandit top-$k$ instances.}
	\label{fig:topk_experiments}
\end{figure}

%\paragraph{The dependence on the minimal gap.}
The main purpose of the experiments in this section is to see the dependence of the performance on the minimum gap $\Delta_{\min}$.
We empirically demonstrate that \textsf{PolyALBA} is robust across different $\Delta_{\min}$ settings and runs much faster than existing BAI-LB algorithms
as reported in Figure~\ref{fig:topk_experiments}.

\paragraph{Experimental settings.}
As a polynomial-time baseline algorithm, we implement \textsf{ICB}~\cite{Kuroki+19}, whose sample complexity heavily depends on $\Delta_{\min}^{-2}$.
We evaluate \textsf{PolyALBA}, \textsf{ALBA}, and \textsf{ICB} 
for the top-$k$ case of CPE-BL  on Intel Xeon E5-2640 v3 CPU at 2.60GHz with 132GB RAM.
We set the expected rewards
for the top-$k$ base arms uniformly at random from $[0.5 ,1.0]$.
Let $\theta_{{\min}\text{-}k}$
be the minimum expected reward
in the top-$k$ base arms.
We set the expected reward of
the top $(k+1)$-th base arm to
$\theta_{{\min}\text{-}k}-\Delta_{\min}$
for the predetermined parameter $\Delta_{\min} \in [0.1, 1.0]$.
Then, 
we generate the expected rewards of
the rest of base arms by uniform samples from $[-1.0, \theta_{{\min}\text{-}k}-\Delta_{\min}]$
so that expected rewards of the best super-arm is
larger than those of the rest of super arms by at least $\Delta_{\min}$.
We set the additive noise distribution ${\cal N}(0,1)$.
In all instances we set $\delta=0.05$.
In order to perform the exponential-time algorithms \textsf{ALBA} in reasonable time, we run the experiments with $d=8$, $k=3$, and thus $|\cX|=56$.
Note that since \textsf{RAGE} is already prohibitive in these instances due to its memory error, we exclude it here.
The result is shown in Figure~\ref{fig:minimal_gap}.
In the second experiment,
we evaluate \textsf{PolyALBA}, \textsf{ALBA}, and \textsf{ICB} on the synthetic instances with varying $|\cX|$ and fixed $\Delta_{\min}$.
To perform \textsf{ICB} with a reasonable sample complexity, we set $\Delta_{\min}=0.5$ in all instances.
We vary the number of base arms $d=[8, 10, 12, 14, 16]$ while $k=3$ is fixed. Thus, $|\cX| \in [56, 120, 220, 364, 560]$ in this experiment.
The result is shown in Figure~\ref{fig:scale}.

\paragraph{Results.}
As can be seen in Figure~\ref{fig:minimal_gap},
\textsf{PolyALBA} performs well in all instances, while a polynomial-time baseline \textsf{ICB} is sensitive to the value of $\Delta_{\min}$, which matches our theoretical analysis.
Indeed, \textsf{ICB} cannot stop even after several days because it requires at least more than $10^9$ samples when $\Delta_{\min}=0.4$.
On the other hand,  \textsf{PolyALBA} is still competitive with the exponential-time baseline \textsf{ALBA}.
Figure~\ref{fig:scale} demonstrates that
\textsf{ALBA} is too slow in larger sized instances;
we report that \textsf{ALBA} cannot work in the large instance with $d=18$ and $|\cX|=816$ due to its memory issue in our environment.
From the results,
\textsf{PolyALBA} is the only practical algorithm that is efficient in terms of both time complexity and sample complexity.
We validate that the sample  complexity of \textsf{PolyALBA} is robust against the minimum gap, which well suits the real-world applications.

\end{document}